\documentclass[dvipsnames]{article}
\pdfoutput=1

% if you need to pass options to natbib, use, e.g.:
\PassOptionsToPackage{numbers,sort}{natbib}
% before loading neurips_2022

% ready for submission
\usepackage[dvipdfmx]{graphicx}
\usepackage{bmpsize}
\usepackage[final]{neurips_2023}

% to compile a preprint version, e.g., for submission to arXiv, add add the
% [preprint] option:
%     \usepackage[preprint]{neurips_2022}

% to compile a camera-ready version, add the [final] option, e.g.:
%     \usepackage[final]{neurips_2022}

% to avoid loading the natbib package, add option nonatbib:
%    \usepackage[nonatbib]{neurips_2022}

\usepackage[utf8]{inputenc} % allow utf-8 input
\usepackage[T1]{fontenc}    % use 8-bit T1 fonts
\usepackage{url}            % simple URL typesetting
\usepackage{booktabs}       % professional-quality tables
\usepackage{amsfonts}       % blackboard math symbols
\usepackage{amssymb}
\usepackage{nicefrac}       % compact symbols for 1/2, etc.
\usepackage{microtype}      % microtypography
\usepackage[dvipsnames]{xcolor}        % colors
\usepackage{multicol, multirow}
\usepackage{tcolorbox, bbm}
\usepackage{abraces}
\usepackage{wrapfig}
\usepackage{tikz}
\usepackage{sidecap}
\usepackage{esvect}
\usepackage{algorithm}
\usepackage{algpseudocode}
\usepackage[symbol]{footmisc}

%%%%%%%%%%%%%%%%% LINK PAPER TO APPENDIX
\usepackage{varioref}
\usepackage{xr-hyper}
\usepackage[pagebackref,breaklinks,colorlinks,citecolor=blue]{hyperref}

\makeatletter
\newcommand*{\addFileDependency}[1]{% argument=file name and extension
  \typeout{(#1)}
  \@addtofilelist{#1}
  \IfFileExists{#1}{}{\typeout{No file #1.}}
}
\makeatother

%\myexternaldocument{appendix}
%%%%%%%%%%%%%%%%% LINK PAPER TO APPENDIX

\hypersetup{
colorlinks = true,
linkcolor = RoyalBlue,
anchorcolor = blue,
citecolor = Blue,
filecolor = cyan,
menucolor = ForestGreen,
runcolor = cyan,
urlcolor = RoyalBlue}

%% Default packages I prefer to use for all my paper projects.

%\usepackage[round]{natbib} % has a nice set of citation styles and commands
\usepackage{amsmath, mathtools, amssymb, amsthm}
\usepackage[capitalise]{cleveref}

\usepackage{subcaption, caption}
\usepackage{bm}
\usepackage{adjustbox}
\usepackage{tikz}
\usepackage{wrapfig}

\usetikzlibrary{calc,matrix,positioning,patterns,shapes.geometric}

% Cleverref setup.
\newtheorem{definition}{Definition}[section]
\newtheorem{corollary}{Corollary}[section]
\newtheorem{lemma}{Lemma}[section]
\newtheorem{remark}{Remark}[section]

\newtheorem{theorem}{Theorem}[section]
\crefname{condition}{condition}{conditions}
\Crefname{condition}{Condition}{Conditions}
\crefname{example}{example}{example}
\Crefname{example}{Example}{Example}
\Crefname{section}{Section}{Section} % Use small cref to print Sec., Fig. Tab.
\crefname{section}{Sec.}{Sec.} % Use small cref to print Sec., Fig. Tab.
\crefname{figure}{Fig.}{Figs.} 
\Crefname{figure}{Figure}{Figures} 
\Crefname{table}{Table}{Tables} 
\crefname{table}{Tab.}{Tab.} 
\Crefname{equation}{Equation}{Equations} 
\crefname{equation}{Eqn.}{Eqns.} 

% Math notation

 % The entire feature vector
 % The entire feature vector RV

 % The base feature vector
 % The base feature vector
 % The base feature vector
 % The base feature vector
 
 % The imputed
 % The imputed

 % The full model

%\newcommand{\expect}[1]{\mathbb{E}_{#1}}

\newcommand{\A}{\mathcal{A}}

\newcommand{\D}{\mathcal{D}}

\newcommand{\btheta}[0]{\hat{\bm{\theta}}}

%\newcommand{\bH}[0]{\mathbf{H}}
%\newcommand{\bX}[0]{\mathbf{X}}
%\newcommand{\bI}[0]{\mathbf{I}}
%\newcommand{\bK}[0]{\mathbf{K}}
%\newcommand{\bW}[0]{\mathbf{W}}

% Random variables

% rm is already a command, just don't name any random variables m

% Random vectors

% Elements of random vectors

% Random matrices

% Elements of random matrices

% Vectors

\def\vmu{{\bm{\mu}}}
\def\vdelta{{\bm{\delta}}}
\def\vtheta{{\bm{\theta}}}

\def\vb{{\bm{b}}}

\def\vd{{\bm{d}}}

\def\vg{{\bm{g}}}

\def\vl{{\bm{l}}}
\def\vm{{\bm{m}}}

\def\vs{{\bm{s}}}

\def\vw{{\bm{w}}}
\def\vx{{\bm{x}}}

\def\vtheta{{\bm{\theta}}}
\def\vmu{{\bm{\mu}}}

% Elements of vectors

% Matrix

\def\mD{{\bm{D}}}

\def\mI{{\bm{I}}}

\def\mQ{{\bm{Q}}}

\def\mX{{\bm{X}}}

\def\mLambda{{\bm{\Lambda}}}
\def\mSigma{{\bm{\Sigma}}}

\def\mTheta{{\bm{\Theta}}}

%------------------DP-------------------

%-------------adversarial model---------
\newcommand{\adversary}{\ensuremath{\A}}

\title{Gaussian Membership Inference Privacy}

% \title{Calibrating DP-SGD to \\ Membership Inference Threat Models}

% \title{The Privacy Diplomat: Mediating between Differential and Membership Inference Privacy}

%\title{Privacy-Aware Synthetic Data Generation}
% The \author macro works with any number of authors. There are two commands
% used to separate the names and addresses of multiple authors: \And and \AND.
%
% Using \And between authors leaves it to LaTeX to determine where to break the
% lines. Using \AND forces a line break at that point. So, if LaTeX puts 3 of 4
% authors names on the first line, and the last on the second line, try using
% \AND instead of \And before the third author name.
\author{%
  Tobias Leemann$^{*}$ \\ 
  University of Tübingen \\
  Technical University of Munich \\
  %\texttt{tobias.leemann@uni-tuebingen.de} \\
%   % examples of more authors
  \And
  Martin Pawelczyk$^{*}$ \\
  Harvard University \\
  %\texttt{martin.pawelczyk@uni-tuebingen.de} \\
  \And
  Gjergji Kasneci \\
  Technical University of Munich \\
  %\texttt{gjergji.kasneci@tum.de} \\
}

% Correct Counters for theorems, lemmas, corrolary, examples
%\renewcommand{\thetheorem}{\arabic{theorem}}
%\renewcommand{\theexample}{\arabic{example}}
%\renewcommand{\thelemma}{\arabic{lemma}}
%\renewcommand{\thedefinition}{\arabic{definition}}

% Some mathmode definitions:

\begin{document}
\maketitle

\begin{abstract}
We propose a novel and practical privacy notion called $f$-Membership Inference Privacy ($f$-MIP), which explicitly considers the capabilities of realistic adversaries under the membership inference attack threat model. 
Consequently, $f$-MIP offers interpretable privacy guarantees and improved utility (e.g., better classification accuracy).
In particular, we derive a parametric family of $f$-MIP guarantees that we refer to as $\mu$-Gaussian Membership Inference Privacy ($\mu$-GMIP) by theoretically analyzing likelihood ratio-based membership inference attacks on stochastic gradient descent (SGD).
Our analysis highlights that models trained with standard SGD already offer an elementary level of MIP. 
Additionally, we show how $f$-MIP can be amplified by adding noise to gradient updates.
Our analysis further yields an analytical membership inference attack that offers two distinct advantages over previous approaches. 
First, unlike existing state-of-the-art attacks that require training hundreds of shadow models, our attack does not require \emph{any} shadow model. 
Second, our analytical attack enables straightforward auditing of our privacy notion $f$-MIP.
Finally, we quantify how various hyperparameters (e.g., batch size, number of model parameters) and specific data characteristics determine an attacker's ability to accurately infer a point's membership in the training set.
We demonstrate the effectiveness of our method on models trained on vision and tabular datasets. \footnotetext{$^{*}$Equal contribution. Corresponding authors: \texttt{tobias.leemann@uni-tuebingen.de} and \texttt{martin.} \texttt{pawelczyk.1@gmail.com}.}
\end{abstract}

% \begin{abstract}
% \input{0-abstract_v2}
% \end{abstract}

\renewcommand{\thefootnote}{\arabic{footnote}}
\section{Introduction}
Machine learning (ML) has seen a surge in popularity and effectiveness, leading to its widespread application across various domains. 
However, some of these domains, such as finance and healthcare, deal with sensitive data that cannot be publicly shared due to ethical or regulatory concerns. 
Therefore, ensuring data privacy becomes crucial at every stage of the ML process, including model development and deployment.
In particular, the trained model itself \citep{shokri2017membership,carlini2021membership} or explanations computed to make the model more interpretable \citep{shorki2021explanation,pawelczyk2022privacy} may leak information about the training data if appropriate measures are not taken. 
For example, this is a problem for recent generative Diffusion Models %that have been observed to reproduce training data 
\citep{carlini2023extracting} and Large Language models, where the data leakage seems to be amplified by model size \citep{carlini2021extracting}.

Differential privacy (DP) \citep{dwork2006calibrating} is widely acknowledged as the benchmark for ensuring provable privacy in academia and industry \citep{cummings2023challenges}. 
DP utilizes randomized algorithms during training and guarantees that the output of the algorithm will not be significantly influenced by the inclusion or exclusion of any individual sample in the dataset. 
This provides information-theoretic protection against the maximum amount of information that an attacker can extract about any specific sample in the dataset, even when an attacker has full access to and full knowledge about the predictive model.

While DP is an appealing technique for ensuring privacy, DP's broad theoretical guarantees often come at the expense of a significant loss in utility for many ML algorithms.
This utility loss cannot be further reduced by applying savvier algorithms: Recent work \citep{nasr2021adversary,nasr2023tight} confirms that an attacker can be implemented whose empirical capacity to differentiate between neighboring datasets $D$ and $D'$ when having access to privatized models matches the theoretical upper bound.
This finding suggests that to improve a model's utility, we need to take a step back and inspect the premises underlying DP.
For example, previous work has shown that privacy attacks are much weaker when one imposes additional realistic restrictions on the attacker's capabilities \citep{nasr2021adversary}. 
% Hence, the DP guarantee is conservative, and realistic attackers may not infer as much information as suggested by the corresponding theoretical bound. 

In light of these findings, we revisit the DP threat model and identify three characteristics of an attacker that might be overly restrictive in practice.
First, DP grants the attacker full control over the dataset used in training including the capacity to poison all samples in the dataset.
For instance,  DP's protection includes pathological cases such as an empty dataset and a dataset with a single, adversarial instance \citep{nasr2023tight}. 
Second, in many applications, it is more likely that the attacker only has access to an API to obtain model predictions \citep{shokri2017membership,dwork2018privacy} or to model gradients \citep{kairouz2021advances}.
Finally, one may want to protect typical samples from the data distribution.
As argued by \citet{triastcyn2020bayesian}, it may be over-constraining to protect images of dogs in a model that is conceived and trained with images of cars. 

Such more realistic attackers have been studied in the extensive literature on Membership Inference (MI) attacks (e.g., \citep{yeom2018privacy, carlini2021membership}), where the attacker attempts to determine whether a sample from the data distribution was part of the training dataset. 
Under the MI threat model, \citet{carlini2021membership} observe that ML models with very lax ($\epsilon > 5000$) or no ($\epsilon = \infty$) DP-guarantees still provide some defense against membership inference attacks \citep{yeom2018privacy, carlini2021membership}.
Hence, we hypothesize that standard ML models trained with low or no noise injection may already offer some level of protection against realistic threats such as MI, despite resulting in very large provable DP bounds.

To build a solid groundwork for our analysis, we present a hypothesis testing interpretation of MI attacks.
We then derive $f$-Membership Inference Privacy ($f$-MIP), which bounds the trade-off between an MI attacker's false positive rate (i.e., FPR, type I errors) and false negative rate (i.e., FNR, type II errors) in the hypothesis testing problem by some function $f$.
We then analyze the privacy leakage of a gradient update step in stochastic gradient descent (SGD) and derive the first analytically optimal attack based on a likelihood ratio test.
However, for $f$-MIP to cover practical scenarios, post-processing and composition operations need to be equipped with tractable privacy guarantees as well.
Using $f$-MIP's handy composition properties, we analyze full model training via SGD and derive explicit $f$-MIP guarantees.
We further extend our analysis by adding carefully calibrated noise to the SGD updates to show that $f$-MIP may be guaranteed without any noise or with less noise than the same parametric level of $f$-DP \citep{dong2022gaussian}, leading to a smaller loss of utility.
%(see \Cref{fig:utility}). 

Our analysis comes with a variety of novel insights:
We confirm our hypothesis that, unlike for DP, no noise ($\tau^2=0$) needs to be added during SGD to guarantee $f$-MIP. 
Specifically, we prove that the trade-off curves of a single SGD step converge to the family of Gaussian trade-offs identified by \citet{dong2022gaussian} and result in the more specific $\mu$-Gaussian Membership Inference Privacy ($\mu$-GMIP). 
The main contributions this research offers to the literature on privacy preserving ML include:
\begin{enumerate}
\setlength\itemsep{0.0cm}
\item \textbf{Interpretable and practical privacy notion}: We suggest the novel privacy notion of $f$-MIP that addresses the realistic threat of MI attacks.
$f$-MIP considers the MI attacker's full trade-off curve between false positives and false negatives. Unlike competing notions, $f$-MIP offers appealing composition and post-processing properties.
%\item

\item \textbf{Comprehensive theoretical analysis}: We provide (tight) upper bounds on any attacker's ability to run successful MI attacks, i.e., we bound any MI attacker's ability to identify whether points belong to the training set when ML models are trained via gradient updates. 
%\textcolor{Orange}{Bride between DP and membership inference privacy.}
%To do this, our analysis leverages the Neyman-Pearson lemma to derive an optimal likelihood ratio-based attack. 
%that operates on the model gradients. 
%analytical tradeoff functions.
%We formally bound the compliance of (noisy) SGD updates in terms of $f$-MIP through devising an analyical likelihood ratio test
%\item

\item \textbf{Verification and auditing through novel attacks}: 
As a side product of our theoretical analysis, which leverages the Neyman-Pearson lemma, we propose a novel set of attacks for auditing privacy leakages. 
%Our empirical evaluations show that our analytical Gradient Likelihood-Ratio (GLiR) attack is 7-10 times stronger than current state-of-the-art loss-based attacks \citep{carlini2021membership}. 
An important advantage of our analytical Gradient Likelihood-Ratio (GLiR)  attack is its computational efficiency. Unlike existing attacks that rely on training hundreds of shadow models to approximate the likelihood ratio, our attack does not require any additional training steps.

\item \textbf{Privacy amplification through noise addition}:
Finally, our analysis shows how one can use noisy SGD (also known as Differentially Private SGD \cite{abadi2016deep}) to reach $f$-MIP while maintaining worst-case DP guarantees. 
Thereby our work establishes a theoretical connection between $f$-MIP and $f$-DP \cite{dong2022gaussian}, which allows to translate an $f$-DP guarantee into an $f$-MIP guarantee and vice versa.
\end{enumerate} 

\section{Related Work}

\textbf{Privacy notions.}
DP and its variants provide robust, information-theoretic privacy guarantees by ensuring that the probability distribution of an algorithm's output remains stable even when one sample of the input  dataset is changed \citep{dwork2006calibrating}. 
For instance, a DP algorithm is $\varepsilon$-DP if the probability of the algorithm outputting a particular subset $E$ for a dataset $S$ is not much higher than the probability of outputting $E$ for a dataset $S_0$ that differs from $S$ in only one element.
DP has several appealing features, such as the ability to combine DP algorithms without sacrificing guarantees.

%These works increasingly relax strength of adversary's threat model:
%More recently, a small literature has formed that suggests to carefully relax the attacker's capabilities to increase the utility that can be drawn from private predictions \citep{thudi2022bounding,bassily2018model,dwork2018privacy,triastcyn2020bayesian,izzo2022provable}. 
A few recent works have proposed to carefully relax the attacker’s capabilities in order to achieve higher utility from private predictions \citep{bassily2018model,dwork2018privacy,triastcyn2020bayesian,izzo2022provable}. 
For example, \citet{dwork2018privacy} suggest the notion of ``privacy-preserving prediction'' to make private model predictions through an API interface.
Their work focuses on PAC learning guarantees of any class of Boolean functions.
%The difference to the standard setting is that the attacker is only allowed API access to the model, and not given complete model access including the description of the model.
Similarly, \citet{triastcyn2020bayesian} suggest ``Bayesian DP'', which is primarily based on the definition of DP, but restricts the points in which the datasets $S$ and $S_0$ may differ to those sampled from the data distribution. 
In contrast, \citet{izzo2022provable} introduces a notion based on MI attacks, where their approach guarantees that an adversary $\adversary$ does not gain a significant advantage in terms of accuracy when distinguishing whether an element $\vx$ was in the training data set compared to just guessing the most likely option. 
However, they only constrain the accuracy of the attacker, while we argue that it is essential to bound the entire trade-off curve, particularly in the low FPR regime, to prevent certain re-identification of a few individuals \cite{carlini2021membership}.
Our work leverages a hypothesis testing formulation that covers the entire trade-off curve thereby offering protection also to the most vulnerable individuals. 
Additionally, our privacy notion maintains desirable properties such as composition and privacy amplification through subsampling, which previous notions did not consider.
% Finally, \citet{izzo2022provable} introduce a notion based on membership inference attacks.
% Similar to Bayesian DP, their notion is based on the observation that a private mechanism $\mechanism{A}$ needs to protect against information leakage relative to the distribution from which the sensitive data is drawn, with the difference that Bayesian DP allows any threat model, while their notion allows membership inference threat models.
% Hence, their notion guarantees that an adversary $\adversary$ does not gain a significant advantage in terms of accuracy when distinguishing whether an element $\vx$ was in the training data set compared to just guessing the most likely option.
% Unlike their work, which merely constrains the accuracy of the attacker, we argue that it is essential to bound the entire trade-off curve, particular in the low FPR regime as this allows for almost certain re-identification of a few individuals.
% Leveraging our hypotheses testing formulation our work maintains desirable properties such as composition and privacy amplification through subsampling which they do not consider.

\textbf{Privacy Attacks on ML Models.}
Our work is also related to auditing privacy leakages through a common class of attacks called MI attacks.
These attacks determine if a given instance is present in the training data of a particular model \citep{yeom2018privacy,shokri2017membership,shorki2021explanation,long2018understanding,sablayrolles2019whitebox,haim2022reconstructing,carlini2021membership,carlini2023extracting,pawelczyk2022privacy,tan2022parameters,tan2023blessing,Choquette2019label,ye2021enhanced}.
%While we relegate the detailed discussion on this literature to \cref{app:add_details_related_work} \textcolor{Orange}{compare to most similar work}.
% Most of these attacks typically exploit the differences in the distribution of model confidence on the true label (or the loss) between the instances that are in the training set and those that are not \citep{shokri2017membership, sablayrolles2019whitebox, carlini2021membership, ye2021enhanced}.
Compared to these works, our work suggests a new much stronger class of MI attacks that is analytically derived and uses information from model gradients.
An important advantage of our analytically derived attack is its computational efficiency, as it eliminates the need to train any additional shadow models.

\section{Preliminaries}
%In this section, we review the notion of differential privacy and membership inference attacks.

The classical notion of $(\varepsilon,\delta)$-differential privacy \cite{dwork2006calibrating} is the current workhorse of private ML and can be described as follows:
An algorithm is DP if for any two neighboring datasets $S, S'$ (that differ by one instance) and any subset of possible outputs, the ratio of the probabilities that the algorithm's output lies in the subset for inputs $S, S^\prime$ is bounded by a constant factor. DP is a rigid guarantee, that covers \emph{every} pair of datasets $S$ and $S'$, including pathologically crafted datasets (for instance, \citet{nasr2023tight} use an empty dataset) that might be unrealistic in practice.
For this reason, we consider a different attack model in this work: The MI game \citep{yeom2018privacy}. 
This attack mechanism on ML models follows the goal of inferring an individual's membership in the training set of a learned ML model.
We will formulate this problem using the language of hypothesis testing and trade-off functions, a concept from hypothesis testing theory \citep{dong2022gaussian}. 
We will close this section by giving several useful properties of trade-off functions which we leverage in our main theoretical results presented in Sections \ref{sec:navigating} and \ref{sec:implementing_fmip}.

% \subsection{Differential Privacy}

% \begin{definition}[Differential Privacy \citep{dwork2006calibrating}]
% A randomized algorithm $M$ gives $(\varepsilon,\delta)$-differential privacy if for all
% data sets $D$ and $D'$ differing on at most one element, and all $E \subseteq \text{Range}(M),$
% \begin{align}
% \mathbb{P}\left[M(D) \in E \right] \leq \exp(\varepsilon) × \mathbb{P}\left[M(D') \in E \right] + \delta,
% \end{align}
% where the probability is taken is over the coin tosses of $M$.
% If $\delta=0$, the guarantee is called $\varepsilon$-DP.
% \end{definition}
% To achieve differential privacy (DP), a mechanism $M$ must be randomized.
% For instance, Gaussian noise is added to a function that should be privatized (e.g., let $S=\{x_1, \dots, x_N\}$, then $M(S) = mean(S) + \mathcal{N}(0, \tau^2)$).
% To obfuscate the influence of an individual data point the level of noise needs to be sufficiently large; at the same time, the noise level needs to be carefully calibrated not be too large to avoid accuracy loss. The output distributions $M(D)$ and $M(D')$ should be close for any neighboring datasets $D$ and $D'$, including those crafted by the attacker.
% %that differ in only one individual record.

\subsection{Membership Inference Attacks}
The overarching goal of privacy-preserving machine learning lies in protecting personal data. To this end, we will show that an alternative notion of privacy can be defined through the success of a MI attack which attempts to infer whether a given instance was present in the training set or not.
%These attacks focus on determining 
Following \citet{yeom2018privacy} we define the standard MI experiment as follows:
\begin{definition}[Membership Inference Experiment \citep{yeom2018privacy}] Let $\mathcal{A}$ be an attacker, $A$ be a learning algorithm, $N$ be a positive integer, and $\mathcal{D}$ be a distribution over data points $\vx \in D$, where the vector $\vx$ may also be a tuple of data and labels.
The MI experiment proceeds as follows:
The model and data owner $\mathcal{O}$ samples $S \sim \mathcal{D}_N$ (i.e, sample n points i.i.d.\ from $\mathcal{D}$) and trains $A_S = A(S)$.
They choose $b \in \left\{0, 1\right\}$ uniformly at random and draw $\vx' \sim \mathcal{D}$ if $b = 0$, or $\vx' \sim S$ if $b = 1$. Finally, the attacker is successful if $\mathcal{A}(\vx', A_S, N, \mathcal{D}) = b$. $\mathcal{A}$ must output either 0 or 1.
% \begin{enumerate}
% \setlength\itemsep{-0.05cm}
% \item Sample $S \sim \mathcal{D}_N$ (i.e, sample n points i.i.d.\ from $\mathcal{D}$), and let $A_S = A(S)$.
% \item Choose $b \in \left\{0, 1\right\}$ uniformly at random.
% \item Draw $\vx' \sim S$ if $b = 0$, or $\vx' \sim \mathcal{D}$ if $b = 1$
% \item The attack is successful if $\mathcal{A}(\vx', A_S, N, \mathcal{D}) = b$. $\mathcal{A}$ must output either 0 or 1.
% \end{enumerate}
\end{definition}
We note that the membership inference threat model features several key differences to the threat model underlying DP, which are listed in \Cref{tab:dp_mip_comparison}.
%(we refer to \Cref{app:add_details_related_work} for a more detailed discussion). 
Most notably, in MI attacks, the datasets are sampled from the distribution $\mathcal{D}$, whereas DP protects all datasets.
This corresponds to granting the attacker the capacity of full dataset manipulation.
Therefore, the MI attack threat model is sensible in cases where the attacker cannot manipulate the dataset through injection of malicious samples also called ``canaries''.
This may be realistic for financial and healthcare applications, where the data is often collected from actual events (e.g., past trades) or only a handful of people (trusted hospital staff) have access to the records. 
In such scenarios, it might be overly restrictive to protect against worst-case canary attacks as attackers cannot freely inject arbitrary records into the training datasets.
%controls an attacker's capacity to distinguish \emph{any} two neighboring datasets $D$ and $D^\prime$. 
%Motivate Membership as privacy goal: DP can be overconstraining, use-cases. MI is the simplest form of reconstruction attack -> protecting against MIA should offer protection against other types, e.g., reconstructions
%We note that the underlying threat model in MI attacks feWe list the main differences in threat models 
%Here, we focus the discussion on when a privacy notion based on MI attacks would be reasonable.
% First, in MI attacks, the datasets are sampled from the distribution $\mathcal{D}$, whereas DP protects all datasets which corresponds to granting the attacker the capacity of full dataset manipulation. 
%Thereby 
%Instead, the notion of MI attacks is more realistic in cases when an attacker only has API access or access to the trained model but cannot interfere during training. 
Furthermore, MI attacks are handy as a fundamental ingredient in crafting data extraction attacks \citep{carlini2021extracting}. Hence we expect a privacy notion based on the MI threat model to offer protection against a  broader class of reconstruction attacks.
Finally, being an established threat in the literature \citep{shokri2017membership, yeom2018privacy, carlini2021membership, Choquette2019label, ye2021enhanced}, MI can be audited through a variety of existing attacks.

% Second, the samples that are protected under this notion are also drawn from the distribution. Consequently, MI primarily protects typical samples. 
% In most cases, the distribution covers the data that the model is conceived to handle in practice, such that protecting against extreme outliers may be overconstraining. 
% Finally, the goals of the attackers in both threat models are different.
% Instead of being able to tell apart two datasets, the MI attacker is interested in inferring whether a given sample was part of the model's training set. 
% As the sample is already known and only a binary response is required, this goal is weaker than other types of attacks such as full reconstruction attacks \citep{carlini2021extracting, haim2022reconstructing}. 
% Therefore, the MI threat model covers many goals of realistic attackers.
% We provide a tabular overview over these key differences in \Cref{tab:dp_mip_comparison}.

\begin{table}[htb]
\centering
\resizebox{\columnwidth}{!}{%
\begin{tabular}{c  p{6cm}  p{6cm}}
\toprule
& f-DP threat model & f-MIP threat model (this work)  \\
\cmidrule(lr){1-1} \cmidrule(lr){2-3}
Goal & Distinguish between $S$ and $S'$ for \emph{any} $S,S'$ that differ in at most one instance. & Distinguish whether $\vx' \in S$ (training data set) or not. 
\\
\midrule
Dataset access &  Attacker has full data access. For example, the attacker can poison or adversarially construct datasets on which ML models could be trained; e.g., $S=\{ \}$ and   $S'=\{ 10^6\}$.   & Attacker has no access to the training data set; i.e., the model owner privately trains their model free of adversarially poisoned samples.  \\ 
\midrule
Protected Instances & The instance in which $S$ and $S^\prime$ differ is arbitrary. This includes OOD samples and extreme outliers. & The sample  $\vx'$ for which membership is to be inferred is drawn from the data distribution $\mathcal{D}$. Therefore, MI is concerned with typical samples that can occur in practice. \\
\midrule
Best used & When the specific attack model is unknown. Offers a form of general protection. 	& When dataset access (e.g. canary injection) of an attacker can be ruled out and the main attack goal lies in revealing private training data (e.g., membership inference, data reconstruction). \\
\midrule
Model knowledge & 
\multicolumn{2}{p{12cm}}{The attacker knows the model architecture and has full access to the model in form of its parameters, hyperparameters and its model outputs.} \\
\bottomrule
\end{tabular}
}\vspace{1em}
\caption{Comparing the threat models underlying $f$-DP and $f$-MIP.}
\label{tab:dp_mip_comparison}
\end{table}

\subsection{Membership Inference Privacy as a Hypothesis Testing Problem}
While DP has been studied through the perspective of a hypothesis testing formulation for a while \citep{wasserman2010statistical,kairouz2015composition,balle2020hypothesis,dong2022gaussian}, we adapt this route to formulate membership inference attacks.
To this end, consider the following hypothesis test:
\begin{align}
\text{H}_0: \vx' \notin S  \text{ vs. } \text{H}_1: \vx' \in S. 
\label{eq:hypothis_test}
\end{align}
Rejecting the null hypothesis corresponds to detecting the presence of the individual $\vx'$ in $S$, whereas failing to reject the null hypothesis means inferring that $\vx'$ was not part of the dataset $S$.
The formulation in \eqref{eq:hypothis_test} is a natural vehicle to think about any attacker's capabilities in detecting members of a train set in terms of false positive and true positive rates.
The motivation behind these measures is that the attacker wants to reliably identify the subset of data points belonging to the training set (i.e., true positives) while incurring as few false positive errors as possible \citep{carlini2021membership}.
In other words, the attacker wants to maximize their true positive rate at any chosen and ideally low false positive rate (e.g., 0.001).
From this perspective, the formulation in \eqref{eq:hypothis_test} allows to define membership inference privacy via trade-off functions $f$ which exactly characterize the relation of false negative rates (i.e., 1-TPR) and false positive rates that an optimal attacker can achieve.
\begin{definition}(Trade-off function \cite{dong2022gaussian})
For any two probability distributions $P$ and $Q$ on the same
space, denote  the trade-off function $\text{Test}(P;Q) : [0; 1] \rightarrow [0; 1]$
\begin{align}
    \text{\normalfont  Test}(P;Q) (\alpha)= \inf \left\{\text{FNR}~ \middle|~ \text{FPR}=\alpha\right\},
\end{align}
where the infimum is taken over all (measurable) rejection rules (``tests'') which lead to a FPR of $\alpha$ between distributions $P, Q$.
\end{definition}
Not every function makes for a valid trade-off function. Instead, trade-off functions possess certain characteristics that are handy in their analysis.
\begin{definition}[Characterization of trade-off functions \citep{dong2022gaussian}]
A function $f : [0, 1] \rightarrow [0, 1]$ is a trade-off function if $f$ is convex, continuous at zero, non-increasing, and $f(r) \leq 1 - r$ for $r \in [0,1]$.
\end{definition}
We additionally introduce a semi-order on the space of trade-off functions to make statements on the hardness of different trade-offs in relation to each other.
\begin{definition}[Comparing trade-offs] A trade-off function $f$ is uniformly at least as hard as another trade-off function $g$, if $f(r) \geq g(r)$ for all $0 \leq r \leq 1$. We write $f \geq g$.
\end{definition}
If $\text{Test}(P; Q) \geq \text{Test}(P^\prime; Q^\prime)$,  testing $P$ vs $Q$ is uniformly at least as hard as testing $P^\prime$ vs $Q^\prime$. 
Intuitively, this means that for a given FPR $\alpha$, the best test possible test on $(P; Q)$ will result in an equal or higher FNR than the best test on $(P^\prime; Q^\prime)$.

\subsection{Noisy Stochastic Gradient Descent (Noisy SGD)}
Most recent large-scale ML models are trained via stochastic gradient descent (SGD).
%Therefore, considerable effort has been spent on the development of privacy-preserving versions of this algorithm and their analysis \cite{abadi2016deep}.
% \begin{definition}
% For a function $f: \mathbb{R}^k \rightarrow \mathbb{R}^d$, and input $X \in \mathbb{R}^k$, the Gaussian mechanism $M:\mathbb{R}^k \rightarrow \mathbb{R}^d$ is defined as 
% \begin{align}
% M(X) = f(X) + Y,
% \end{align}
% where $Y\in \mathbb{R}^{d}$ follows a Gaussian distribution with $Y \sim \mathcal{N}(\mathbf{0}, \tau^2\mI)$.
% \end{definition}
Noisy SGD (also known as DP-SGD) is a variant of classical SGD that comes with privacy guarantees.
We consider the algorithm as in the work by \citet{abadi2016deep}, which we restate for convenience in \Cref{app:algorithm}.
While its characteristics with respect to DP have been extensively studied, we take a fundamentally different perspective in this work and study the capabilities of this algorithm to protect against membership inference attacks. 
In summary, the algorithm consists of three fundamental steps: 
\emph{gradient clipping} (i.e., $\vtheta_i \coloneqq \vg(\vx_i, y_i) \cdot\max(1, C/\lVert\vg(\vx_i, y_i)\rVert)$ where $\vg(\vx_i, y_i) =\nabla \mathcal{L}(\vx_i, y_i)$ is the gradient with respect to the loss function $\mathcal{L}$),
\emph{aggregation} (i.e., $\vm = \frac{1}{n}\sum_{i=1}^n \vtheta_i$) and
\emph{adding Gaussian noise} (i.e., $\tilde{\vm} = \vm + Y$ where $Y \sim \mathcal{N}(\mathbf{0}, \tau^2\mI)$ with variance parameter $\tau^2$).
To obtain privacy bounds for this algorithm, we study MI attacks for means of random variables.
This allows us to bound the MI vulnerability of SGD.

\section{Navigating Between Membership Inference Privacy and DP}
\label{sec:navigating}
In this section, we formally define our privacy notion $f$-MIP. To this end, it will be handy to view MI attacks as hypothesis tests.

%and analyze the factors that influence the vulnerability of noisy SGD to MI attacks. 
% We do this by first translating the membership inference experiment into a hypothesis testing problem. 
% We then formally define the notion of $f$-MIP and show how likelihood ratio tests on ML model gradients can guide how Gaussian noise should be parameterized to achieve $f$-MIP. 
% To do that, we leverage composition and subsampling results for hypothesis tests by \citet{dong2022gaussian} to show that noisy SGD is $f$-MIP. 
%Our approach converts the membership inference experiment into a hypothesis testing problem. 
%We provide a formal definition of $f$-MIP and show how likelihood ratio tests on gradients of ML models can be used to determine the optimal parameterization of Gaussian noise to achieve $f$-MIP.
%We draw on previous composition and subsampling results for hypothesis tests to demonstrate that noisy SGD provides $f$-MIP privacy.% while maintaining worst-case DP guarantees.
\subsection{Membership Inference Attacks from a Hypothesis Testing Perspective}
Initially, we define the following distributions of the algorithm's output
\begin{align}
%\begin{split}
\label{eq:hypothesisformulation}
A_0 = A(\mX \cup \left\{\vx\right\}) \text{ with } \mX \sim \mathcal{D}^{n-1}, \vx \sim \mathcal{D} \text{ and } 
A_1(\vx^{\prime}) = A(\mX \cup \left\{\vx^{\prime}\right\}) \text{ with } \mX \sim \mathcal{D}^{n-1},
%\end{split}
\end{align}
where we denote other randomly sampled instances that go into the algorithm by $\mX=\left\{\vx_1, ... \vx_{n-1}\right\}$. Here $A_0$ represents the output distribution under the null hypothesis ($H_0$) where the sample $\vx^\prime$ is not part of the training dataset. 
On the other hand, $A_1$ is the output distribution under the alternative hypothesis ($H_1$) where  $\vx^\prime$ was part of the training dataset. The output contains randomness due to the instances drawn from the distribution $\mathcal{D}$ and due to potential inherent randomness in $A$.

We observe that the distribution $A_1$ depends on the sample $\vx^{\prime}$ which is known to the attacker. 
The attacker will have access to samples for which $A_0$ and $A_1(\vx^\prime)$ are simpler to distinguish and others where the distinction is harder. 
To reason about the characteristics of such a stochastically composed test, we define a composition operator that defines an optimal test in such a setup. 
To obtain a global FPR of $\alpha$, an attacker can target different FPRs $\bar{\alpha}(\vx^{\prime})$ for each specific test. 
We need to consider the optimum over all possible ways of choosing $\bar{\alpha}(\vx^{\prime})$, which we refer to as $\textit{test-specific FPR function}$, giving rise to the following definition.
\begin{definition}[Stochastic composition of trade-off functions]\label{def:stochastic_composition}
Let $\mathcal{F}$ be a family of trade-off functions, $h: D \subset \mathbb{R}^d \rightarrow \mathcal{F}$ be a function that maps an instance of the data domain to a corresponding trade-off function, and $\mathcal{D}$ be a probability distribution on $D$.
% \begin{itemize}
% \setlength{\itemsep}{-3pt}
%     \item $\mathcal{F}$ be a family of trade-off functions;
%     \item $h: \mathcal{X} \subset \mathbb{R}^d \rightarrow \mathcal{F}$ be a function that maps an instance of the data domain to a corresponding trade-off function;
%     \item $\mathcal{D}$ be a probability distribution on $\mathcal{X}$.
% \end{itemize}
The set of valid test-specific FPR functions $\bar{\alpha}: D \rightarrow [0,1]$ that result in  a global FPR of $\alpha \in [0,1]$ given the distribution $\mathcal{D}$ is defined through
\begin{align}
    \mathcal{E}(\alpha, \mathcal{D}) = \left\{\bar{\alpha}: D\rightarrow \left[0,1\right] ~\middle|~
 \mathbb{E}_{\vx^\prime \sim \mathcal{D}}\left[\bar{\alpha}(\vx^\prime)\right] =\alpha\right\}.
\end{align}
For a given test-specific FPR function, $\bar{\alpha}$ the global false negative rate (type II error) $\beta$ is given by 
\begin{align}
    \beta_h(\bar{\alpha}) = \mathbb{E}_{\vx^\prime \sim \mathcal{D}} \left[h(\vx)(\bar{\alpha}(\vx))\right],
\end{align}
where $\bar{\alpha}(\vx)$ is the argument to the trade-off function $h(\vx) \in \mathcal{F}$.
For a global $\alpha \in [0,1]$ the stochastic composition of these trade-functions is defined as
\begin{align}
\left(\bigotimes_{\vx \sim \mathcal{D}} h(\vx) \right)(\alpha) = \min_{\bar{\alpha} \in \mathcal{E}(\alpha, \mathcal{D})} \left\{ \beta_h(\bar{\alpha})\right\},
\end{align}
(supposing the minimum exists), the smallest global false negative rate possible at a global FPR of $\alpha$.
\end{definition}
This definition specifies the trade-off function $\bigotimes_{\vx \sim \mathcal{D}} h(\vx): [0,1] \rightarrow [0,1]$ of a stochastic composition of several trade-offs. 
While it is reminiscent of the ``most powerful test'' (MPT) \citep{neyman1933ix}, there are several differences to the MPT that are important in our work.
Most prominently, a straightforward construction of the MPT to MI problems does not work since the adversary does not only run one hypothesis test to guess whether one sample belongs to the training data set or not; instead, the adversary draws multiple samples and runs sample-dependent and (potentially) different hypotheses tests for each drawn sample.
This is necessary due to the form of the alternative hypotheses in the formulation of the test in \eqref{eq:hypothesisformulation}, which depends on the sample $\vx^\prime$.
We therefore require a tool to compose the results from different hypothesis tests. 
% We can then compute the expected trade-off curve for an adversary that runs tests with different powers according to the observed samples.
Finally, we prove that the trade-off of the stochastic composition has the properties of a trade-off function (see App.\ \ref{app_sec:stochasticcomposition}):
\begin{theorem}[Stochastic composition of trade-off functions]
The stochastic composition $\bigotimes_{\vx \sim \mathcal{D}} h(\vx)$ of trade-off functions $h(\vx)$ maintains the characteristics of a trade-off function, i.e., it is convex, non-increasing, $\left(\bigotimes_{\vx \sim \mathcal{D}} h(\vx)\right)(r) \leq 1 - r$ for all $r \in [0,1]$, and it is continuous at zero.
\label{theorem:stochastic_composition}
\end{theorem}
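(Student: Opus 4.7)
The plan is to set $F \coloneqq \bigotimes_{\vx \sim \mathcal{D}} h(\vx)$ and verify each of the four defining properties of a trade-off function by lifting the corresponding pointwise property of $h(\vx)$ through the infimum-of-expectation structure of \Cref{def:stochastic_composition} (working throughout with $\varepsilon$-near-optimal test-specific FPR functions to sidestep existence of the minimum). Convexity would follow by observing that, given $\alpha_1,\alpha_2 \in [0,1]$, $\lambda \in [0,1]$, and near-optimal $\bar{\alpha}_1 \in \mathcal{E}(\alpha_1,\mathcal{D})$, $\bar{\alpha}_2 \in \mathcal{E}(\alpha_2,\mathcal{D})$, the convex combination $\lambda \bar{\alpha}_1 + (1-\lambda)\bar{\alpha}_2$ lies in $\mathcal{E}(\lambda \alpha_1 + (1-\lambda)\alpha_2,\mathcal{D})$; pointwise convexity of each $h(\vx)$ together with linearity of expectation then deliver the Jensen-type bound on $F$. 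Monotonicity would be obtained by inflating any feasible $\bar{\alpha}_1$ at level $\alpha_1 < \alpha_2$ pointwise to $\bar{\alpha}_2(\vx) \coloneqq \min(\bar{\alpha}_1(\vx) + c, 1) \geq \bar{\alpha}_1(\vx)$, where $c \geq 0$ is chosen via an intermediate-value argument so that $\expect[\bar{\alpha}_2] = \alpha_2$; the pointwise non-increasing property of each $h(\vx)$ then yields $\beta_h(\bar{\alpha}_2) \leq \beta_h(\bar{\alpha}_1)$ and hence $F(\alpha_2) \leq F(\alpha_1)$. The bound $F(r) \leq 1 - r$ is immediate from plugging the constant choice $\bar{\alpha}(\vx) \equiv r \in \mathcal{E}(r,\mathcal{D})$ into $\beta_h$ and invoking $h(\vx)(r) \leq 1 - r$ pointwise.

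The main obstacle will be continuity at zero, because convexity alone only gives continuity of $F$ on the open interval $(0,1)$ and does not rule out a jump at the left endpoint. The argument starts by noting that $\expect[\bar{\alpha}] = 0$ combined with $\bar{\alpha} \geq 0$ forces $\bar{\alpha} \equiv 0$ almost surely, so $F(0) = \expect_{\vx}[h(\vx)(0)]$; together with the monotonicity established above this already yields $F(r) \leq F(0)$ for all $r > 0$, so it suffices to show $\liminf_{r \to 0^+} F(r) \geq F(0)$. The workhorse here is Markov's inequality: for any $\varepsilon$-near-optimal $\bar{\alpha}_r$ at level $r$ and any threshold $\delta > 0$, $\prob_{\vx \sim \mathcal{D}}(\bar{\alpha}_r(\vx) > \delta) \leq r/\delta$. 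Splitting $\expect_{\vx}[h(\vx)(0) - h(\vx)(\bar{\alpha}_r(\vx))]$ on the events $\{\bar{\alpha}_r \leq \delta\}$ and $\{\bar{\alpha}_r > \delta\}$ and bounding the former via pointwise monotonicity of $h(\vx)$ and the latter via $h(\vx) \leq 1$, I expect to obtain
\begin{align*}
F(0) - F(r) \;\leq\; \expect_{\vx}\bigl[h(\vx)(0) - h(\vx)(\delta)\bigr] + r/\delta + \varepsilon.
\end{align*}
Pointwise continuity of each $h(\vx)$ at zero together with the uniform bound $h(\vx)(0) - h(\vx)(\delta) \leq 1$ allows dominated convergence to drive $\expect_{\vx}[h(\vx)(0) - h(\vx)(\delta)] \to 0$ as $\delta \to 0^+$. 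Choosing, for instance, $\delta = \sqrt{r}$ and then sending $\varepsilon \to 0$ makes the right-hand side vanish as $r \to 0^+$, closing the argument.
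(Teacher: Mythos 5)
Your proposal is correct, and it reproduces the paper's convexity argument essentially verbatim (convex combination of feasible test-specific FPR functions, feasibility by linearity of expectation, pointwise convexity of each $h(\vx)$), with the extra care of working with $\varepsilon$-near-optimizers where the paper simply assumes the minimum exists. The differences are in the other three properties. For the bound $F(r)\leq 1-r$ you plug in the single constant function $\bar{\alpha}\equiv r$, whereas the paper bounds $\beta_h(\bar{\alpha})\leq 1-r$ for every feasible $\bar{\alpha}$; since only an upper bound on a minimum is needed, your shortcut is equally valid and slightly leaner. For monotonicity the paper derives it abstractly from convexity plus $H(1)=0$, while you construct an explicit inflation $\bar{\alpha}_2=\min(\bar{\alpha}_1+c,1)$ with $c$ fixed by an intermediate-value argument; both work, the paper's route is shorter while yours is self-contained and does not lean on the convexity step. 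The real divergence is continuity at zero: the paper builds a $\delta$ from the $1-\epsilon'$ quantile of the distribution of pointwise continuity moduli $d(\vx,\epsilon')$ of the individual trade-offs and then splits on $\{\bar{\alpha}>d_{\epsilon'}/\epsilon'\}$ (implicitly a Markov bound), whereas you split on $\{\bar{\alpha}_r>\delta\}$ directly, bound the tail by Markov's inequality $\prob(\bar{\alpha}_r>\delta)\leq r/\delta$, control the main term by $\expect_{\vx}[h(\vx)(0)-h(\vx)(\delta)]$ via monotonicity, send $\delta\to 0$ by dominated convergence (the differences are nonnegative, bounded by $1$, and vanish pointwise by continuity of each $h(\vx)$ at zero), and couple the two terms with $\delta=\sqrt{r}$. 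Your version is cleaner and avoids the quantile construction, at the mild cost of invoking dominated convergence and the usual implicit measurability of $\vx\mapsto h(\vx)(\delta)$, which the paper also takes for granted; the key identity $F(0)=\expect_{\vx}[h(\vx)(0)]$ from $\expect[\bar{\alpha}]=0$, $\bar{\alpha}\geq 0$ is used identically in both arguments. I see no gap.
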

%The proof is relegated to \Cref{app_sec:stochasticcomposition}. 

\subsection{$f$-Membership Inference Privacy ($f$-MIP)}
This rigorous definition of the stochastic composition operator allows us to define membership inference privacy from a hypothesis testing perspective.
\begin{definition}[$f$-Membership Inference Privacy]
\label{def:f-mip}
Let $f$ be a trade-off
function. An algorithm\footnote{When using the term ``algorithm'', we also include randomized mappings.} $A: D^{n} \rightarrow \mathbb{R}^d$ is said to be
$f$-membership inference private ($f$-MIP) with respect to a data distribution $\mathcal{D}$ if
\begin{align}
\bigotimes_{\vx^{\prime} \sim \mathcal{D}}  \text{\normalfont~Test}\left(A_0; A_1(\vx^{\prime}) \right) \geq f,
\end{align}
 where $\vx^{\prime} \sim \mathcal{D}$ and $\bigotimes$ denotes the stochastic composition built from individual trade-off functions of the MI hypotheses tests for random draws of $\vx^{\prime}$.
\end{definition}
In this definition, both sides are functions dependent on the false positive rate $\alpha$. A prominent special case of a trade-off function is the Gaussian trade-off, which stems from testing one-dimensional normal distributions of unit variance that are spaced apart by $\mu \in \mathbb{R}_{\geq 0}$. Therefore, defining the following special case of $f$-MIP will be useful.
\begin{definition}[$\mu$-Gaussian Membership Inference Privacy]
\label{def:mu-gmip}
Let $\Phi$ be the cumulative distribution function (CDF) of a standard normal distribution.
Define $g_\mu(\alpha)\coloneqq \Phi(\Phi^{-1}(1-\alpha)-\mu)$ to be the trade-off function derived from testing two Gaussians; one with mean $0$ and one with mean $\mu$. 
An algorithm $A$ is $\mu$-Gaussian Membership Inference private ($\mu$-GMIP) with privacy parameter $\mu$ if it is $g_\mu$-MIP, i.e., it is MI private with trade-off function $g_\mu$.
\end{definition}

\begin{remark}
DP can also be defined via the Gaussian trade-off function, which results in $\mu$-Gaussian Differential Privacy ($\mu$-GDP, \citep{dong2022gaussian}). While the trade-off curves for both $\mu$-GDP and $\mu$-GMIP have the same parametric form, they have different interpretations: $\mu$-GDP describes the trade-off function that an attacker with complete knowledge (left column in Table \ref{tab:dp_mip_comparison}) could achieve while $\mu$-GMIP describes the trade-off function that an attacker with MI attack capability can achieve (right column in Table \ref{tab:dp_mip_comparison}). 
In the next section we will quantify their connection further.
\end{remark}

\subsection{Relating $f$-MIP and $f$-DP}
We close this section by providing first results regarding the relation between $f$-DP and $f$-MIP. As expected, $f$-DP is strictly stronger than $f$-MIP, which can be condensed in the following result:
\begin{theorem}[$f$-DP implies $f$-MIP] \label{thm:mipweakerthandp}
Let an algorithm $A: D^{n} \rightarrow \mathbb{R}^d$ be $f$-differentially private \cite{dong2022gaussian}.
Then, algorithm $A$ will also be $f$-membership inference private.
\end{theorem}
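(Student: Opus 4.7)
The plan is to reduce the $f$-MIP inequality to the $f$-DP inequality in two stages: first, a \emph{pointwise in $\vx'$} reduction showing that $\text{Test}(A_0;A_1(\vx')) \geq f$ for each $\vx'$; second, an averaging step over $\vx' \sim \mathcal{D}$ that upgrades this to the stochastic composition via convexity of $f$.

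For the first stage, I would rewrite both $A_0$ and $A_1(\vx')$ as mixtures indexed by the same random variable $\mX \sim \mathcal{D}^{n-1}$. Concretely, conditional on $\mX$, the distribution $A_1(\vx')$ is just $A(\mX \cup \{\vx'\})$, while the distribution $A_0$ is $\mathbb{E}_{\vx \sim \mathcal{D}}[A(\mX \cup \{\vx\})]$. For every fixed triple $(\mX,\vx,\vx')$ the datasets $\mX \cup \{\vx\}$ and $\mX \cup \{\vx'\}$ are neighboring, so the $f$-DP hypothesis immediately gives
\begin{align*}
\text{Test}\bigl(A(\mX \cup \{\vx\});\, A(\mX \cup \{\vx'\})\bigr) \geq f.
\end{align*}
Next I would invoke the joint convexity of trade-off functions with respect to same-weight mixtures (a standard fact in the $f$-DP literature of Dong, Roth and Su): if each $\text{Test}(P_w;Q_w) \geq f$, then $\text{Test}(\int P_w\,d\nu(w);\int Q_w\,d\nu(w)) \geq f$. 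Applying this first with $\nu = \mathcal{D}$ over $\vx$ (treating $\vx'$ as fixed, so the right-hand mixture is trivial) and then with $\nu = \mathcal{D}^{n-1}$ over $\mX$ yields $\text{Test}(A_0;A_1(\vx')) \geq f$ for every $\vx'$.

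For the second stage, I would unwind the definition of stochastic composition. Fix any admissible test-specific FPR function $\bar{\alpha} \in \mathcal{E}(\alpha,\mathcal{D})$. By the pointwise bound just established,
\begin{align*}
\mathbb{E}_{\vx' \sim \mathcal{D}}\bigl[\text{Test}(A_0;A_1(\vx'))(\bar{\alpha}(\vx'))\bigr] \;\geq\; \mathbb{E}_{\vx' \sim \mathcal{D}}\bigl[f(\bar{\alpha}(\vx'))\bigr].
\end{align*}
Since $f$ is convex (a defining property of trade-off functions), Jensen's inequality gives $\mathbb{E}_{\vx'}[f(\bar{\alpha}(\vx'))] \geq f(\mathbb{E}_{\vx'}[\bar{\alpha}(\vx')]) = f(\alpha)$. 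Taking the infimum over $\bar{\alpha} \in \mathcal{E}(\alpha,\mathcal{D})$ on the left then yields $\bigl(\bigotimes_{\vx' \sim \mathcal{D}} \text{Test}(A_0;A_1(\vx'))\bigr)(\alpha) \geq f(\alpha)$, which is precisely $f$-MIP.

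The main obstacle is the joint-convexity step: converting the pointwise $(\mX,\vx)$ neighboring-dataset guarantee into a guarantee for the mixture distributions $A_0$ and $A_1(\vx')$. I would either cite the corresponding lemma from Dong et al.\ directly, or, if a self-contained argument is preferred, derive it from the data-processing inequality by viewing $A_0$ and $A_1(\vx')$ as outputs of a common channel applied to the latent variables $(\mX,\vx)$ and $\mX$ respectively, coupled through a shared source of randomness. Everything else (Jensen's inequality, manipulation of the stochastic composition definition) is routine.
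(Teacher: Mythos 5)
Your proposal is correct and follows essentially the same route as the paper: a pointwise reduction using the $f$-DP guarantee on the neighboring datasets $\mX \cup \{\vx\}$ and $\mX \cup \{\vx'\}$, followed by bounding the stochastic composition via convexity of $f$ and Jensen's inequality (which is exactly the paper's worst-case composition bound, \Cref{app_thm:worstcasestoasticcomp}, that you reprove inline). If anything, your explicit joint-convexity/mixture argument over $\vx$ and $\mX$ makes rigorous a step the paper only asserts informally (``the test with randomly sampled instances cannot be easier''), so it is a faithful, slightly more detailed rendering of the same proof.
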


We proof this result in \Cref{sec:proofmipweakerdp}.
This theorem suggests one intuitive, simple and yet actionable approach to guarantee Membership Inference Privacy.
This approach involves the use of DP learning algorithms such as DP-SGD \cite{abadi2016deep}, which train models using noised gradients.
However, as we will see in the next section, using noise levels to guarantee $f$-DP is usually suboptimal to guarantee $f$-MIP.

\section{Implementing $f$-MIP through Noisy SGD}
\label{sec:implementing_fmip}
We would now like to obtain a practical learning algorithm that comes with $f$-MIP guarantees.
As the dependency between the final model parameters and the input data is usually hard to characterize, we follow the common approach and trace the information flow from the data to the model parameters through the training process of stochastic gradient descent \citep{song2013stochastic,abadi2016deep}.
Since the gradient updates are the only path where information flows from the data into the model, it suffices to privatize this step.
%In this section, we conduct a dedicated analysis of the common SGD with respect to Membership Inference Privacy, which will reveal that, unlike DP, MIP can often be guaranteed without adding \emph{any} noise to the gradients. As the dependency of model parameters and the input data is usually hard to characterize, we follow the common approach and trace the information flow from the data through the training process of stochastic gradient descent \citep{song2013stochastic,abadi2016deep}.
% As the gradient updates are the only path where information flows from the data into the model, it suffices to privatize this step.

% \subsection{An Analytical Gradient Attack for One Step of Noisy SGD}
% \textcolor{Orange}{Add section where we derive closed-form attack for one step of DP-SGD.}
% \begin{enumerate}
% \item No training of hundreds of shadow models needed.
% \item Serves better as a privacy auditing tool for checking leakage of train data as it uses gradients to compose an attack.
% This is stronger as gradients wrt to loss are the source where information flows from the data to the model parameters.
% \item Potentially include pseudo-code.
% \end{enumerate}

\subsection{$f$-MIP for One Step of Noisy SGD}
We start by considering a single SGD step.
%We collect the target model's individual gradients $\vg(\vx_i) \cdot\max\big(1,\frac{C}{\lVert\vg(\vx_i, y_i)\rVert}\big) =\vtheta_i$ in a matrix $\mTheta = [\vtheta_1, \ldots, \vtheta_n]$, where $\vtheta_i \in \mathbb{R}^d$ and $\vtheta_i \sim P$ , where $P$ is any distribution with expected value $\vmu$ and covariance matrix $\mSigma$. 
Following prior work \citep{song2013stochastic,abadi2016deep}, we make the standard assumption that only the mean over the individual gradients $\vm = \frac{1}{n}\sum_{i=1}^n \vtheta_i$ is used to update the model (or is published directly) where $\vtheta_i \in \mathbb{R}^d$ is a sample gradient.
Consistent with the definition of the membership inference game, the attacker tries to predict whether a specific gradient $\vtheta^\prime$ was part of the set $\left\{\vtheta_i\right\}_i$ that was used to compute the model's mean gradient $\vm$ or not.
%, $b \in \{0,1\}$ is drawn uniformly at random. 
%If $b = 0$ a gradient is drawn from the training set $\vtheta' \sim \Theta$, else the vector is drawn from distribution of model gradients $\vtheta' \sim P$ and the gradient vector $\vtheta'$ is published.
%The adversary has access to $\vm$ and attempts to predict the value of $b$: $\mathcal{A}(\vm, \vtheta')=b'$ , i.e., whether $\vtheta'$ was in the training set or not.
% The following hypothesis test succinctly summarizes the adversary's problem in the this setting:
% \begin{align}
% &H_0: \vtheta' \text{ was drawn from } \mTheta &
% &H_1: \vtheta' \text{ was drawn from } P.
% \label{eq:hypothesis_test_exact_normal}
% \end{align}
We are interested in determining the shape of the attacker's trade-off function. 
%\textbf{One-step SGD and $g_{\mu}$ membership inference privacy.}
For the sake of conciseness, we directly consider one step of noisy SGD (i.e., one averaging operation with additional noising, see \Cref{alg:dpsgd} from the Appendix), which subsumes a result for the case without noise by setting $\tau^2=0$.
%To make the display of the following result less cumbersome we consider $\vmu=\mathbf{0}$ and refer to Appendix \ref{sec:app_onestep_dp_sgddatanoise} for the more general version of this result with $\vmu\neq\mathbf{0}$.
We establish the following theorem using the Central Limit Theorem (CLT) for means of adequately large batches of $n$ sample gradients, which is proven in \Cref{sec:app_onestep_dp_sgd}.
\begin{theorem}[One-step noisy SGD is $f$-membership inference private]
Denote the cumulative distribution function of the non-central chi-squared distribution with $d$ degrees of freedom and non-centrality parameter $\gamma$ by $ F_{\chi^2_d(\gamma)}$. 
Let the gradients $\vtheta^\prime \in \mathbb{R}^d$ of the test points follow a distribution with mean $\vmu$ and covariance $\mSigma$, let $K \geq \lVert \mSigma^{-1/2} \vtheta^\prime \rVert_2^2$ and define $n_{\text{effective}} = n + \frac{\tau^2 n^2}{C^2}$.
For sufficiently large batch sizes $n$, one step of noisy SGD is $f$-membership inference private with trade-off function given by:
\begin{align}
\beta(\alpha) \approx 1-F_{\chi^2_d((n_{\text{effective}}-1)K)}\left(\frac{n_{\text{effective}}}{n_{\text{effective}}-1} F_{\chi^2_d\left(n_{\text{effective}}K\right)}^{-1}(\alpha) \right).
\end{align}
\label{theorem:onestep_dp_sgd}
\end{theorem}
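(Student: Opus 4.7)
The plan is to reduce the MI testing problem for one noisy SGD step to a likelihood-ratio test between two multivariate Gaussians; the non-central chi-squared trade-off will then drop out from the quadratic form of the log-likelihood ratio. Fixing a test sample $\vtheta'$, I would first write the conditional distributions of $\tilde{\vm} = \tfrac{1}{n}\sum_{i=1}^n \vtheta_i + Y$ (with $Y\sim \mathcal{N}(\vzero,\tau^2 \mI)$) under each hypothesis: under $H_0$ the $n$ clipped sample gradients are i.i.d.\ with mean $\vmu$ and covariance $\mSigma$, while under $H_1$ one summand is fixed to $\vtheta'$ and the remaining $n-1$ are i.i.d. For sufficiently large $n$ the Central Limit Theorem yields the approximations
\begin{align*}
\tilde{\vm}\mid H_0 \,&\approx\, \mathcal{N}\bigl(\vmu,\; \tfrac{1}{n}\mSigma + \tau^2 \mI\bigr), \\
\tilde{\vm}\mid H_1 \,&\approx\, \mathcal{N}\bigl(\vmu + \tfrac{1}{n}(\vtheta'-\vmu),\; \tfrac{n-1}{n^2}\mSigma + \tau^2 \mI\bigr).
\end{align*}

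Second, I would recast the two covariances in terms of $n_{\text{eff}} = n + \tau^2 n^2/C^2$. Using the bound $\mSigma \preceq C^2 \mI$ implied by $\ell_2$-clipping at norm $C$ (saturated when gradients sit on the clipping boundary), the covariances collapse to $\sigma_0^2 \mI$ and $\sigma_1^2 \mI$ with $\sigma_0^2 = C^2 n_{\text{eff}}/n^2$ and $\sigma_1^2 = C^2 (n_{\text{eff}}-1)/n^2$, exhibiting the characteristic ratio $\sigma_0^2/\sigma_1^2 = n_{\text{eff}}/(n_{\text{eff}}-1)$. By the Neyman--Pearson lemma, the optimal test rejects for large log-likelihood ratio; because the two covariances differ, the log-LR is quadratic in $\tilde{\vm}$, and completing the square after setting $\vu = n\tilde{\vm}/C$ (with $\vmu=\vzero$ WLOG) reduces the rejection region to $\{\lVert \vu - n_{\text{eff}}\vtheta'/C\rVert^2 < c\}$ for a threshold $c$ to be calibrated to the FPR.

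Third, I would identify the two chi-squared distributions. Under $H_0$, $\vu \sim \mathcal{N}(\vzero,\, n_{\text{eff}}\mI)$, so $\lVert \vu - n_{\text{eff}}\vtheta'/C\rVert^2 / n_{\text{eff}}$ is non-central $\chi^2_d$ with non-centrality $n_{\text{eff}}\lVert\vtheta'\rVert^2/C^2$; under $H_1$, $\vu \sim \mathcal{N}(\vtheta'/C,\,(n_{\text{eff}}-1)\mI)$, so the same statistic divided by $n_{\text{eff}}-1$ is non-central $\chi^2_d$ with non-centrality $(n_{\text{eff}}-1)\lVert\vtheta'\rVert^2/C^2$. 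The general anisotropic case is handled by whitening through $\mSigma^{-1/2}$, which replaces the scalar $\lVert\vtheta'\rVert^2/C^2$ by the Mahalanobis norm $\lVert\mSigma^{-1/2}\vtheta'\rVert^2$, bounded above by $K$ by assumption. Matching FPR $=\alpha$ then fixes $c/n_{\text{eff}} = F_{\chi^2_d(n_{\text{eff}}K)}^{-1}(\alpha)$, and substituting into the type-II expression produces the claimed formula.

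To lift the per-$\vtheta'$ bound to $f$-MIP, I would invoke Theorem~\ref{theorem:stochastic_composition} together with Jensen's inequality: since each per-sample trade-off dominates the worst-case curve indexed by $K$ and trade-off functions are convex, the stochastic composition over $\vtheta' \sim \mathcal{D}$ dominates the same curve. I expect the main obstacle to be the anisotropic case, because the isotropic noise $\tau^2\mI$ does not co-whiten with a general $\mSigma$. The cleanest route is to reinterpret the additive Gaussian noise as contributing $\tau^2 n^2/C^2$ additional pseudo-gradients of worst-case isotropic covariance $C^2\mI$, which is precisely how the parameter $n_{\text{effective}}$ emerges and why the resulting covariances in the test reduce to scalar multiples of $\mI$. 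The CLT error hidden behind the ``$\approx$'' in the theorem statement should be controllable via a standard Berry--Esseen argument.
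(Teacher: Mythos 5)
The first half of your plan---the CLT approximation of $\vm$ under the two hypotheses, the Neyman--Pearson likelihood ratio with its quadratic statistic, the identification of non-central $\chi^2_d$ laws under $H_0$ and $H_1$, and the final reduction to the worst-case $K$ via the stochastic-composition bound---is essentially the paper's argument (Steps 1--5 in \Cref{sec:app_onestep_dp_sgd}, together with \Cref{app_thm:worstcasestoasticcomp}). The genuine gap is in the only step where $C$ and $n_{\text{effective}}$ actually enter: reconciling the anisotropic gradient covariance $\mSigma$ with the isotropic noise $\tau^2\mI$. Your two devices---(i) collapsing the covariances via $\mSigma \preceq C^2\mI$ and then ``whitening through $\mSigma^{-1/2}$'' to recover the Mahalanobis norm, and (ii) reinterpreting the noise as $\tau^2 n^2/C^2$ pseudo-gradients of covariance $C^2\mI$---are mutually inconsistent and neither constitutes a proof. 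If you whiten by $\mSigma^{-1/2}$, the noise becomes $\tau^2\mSigma^{-1}$, not a multiple of $\mI$ (the obstacle you yourself flag); if you instead replace the data covariance by $C^2\mI$, the non-centrality you obtain is $n_{\text{effective}}\lVert\vtheta'-\vmu\rVert^2/C^2$, not $n_{\text{effective}}K$, and the substitution changes the testing problem in a direction that can \emph{understate} the attacker's power: in an eigendirection with $\lambda_i \ll C^2$ the true per-dimension signal-to-noise ratio exceeds what the $C^2\mI$ surrogate suggests, so the surrogate's trade-off curve is not automatically a valid lower bound for the real test, and whether the larger Mahalanobis non-centrality compensates for this is exactly the nontrivial claim that needs proof. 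The pseudo-gradient picture is likewise only a mnemonic, since the covariance of a mean of $n+\tau^2n^2/C^2$ samples is not $\tfrac{1}{n}\mSigma+\tau^2\mI$.

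The paper closes this gap in two stages. First it analyzes data-dependent noise $Y\sim\mathcal{N}(\mathbf{0},\hat{\tau}^2\mSigma)$ (\Cref{sec:app_onestep_dp_sgddatanoise}), for which $\mSigma^{-1/2}$ co-whitens signal and noise, so the LRT is exactly the stated non-central $\chi^2_d$ test with $n+\hat{\tau}^2n^2$ playing the role of $n_{\text{effective}}$ and non-centrality proportional to $K$. It then passes to isotropic noise (\Cref{sec:app_onestep_dp_sgdunitnoise}) by diagonalizing $\mSigma=\mQ\mLambda\mQ^\top$, decomposing both tests into $d$ independent one-dimensional Gaussian tests, and proving by an explicit derivative computation (\Cref{sec:derivtest}) that each one-dimensional type-II error is monotonically increasing in the added noise; since clipping forces $\lambda_i\leq C^2$, isotropic noise $\tau^2$ provides at least as much relative noise per eigendirection as $\hat{\tau}^2\lambda_i$ with $\hat{\tau}^2=\tau^2/C^2$, and \Cref{app_lem:dimensioncomposition} lifts the dimension-wise comparison to the full $d$-dimensional test, which is what yields $n_{\text{effective}}=n+n^2\tau^2/C^2$. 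Some replacement for this comparison is indispensable; a shorter route you could have taken is a degradation argument---since $\tau^2\mI-\hat{\tau}^2\mSigma\succeq 0$, the isotropic-noise output is obtained from the data-dependent-noise output by adding independent Gaussian noise, so its trade-off is uniformly at least as hard---but as written your proposal supplies neither this nor the paper's monotonicity argument, so the appearance of $C$ in $n_{\text{effective}}$ and of the Mahalanobis $K$ in the same formula is not justified.
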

%For very few parameters $d$, the above tradeoff curve is distinct in its shape and cannot be converted to the $\epsilon$-DP tradeoff curve \cite{kairouz2015composition}.
% It can however be converted to $\epsilon,\delta$-Differential Privacy with a set of possible $(\epsilon,\delta)$-tuples.
%\begin{wrapfigure}[15]{l}{4.5cm}
%\vspace{-0.75cm}
\begin{wrapfigure}[13]{r}{0.33\textwidth}
\vspace{-0.8cm}
\centering
%\begin{subfigure}{0.27\textwidth}
%\centering
\includegraphics[scale=0.60]{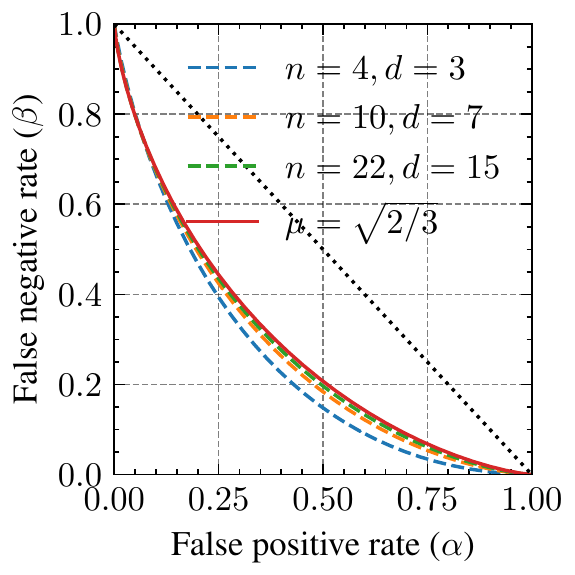}
\caption{\textbf{Trade-off function convergence.} The trade-off function from Theorem \ref{theorem:onestep_dp_sgd} converges to the one from Corollary~\ref{corollary:onestep_dp_sgd} where $\tau^2{=}0$ and $K{=}d$.}
\label{fig:vary_n_d}
%\end{subfigure}
% \hfill
% \begin{subfigure}{0.27\textwidth}
% \centering
% \includegraphics[scale=0.51]{}
% \caption{Influence of $d$ on the analytical tradeoff curve.}
% \label{fig:vary_d}
% \end{subfigure}
% \hfill
% \begin{subfigure}{0.42\textwidth}
% \centering
% \includegraphics[scale=0.50]{}
% \caption{Influence of noise $\tau^2$ on the analytical privacy level $\mu$ from Corollary \ref{corollary:onestep_dp_sgd}.}
% \label{fig:impact_noise}
% \end{subfigure}
%\caption{\textbf{Illustrating our theoretical results with $n{=}1024$, $\tau^2{=}0$ and $K{=}d$ unless stated o/w}.}
\end{wrapfigure}
The larger the number of parameters $d$ and the batch size $n$ grow, the more the trade-off curve approaches the $\mu$-GMIP curve, which we show next (see Figure \ref{fig:vary_n_d}).
\begin{corollary}[One step noisy SGD is approx.\ $\mu$-GMIP]
For large $d,n$, noisy SGD is approximately $g_{\mu_{\text{Step}}}$-GMIP. In particular, $\beta(\alpha) \approx  \Phi(\Phi^{-1}(1-\alpha)-\mu_{\text{Step}})$ with privacy parameter:
\begin{align}
\mu_{\text{step}} = \frac{d + (2 n_{\text{effective}} - 1) K }{n_{\text{effective}} \sqrt{2d + 4 n_\text{effective} K}}.
\end{align} 
\label{corollary:onestep_dp_sgd}
\end{corollary}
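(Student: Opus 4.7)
The plan is to start from the exact (asymptotic) trade-off in Theorem \ref{theorem:onestep_dp_sgd} and apply a normal approximation to both non-central chi-squared CDFs appearing there. Recall that a $\chi^2_d(\gamma)$ random variable has mean $d+\gamma$ and variance $2d+4\gamma$; a standard consequence of the central limit theorem is that, for $d$ large or $\gamma$ large, $\chi^2_d(\gamma) \approx \mathcal{N}(d+\gamma,\, 2d+4\gamma)$. This gives both the CDF approximation
\begin{equation*}
F_{\chi^2_d(\gamma)}(y) \;\approx\; \Phi\!\left(\frac{y-(d+\gamma)}{\sqrt{2d+4\gamma}}\right)
\end{equation*}
and, by inversion, $F_{\chi^2_d(\gamma)}^{-1}(\alpha) \approx (d+\gamma) + \sqrt{2d+4\gamma}\,\Phi^{-1}(\alpha)$. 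Since the non-centrality parameters $n_{\text{effective}}K$ and $(n_{\text{effective}}-1)K$ both diverge with $n_{\text{effective}}$, this approximation is precisely the regime we want.

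Next I would substitute the two approximations into the expression of Theorem \ref{theorem:onestep_dp_sgd} with $\gamma_1 = n_{\text{effective}}K$ (inside the inverse) and $\gamma_2 = (n_{\text{effective}}-1)K$ (inside the outer CDF). Writing $y = \tfrac{n_{\text{effective}}}{n_{\text{effective}}-1}\bigl[(d+n_{\text{effective}}K) + \sqrt{2d+4n_{\text{effective}}K}\,\Phi^{-1}(\alpha)\bigr]$, the argument of the outer $\Phi$ becomes
\begin{equation*}
\frac{y-(d+(n_{\text{effective}}-1)K)}{\sqrt{2d+4(n_{\text{effective}}-1)K}}.
\end{equation*}
I would then simplify the constant part of the numerator by a direct algebraic manipulation:
\begin{equation*}
\frac{n_{\text{effective}}(d+n_{\text{effective}}K)}{n_{\text{effective}}-1} - \bigl(d+(n_{\text{effective}}-1)K\bigr) \;=\; \frac{d + (2n_{\text{effective}}-1)K}{n_{\text{effective}}-1}.
\end{equation*}
The coefficient of $\Phi^{-1}(\alpha)$ is $\tfrac{n_{\text{effective}}}{n_{\text{effective}}-1}\cdot\sqrt{(2d+4n_{\text{effective}}K)/(2d+4(n_{\text{effective}}-1)K)}$, which tends to $1$ as $n_{\text{effective}}\to\infty$, and in the denominator $\sqrt{2d+4(n_{\text{effective}}-1)K}$ can be replaced by $\sqrt{2d+4n_{\text{effective}}K}$ up to the same vanishing factor. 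Similarly $1/(n_{\text{effective}}-1)$ is replaced by $1/n_{\text{effective}}$ at leading order, yielding the claimed
\begin{equation*}
\mu_{\text{step}} \;=\; \frac{d+(2n_{\text{effective}}-1)K}{n_{\text{effective}}\sqrt{2d+4n_{\text{effective}}K}}.
\end{equation*}

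Finally, I would close the argument with the standard reflection identities $1-\Phi(x)=\Phi(-x)$ and $\Phi^{-1}(1-\alpha)=-\Phi^{-1}(\alpha)$, which convert $1-\Phi(\mu_{\text{step}}+\Phi^{-1}(\alpha))$ into the desired Gaussian trade-off $\Phi(\Phi^{-1}(1-\alpha)-\mu_{\text{step}})=g_{\mu_{\text{step}}}(\alpha)$. The main obstacle is really the justification (and control) of the Gaussian approximation to $\chi^2_d(\gamma)$ when both the location and the scale depend on $n_{\text{effective}}$ and $d$; one has to argue that the approximation is uniform enough in $\alpha$ that the \emph{composition} of the inverse CDF and the CDF remains close to a Gaussian trade-off, rather than only pointwise at fixed quantiles. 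This can be handled either by invoking a Berry--Esseen bound for the chi-squared (e.g., via representing it as a sum of independent squared normals) or by a direct delta-method style expansion around the mean, after which the slope converging to $1$ and the intercept converging to $\mu_{\text{step}}$ are the only asymptotic facts needed.
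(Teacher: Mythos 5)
Your proposal is correct and follows essentially the same route as the paper: the paper's Step 6 likewise approximates the non-central chi-squared by $\mathcal{N}(d+\gamma,\,2d+4\gamma)$ in both the CDF and its inverse, simplifies the resulting affine expression in $\Phi^{-1}(\alpha)$, drops the factors that tend to $1$ for large $n_{\text{effective}}$, and reads off $\mu_{\text{step}}$ (the paper merely carries out this algebra on the equivalent form with $\gamma_0,\gamma_1$ and hypotheses interchanged before substituting $K$, relying on the symmetry of the Gaussian trade-off). Your closing reflection identities and your remark on uniformity of the normal approximation are consistent with, and slightly more careful than, the paper's asymptotic treatment.
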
 
This result is striking in its generality as it also covers models trained without additional noise or gradient cropping ($n_{\text{effective}}=n$ in that case). Unlike for DP, even standard models trained with non-noisy SGD offer an elementary level of MIP. Our result further explicitly quantifies four factors that lead to attack success: the batch size $n$, the number of parameters $d$, the strength of the noise $\tau^2$ and the worst-case data-dependent \emph{gradient susceptibility} $\lVert \mSigma^{-1/2} \vtheta^\prime \rVert_2^2$.
% Intuitively, the ratio of $d$ to $n$ determines how close the tradeoff function is to the diagonal, which is equivalent to having the adversary randomly guess whether a gradient $\vtheta'$ was part of the training data or not. 
% The higher $d$ is relative to $n$, the easier it is for the adversary to identify training data points (see Figure \ref{fig:vary_d}). 
% Further, the higher the gradient susceptibility $K$, which is a measure of the atypicality of a gradient with respect to the gradient distribution, the more likely will membership inference attacks succeed in identifying training data membership. 
% Further, note that if we do not wish to restrict the gradient distribution or its support then there might always exist gradient samples that distort the mean so drastically that their membership in the training dataset will be given away.
% This is similar to the $\delta$ parameter in differential privacy, which also permits certain exceptions for exceedingly unlikely events. However, we are able to make the following two remarks about the magnitude of the gradient susceptibility term $K$.
The closeness of the trade-off function to the diagonal, which is equivalent to the attacker randomly guessing whether a gradient $\vtheta'$ was part of the training data or not, is majorly determined by the ratio of $d$ to $n$. 
The higher the value of $d$ relative to $n$, the easier it becomes for the attacker to identify training data points. %, as depicted in Figure \ref{fig:vary_d}. 
Furthermore, a higher gradient susceptibility $K$, which measures the atypicality of a gradient with respect to the gradient distribution, increases the likelihood of MI attacks succeeding in identifying training data membership. 
It is worth noting that if we do not restrict the gradient distribution or its support, then there might always exist gradient samples that significantly distort the mean, revealing their membership in the training dataset.
This phenomenon is akin to the $\delta$ parameter in DP, which also allows exceptions for highly improbable events. 
% Nonetheless, we can make the following two observations regarding the magnitude of the gradient susceptibility term $K$.
% In the usual SGD-regime, where the number of parameters $d$ is much larger than the batch size $n$ (i.e., $d \gg n, K$) we obtain $\mu \in \mathcal{O}\left(\frac{\sqrt{2d}}{n_{\text{effective}}}\right)$.
% On the other hand, in the regime of large batch sizes (i.e., $n \gg K, d$) we obtain $\mu \in \mathcal{O}\left(\sqrt{\frac{K}{n_{\text{effective}}}} \right)$.
% In the usual SGD-regime we have that $d \gg n, K$; and therefore we obtain $\mu_{\text{MIP}}\in \mathcal{O}\left(\frac{\sqrt{2d}}{n}\right)$.
% On the other hand, in the regime of large batch sizes we have that $n \gg K, d$; and therefore we obtain $\mu_{\text{MIP}} \in \mathcal{O}\left(\sqrt{\frac{K}{n}} \right)$.
% Finally, note that a higher $K$ always leads to less privacy:
% \begin{align}
%  \frac{\partial \mu_{\text{MIP}}}{\partial K} > 0, ~\forall n \geq 1, k \geq 0.
% \end{align}
\begin{remark}[Magnitude of $\mu_{\text{Step}}$]
When the dimensions of the uncorrelated components in $\mSigma^{-\frac{1}{2}}\vtheta'$ are also independent, we expect $K$ to follow a $\chi^2$-distribution with $d$ degrees of freedom and thus $K \in \mathcal{O}(d)$. 
In the standard SGD-regime ($\tau^2=0$) with $d, n \gg 1$, we obtain $\mu \in \mathcal{O}\big(\sqrt{{d}/{n}}\big)$.
% In the usual SGD-regime, where the number of parameters $d$ is much larger than the batch size $n$ (i.e., $d \gg n, K$) we obtain $\mu \in \mathcal{O}\big(\frac{\sqrt{2d}}{n_{\text{effective}}}\big)$.
% On the other hand, in the regime of large batch sizes (i.e., $n \gg K, d$) we obtain $\mu \in \mathcal{O}\big(\sqrt{\frac{K}{n_{\text{effective}}}} \big)$.
\label{remark:convergence_mu_mip}
\end{remark}
\begin{remark}[On Optimality]
The dependency on $d$ when $\tau^2 > 0$ is a consequence of our intentionally broad proving strategy. 
Our proof approach consists of two key steps: First, we establish the optimal LRT under general gradient distributions, without adding noise or imposing any cropping constraints (See Appendix \ref{sec:app_onestep_dp_sgddatanoise}). This initial step serves as the foundation for our subsequent analysis and is (1) as general as possible covering all distributions with finite variance and is (2) optimal in the sense of the Neyman-Pearson Lemma, i.e., it cannot be improved. 
This means that our result covers all models trained with standard SGD ($\tau^2=0 \text{ and } C = \infty$) and is remarkable in its generality as it is the first to suggest clear conditions when adding noise is not required to reach $f$-MIP.
Second, we specialize our findings to cropped random variables with added noise (See Appendix \ref{sec:app_onestep_dp_sgdunitnoise}). This analysis could potentially be improved by considering individual gradient dimensions independently.
% We offer the following intuitive rationale: Since variance of noise is considered fixed across dimensions, the introduction of additional dimensions naturally leads to an overall increase in norms, approx. on the order of . It's important to note that the dependence on $d$ may thus be removed 
%through full incorporation of the fixed cropping threshold as is common in DP.
%We leave this improvement for future work as it requires a substantially more complicated analysis
\end{remark}

\subsection{Composition and Subsampling}
\label{sec:composition}
In the previous section, we have derived the trade-off function for a single step of SGD. 
Since SGD is run over multiple rounds, we require an understanding of how the individual trade-off functions can be composed when a sequence of $f$-MIP operations is conducted, and a random subset of the entire data distribution is used as an input for the privatized algorithm.
The next lemma provides such a result for $\mu$-GMIP and follows from a result that holds for hypotheses tests between Gaussian random variables due to \citet{dong2022gaussian} (see \Cref{sec:app_compositionresults} for details and more results).
% \begin{lemma}[Composition theorem for $\mu$-MIP]
% The $n$-fold composition of $\mu_i$-MIP mechanisms is $\sqrt{\mu_1^2 + \ldots + \mu_n^2}$-MIP.
% \end{lemma}
% \begin{proof}
% \citet{dong2022gaussian}, Corollary 3.3.
% \end{proof} 

% Thus, when performing $r$ iterations of DP-SGP with a target of $\mu_{\text{total}}$, the individual steps require a privacy level of $\frac{\mu_\text{total}}{\sqrt{r}}$ because 
% $\sqrt{r\left(\frac{\mu_{\text{total}}}{\sqrt{r}}\right)^2} = \sqrt{r\frac{\mu_{\text{total}}^2}{r}} = \mu_\text{total}$.

\begin{lemma}[Asymptotic convergence of infinite DP-SGD]
Let $n$ be the batch size in SGD, and $N$ be the entire size of the dataset. If a single SGD-Step is at least as hard as $\mu_{\text{step}}$-GMIP with respect to the samples that were part of the batch and $\frac{n\sqrt{t}}{N} \rightarrow c$ as $\lim_{t\rightarrow \infty}$ (the batch size is gradually decreased), then the noisy SGD algorithm will be $\mu$-GMIP with \label{lem:composition} 
\begin{align}
    \mu = \sqrt{2} c\sqrt{\exp(\mu_{\text{step}}^2) \Phi\left(1.5 \mu_{\text{step}}\right) + 3\Phi\left(-0.5 \mu_{\text{step}}\right)-2}.\label{eqn:dongfactor}
\end{align}
\vspace{-0.35cm}
\end{lemma}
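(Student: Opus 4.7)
The plan is to reduce this statement to the asymptotic subsampled-Gaussian composition result of \citet{dong2022gaussian} (their Corollary 5.5), suitably ported to the $f$-MIP setting. Two ingredients are required: a subsampling amplification for $f$-MIP, and a central-limit-type theorem for the $t$-fold composition of trade-off functions, both of which exist on the $f$-DP side and need to be translated to $f$-MIP.

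First I would cast one step of noisy SGD, viewed from a candidate sample $\vx'$, as a subsampled mechanism. Each SGD step draws a batch of size $n$ out of the $N$ available points, so $\vx'$ is included in the batch with probability $p = n/N$; conditional on exclusion, the output distributions under $H_0$ and $H_1$ coincide and contribute only the trivial trade-off $\mathrm{Id}(\alpha) = 1-\alpha$, while conditional on inclusion the hypothesis gives a trade-off at least as hard as $g_{\mu_{\text{step}}}$. Applying the $p$-subsampling operator $C_p$ of \citet{dong2022gaussian}, which mixes the two cases via a $\mathrm{Bernoulli}(p)$, one step of noisy SGD is at least as hard as $C_p(g_{\mu_{\text{step}}})$-MIP. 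I would then invoke the $f$-MIP composition theorem (the analogue of $f$-DP composition, detailed in \Cref{sec:app_compositionresults}) across the $t$ SGD rounds to obtain a trade-off at least as hard as $C_p(g_{\mu_{\text{step}}})^{\otimes t}$. Finally, the trade-off CLT of \citet[Thm.~3.4]{dong2022gaussian} shows that, under the scaling $p\sqrt{t}\to c$, this composition converges to a Gaussian trade-off $g_\mu$ with
\begin{align*}
\mu^2 \;=\; 2 c^2 \Bigl( e^{\mu_{\text{step}}^2}\,\Phi(1.5\,\mu_{\text{step}}) \;+\; 3\,\Phi(-0.5\,\mu_{\text{step}}) \;-\; 2 \Bigr),
\end{align*}
which is precisely the subsampled-Gaussian asymptotic limit of Corollary 5.5 in \citet{dong2022gaussian}; taking a square root yields \eqref{eqn:dongfactor}.

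The main obstacle, and what keeps the argument from being a pure citation, is the first step: porting the subsampling amplification lemma from the $f$-DP threat model (neighboring datasets) to the $f$-MIP threat model (stochastic composition over $\vx'\sim\mathcal{D}$). Concretely, I must verify that for each fixed $\vx'$ the kernels $A_0$ and $A_1(\vx')$ produced by batched SGD are themselves $\mathrm{Bernoulli}(p)$-mixtures of the included/excluded marginals, that the inclusion event is independent of the remaining batch randomness, and that the resulting per-sample trade-off is pointwise dominated by $C_p(g_{\mu_{\text{step}}})$ \emph{before} the outer stochastic composition of \Cref{def:stochastic_composition} is applied. Monotonicity of $C_p$ and of $\otimes$ with respect to the order $\geq$ on trade-off functions then propagates the bound through the stochastic composition. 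Once this amplification step is pinned down, the hypotheses of the trade-off CLT (finite $\chi^2$-divergence of the log-likelihood ratio and uniform smoothness near the diagonal) are automatic for the Gaussian trade-off, and the remainder reduces to a direct invocation of \citet{dong2022gaussian}.
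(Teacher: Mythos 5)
Your proposal is correct and follows essentially the same route as the paper: the appendix proof likewise bounds the per-$\vx'$ trade-off of each subsampled step, applies Dong et al.'s subsampling/composition/CLT machinery to each fixed $\vx'$ (substituting $\mu_{\text{step}}$ for $1/\sigma$ in their asymptotic DP-SGD corollary), and then pushes the bound through the outer stochastic composition via the worst-case lemma (\Cref{app_thm:worstcasestoasticcomp}), which is exactly the "monotonicity before the outer composition" step you flag as the main obstacle. So the porting issue you identify is resolved in the paper precisely as you propose.
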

Note that this result also provides a (loose) bound for the case where exactly $T$ iterations are run with a batch size of $n^\prime$ with $c=\frac{n^\prime\sqrt{T}}{N}$ (through using $n(t)=n^\prime~\text{if}~t\leq T,~\text{else}~n(t) = \frac{n^\prime\sqrt{T}}{\sqrt{t}}$).
With this result in place, we can defend against MI attacks using the standard noisy SGD algorithm.

\section{Experimental Evaluation}
\textbf{Datasets and Models.} We use three datasets that were previously used in works on privacy risks of ML models \cite{shorki2021explanation}: The CIFAR-10 dataset which consists of 60k small images \citep{krizhevsky2009learning}, the Purchase tabular classification dataset \citep{nasr2018MembershipInferenceAdvers} and the Adult income classification dataset from the UCI machine learning repository \citep{Dua2019uci}. 
Following prior work by \citet{abadi2016deep}, we use a model pretrained on CIFAR-100 and finetune the last layer on CIFAR-10 using a ResNet-56 model for this task \citep{he2016deep} where the number of fine-tuned parameters equals $d=650$. 
We follow a similar strategy on the Purchase dataset, where we use a three-layer neural network. 
For finetuning, we use the 20 most common classes and $d=2580$ parameters while the model is pretrained on 80 classes.
On the adult dataset, we use a two-layer network with 512 random features in the first layer trained from scratch on the dataset such that $d=1026$.
We refer to \Cref{sec:app_modeltraining} for additional training details.  We release our code online.\footnote{\url{https://github.com/tleemann/gaussian_mip}}

\subsection{Gradient Attacks Based on the Analytical LRT}

\begin{wrapfigure}[19]{r}{9cm}
\vspace{-0.85em}
\small\rule{\linewidth}{0.4pt}\vspace{-0.7em}
\captionof{algorithm}{Gradient Likelihood Ratio (GLiR) Attack\label{alg:glir}}\vspace{-0.7em}
\small\rule{\linewidth}{0.4pt}
\begin{algorithmic}[1]
\Require Training data distribution $\mathcal{D}$, batch size $n$, number of parameters $d$, query point $\bm{x} \in D$, averaged gradients of each batch $\vm_t \in \mathbb{R}^{d}$ for training steps $t=1,\ldots, T$, parameter gradient computation function $\nabla_{\vw_t} \mathcal{L}: D \rightarrow \mathbb{R}^{d}$ of training, threshold $\eta$
\State $p_{\text{total}} \leftarrow 0$
\For{$t = 1,\dots,T$}
    \State $B=\left\{\vb_1, \ldots, \vb_m\right\}\sim D^m$ \Comment  Sample background data
    \State $\vg_i = \nabla_{\vw}\mathcal{L}(\vb_i), i=1...m$ \Comment Compute background gradients
    \State $\hat{\bm{\Sigma}} = \text{Cov}\left\{\vg_1, ..., \vg_m \right\} \in \mathbb{R}^{d \times d}$ \Comment Approximate covariance $\bm{\Sigma}$
    \State $\hat{\bm{\mu}} = \text{Mean}\left\{\vg_1, ..., \vg_m \right\} \in \mathbb{R}^{d}$ \Comment Approximate mean $\bm{\mu}$
    \State $\bm{\theta} = \nabla_{w_t}\mathcal{L}(\vx)$ \Comment Compute  gradients for the query point 
    \State  $\hat{S}=(n{-}1)\left(\vm_t{-}\bm{\theta}\right)^\top\hat{\bm{\Sigma}}^{-1}\left(\vm_t{-}\bm{\theta}\right)$ \Comment Compute test statistic
    \State  $\hat{K} =  \lVert \hat{\mSigma}^{-1/2} (\vtheta-\hat{\vmu})\rVert_2^2$ \Comment Estimate gradient susceptibility
    \State $p_{\text{step}}=\log \text{F}^{-1}_{\chi^2_d( n\hat{K})}(\hat{S})$ \Comment Compute $\log p$-value under $H_0$
    \State $p_{\text{total}} \leftarrow p_{\text{total}} +p_{\text{step}}$
\EndFor\\
\Return \texttt{Train} \textbf{if} $p_{\text{total}} < \eta$, \textbf{else} \texttt{Test}
\label{alg:gradientattack}
\end{algorithmic}
\vspace{-0.7em}
\rule{\linewidth}{0.4pt}
\end{wrapfigure} 

To confirm our theoretical analysis for one step of SGD and its composition, we implement the gradient attack based on the likelihood ratio test derived in the proof of \Cref{theorem:onestep_dp_sgd}. We provide a sketch of the implementation in \Cref{alg:glir} and additional details in \Cref{sec:app_gradientlrtattack}.
An essential requirement in the construction of the empirical test is the estimation of the true gradient mean $\vmu$ and the true inverse covariance matrix $\mSigma^{-1}$ since these quantities are essential parts of both the test statistic $\hat{S}$ and the true gradient susceptibility term $\hat{K}$
needed for the analytical attack.
The attacker uses their access to the gradient distribution (which is standard for membership inference attacks \citep{carlini2021membership,pawelczyk2022privacy} and realistic in federated learning scenarios \citep{kairouz2021advances}), to estimate the distribution parameters. 
In practice, however, the empirical estimates of $\hat{\vmu}$, $\hat{\mSigma}^{-1}$ and thus $\hat{K}$ will be noisy and therefore we do not expect that the empirical trade-off curves match the analytical curves exactly.

Using our novel Gradient Likelihood Ratio (GLiR) attack we can audit our derived guarantees and their utility.
First, we audit our one-step guarantees from \Cref{{theorem:onestep_dp_sgd}}. To compare the models, we adapt the batch size $n$ such that all models reach the same level of $\mu$-GMIP.
In \Cref{fig:gradient1}, we use a simulated gradient distribution with known parameters $\vmu, \mSigma^{-1}$ and $d$. 
In this case, we can estimate $K$ accurately and observe that our bounds are tight when the distribution parameters and thus the respective gradient susceptibilities can be computed accurately.
We provide additional ablation studies that gauge the approximation quality of  with small values for $d$, $n$ and different simulated distributions in \Cref{sec:app_ablations}.
When the parameters are unknown and we have to estimate the parameters, our attacks become weaker and do not match the analytical prediction (see \Cref{fig:gradient2}). 
% and estimates of the gradient distributions of the real models trained respectively.
% We observe that our bounds are tight when the distribution parameters and thus the respective gradient susceptibilities are known. 
% We use the estimate $K=d$ to obtain the analytical bound in all Figures for this experiment, which models reality well. 
%Under estimated parameters, the attacks become weaker (\Cref{fig:gradient2}).

We also audit our composition guarantees.
We do five SGD-steps in \Cref{fig:gradient3}. 
While there is a small gain in attack performance on the CIFAR-10 dataset (e.g., at FPR=0.25), the attack performance on the other datasets remains largely unaffected.
This mismatch occurs since the theoretical analysis is based on the premise that the attacker gains access to independently sampled gradient means for each step to separate training and non-training points, but in practice we do not gain much new information as the model updates are not statistically independent and too incremental to change the gradient means significantly between two subsequent steps.
Therefore, a practical attacker does not gain much additional information through performing several steps instead of one. 
Future work is required to model these dependencies and potentially arrive at a tighter composition result under incremental parameter updates. We provide results for additional existing membership inference attacks, for instance the recent loss-based likelihood-ratio attack by \citet{carlini2021membership} in \Cref{sec:app_additionalattacks}, which all show weaker success rates than the gradient-based attack that proved most powerful in our setting.

\begin{figure}[t!]
\centering
\begin{subfigure}{0.32\textwidth}
\centering
\includegraphics[width=\linewidth]{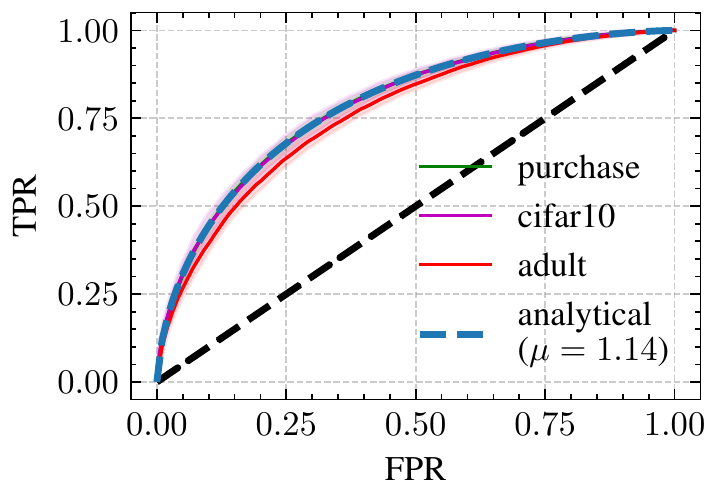}
\caption{Single step of simulated gradient distribution with known parameters.}
\label{fig:gradient1}
\end{subfigure}
\hfill
\begin{subfigure}{0.32\textwidth}
\centering
\includegraphics[width=\linewidth]{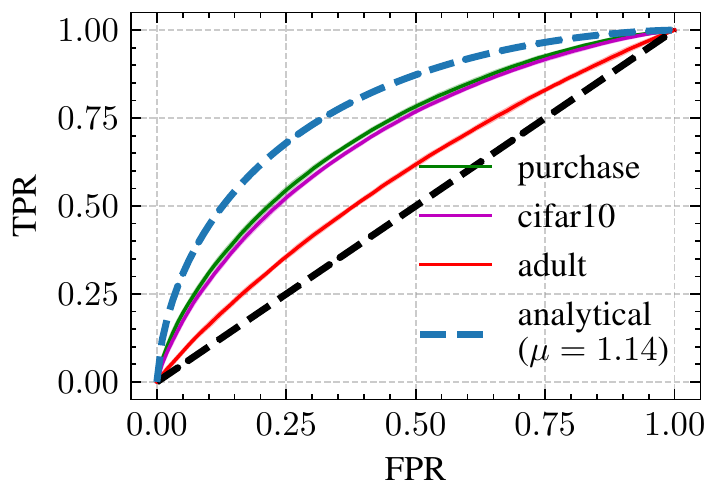}
\caption{Single step with real model gradients and estimated parameters.}
\label{fig:gradient2}
\end{subfigure}
\hfill
\begin{subfigure}{0.32\textwidth}
\centering
\includegraphics[width=\linewidth]{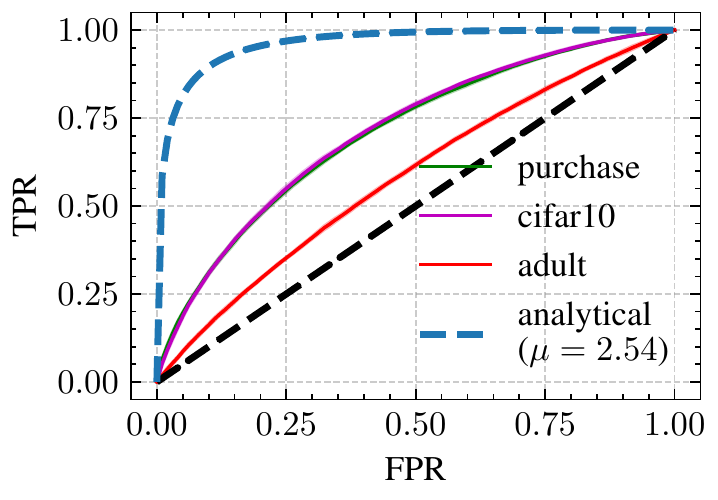}
\caption{As in (b), but now composition of 5 steps for real model gradients.}
\label{fig:gradient3}
\end{subfigure}
\caption{\textbf{Auditing $f$-MIP with our gradient attack (GLiR) when $\tau^2=0$}. We show trade-off curves when the gradient distribution is known (a) and when the gradients are obtained from a trained model that was finetuned on various data sets (b, c). The analytical solutions are computed with a value of $K=d$ and using the composition result for $k$ steps in \Cref{sec:app_compositionresults} for (c).}
\vspace{-0.25cm}
\label{fig:verification_experiment}
\end{figure}

\subsection{Comparing Model Utility under $\mu$-GDP and $\mu$-GMIP}
Here we compare the utility under our privacy notion to the utility under differential privacy. 
We sample 20 different privacy levels ranging from $\mu\in [0.4, ... ,50]$ and calibrate the noise in the SGD iteration to reach the desired value of $\mu$. 
We can do so both for $\mu$-GMIP using the result in \Cref{eqn:dongfactor} and using the result by \citet[Corollary 4]{dong2022gaussian} for $\mu$-GDP, which result in the same attack success rates while $\mu$-GDP allows for stronger privacy threat models. Due to \Cref{thm:mipweakerthandp}, we never need to add more noise for $\mu$-GMIP than for $\mu$-DP. Further details are provided in \Cref{sec:app_modeltraining}.
\Cref{fig:utility} shows a comparison of the accuracy that the models obtain. 
We observe that the model under GMIP results in significantly higher accuracy for most values of $\mu$. 
As $\mu\rightarrow 0$ both privacy notions require excessive amounts of noise such that the utility decreases towards the random guessing accuracy.
On the other hand, for higher values of $\mu$, there is no need to add any noise to the gradient to obtain $\mu$-GMIP, allowing to obtain the full utility of the unconstrained model.
This indicates that useful GMIP-bounds do not necessarily require noise. 
For instance, on the CIFAR-10 model, no noise is required for $\mu \geq 0.86$ which is a reasonable privacy level \citep{dong2022gaussian}. 
Overall, these results highlight that useful and interpretable privacy guarantees can often be obtained without sacrificing utility.

\begin{figure}[tb!]
\centering
\begin{subfigure}{0.32\textwidth}
\centering
\includegraphics[width=\textwidth]{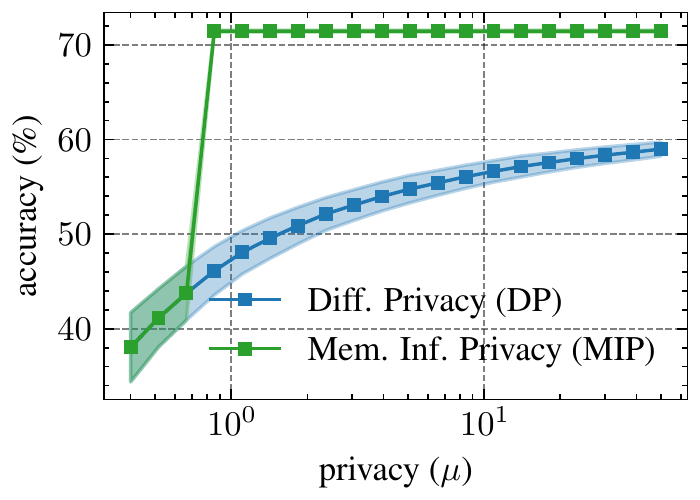}
\caption{CIFAR-10}
\label{fig:util1}
\end{subfigure}
\hfill
\begin{subfigure}{0.32\textwidth}
\centering
\includegraphics[width=\textwidth]{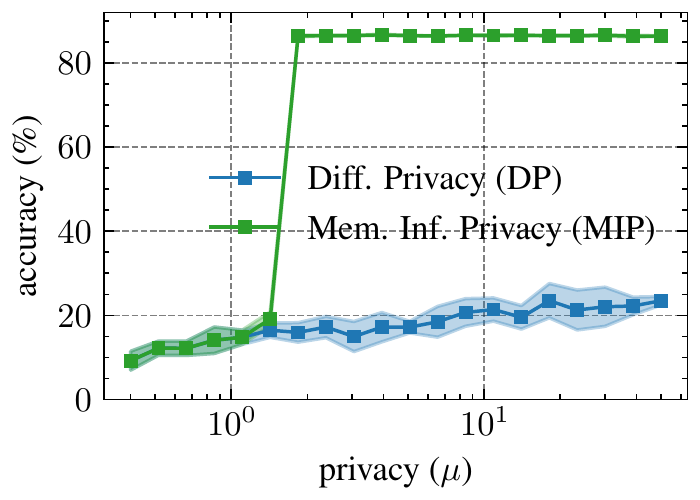}
\caption{Purchase}
\label{fig:util2}
\end{subfigure}
\hfill
\begin{subfigure}{0.32\textwidth}
\centering
\includegraphics[width=\textwidth]{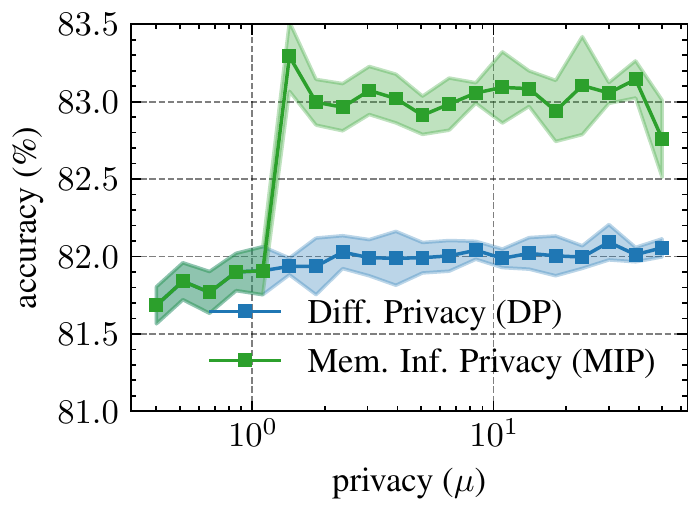}
\caption{Adult}
\label{fig:util3}
\end{subfigure}
\caption{\textbf{Utility of DP versus MIP.} Model performance on three datasets across different privacy levels $\mu$ (small $\mu$ denotes high privacy) using the notions of $\mu$-Gaussian Differential Privacy (parametric form of $f$-DP, \cite{dong2022gaussian}) and $\mu$-Gaussian Membership Inference Privacy (parametric form of $f$-MIP, ours) on three datasets. GMIP usually allows for substantially increased accuracy over the corresponding GDP guarantee with the same attack success rates controlled by $\mu$. However, the attacker under GMIP runs membership inference (MI) attacks while GDP allows for a wider set of privacy threat models. For more details on differences in the underlying threat models see \Cref{tab:dp_mip_comparison}.
\label{fig:utility}}
\vspace{-0.62cm}
\end{figure}

\section{Conclusion and Future Work}
In the present work, we derived the general notion of $f$-Membership Inference Privacy ($f$-MIP) by taking a hypothesis testing perspective on membership inference attacks.
We then studied the noisy SGD algorithm as a model-agnostic tool to implement $f$-Membership Inference Privacy, while maintaining Differential Privacy (DP) as a worst-case guarantee. 
Our analysis revealed that significantly less noise may be required to obtain $f$-MIP compared to DP resulting in increased utility.
Future work is required to better model the dependencies when composing subsequent SGD steps which could lead to improved bounds in practice. 
Furthermore, our analysis shows that when the capacity of the attacker is further restricted, e.g., to API access of predictions, there remains a gap between our theoretical bounds and loss-based membership inference attacks that can be implemented for real models. 
More work is required to either produce more sophisticated attacks or derive theoretical bounds for even less powerful attackers to close this gap.

\bibliographystyle{neurips2023}
\bibliography{neurips_2023}

\newpage
\appendix
\section{Algorithms}
\label{app:algorithm}
\textbf{Reviewing Noisy SGD.}
Noisy SGD, also known as DP-SGD when appropriately parameterized, is the most prevalent algorithm to train differentiable machine learning models subject to DP privacy constraints. 
In the main text, we have shown that this algorithm, when appropriately parameterized, can be used to train $f$-MIP models, too.
Since our gradient attack relies on the inner workings of the algorithm, we review it here for the reader's convenience.
DP-SGD works by clipping the individual gradients in each batch, taking the mean over the these gradients in a batch, and finally adding noise of magnitude $\tau$ to them.  This process is then iterated over $T$ epochs. Pseudo code is shown in Algorithm \ref{alg:dpsgd}.

We note that the parametrization of the noise level is different across recent works. While $\tau$ corresponds to the noise added to the entire batch, other works such as \cite{abadi2016deep, dong2022gaussian} use different parameters to characterize the noise level. For instance, \citet{dong2022gaussian} add noise of magnitude $\tau^2 = \frac{4\sigma^2C^2}{n^2}$ and use $\sigma$ to characterize the noise level.

\begin{algorithm}[htb]
\caption{Noisy Stochastic Gradient Descent (Noisy SGD)}
\begin{algorithmic}[1]
\Require Training data $D = \{(\vx_i,y_i)\}_{i=1}^N$, loss function $\mathcal{L}$, learning rate $\eta$, batch size $n$, number of iterations $T$, gradient norm bound $C \in \mathbb{R}_+$, noise scale $\tau \in \mathbb{R}_+$
\State Initialize model parameters $\vtheta$ randomly
\For{$t = 1,\dots,T$}
    \State Sample a batch $B_t$ of size $n$ uniformly at random from $D$
    \State \textbf{Compute gradients}
    \State For each $(\vx_i, y_i) \in B_t$ compute $\vg(\vx_i, y_i) =\nabla \mathcal{L}(\vtheta,\vx_i, y_i)$
    \State \textbf{Clip gradients} (to have norm at most $C$)
    \State $\vg(\vx_i, y_i) \gets \vg(\vx_i, y_i)\cdot\max\left(1,\frac{C}{\lVert\vg(\vx_i, y_i)\rVert}\right), (\vx_i, y_i) \in B_t$ 
    \State \textbf{Aggregate and noise gradients}
    \State $\tilde{\vg} \gets \left(\frac{1}{n} \sum_i \vg(\vx_i, y_i) \right) + \mathcal{N}\left(\mathbf{0}, \tau^2\mI\right)$
    \State \textbf{Update parameters} 
    \State $\vtheta \gets \vtheta - \eta \tilde{\vg}$
\EndFor
\State Return $\vtheta$
\end{algorithmic}
\label{alg:dpsgd}
\end{algorithm}

\section{Additional Information on Related Work}
\label{app:add_details_related_work}

\textbf{Privacy Auditing.} 
Our research is also linked to the literature focusing on the validation of theoretical privacy guarantees, also known as privacy auditing. 
This literature involves the assessment of privacy breaches in private algorithms \citep{jagielski2020auditing}.
Typically, privacy auditing entails the utilization of membership inference attacks \citep{yeom2018privacy,shokri2017membership}, wherein the effectiveness of the attack is translated into an empirical approximation of the privacy level, denoted as $\hat{\varepsilon}$.
While most existing privacy auditing methods, such as those by \citet{jagielski2020auditing}, \citet{nasr2021adversary, nasr2023tight} and \citet{zanella2023bayesian} are computationally extensive as they require multiple shadow models to be fitted to conduct privacy audits, more recent works suggest privacy audits based on a single model fit \citep{steinke2023privacy,andrew2023one,maddock2022canife}.
Similar to this recent line of work, our privacy notion can be audited by a single model fit. 
However, our work differs in the sense that our auditing algorithm precisely evaluates our suggested privacy notion $f$-MIP.

\textbf{Extended comparison to privacy attacks.}
There is a long line of prior work developing \citep{shokri2017membership,long2018understanding,sablayrolles2019whitebox,carlini2021membership,haim2022reconstructing,carlini2023extracting,pawelczyk2022privacy} or analyzing \citep{thudi2022bounding,tan2022parameters,tan2023blessing} privacy attacks on machine learning models. 
A common class of attacks called \emph{membership inference attacks} focus on determining if a given instance is present in the training data of a particular model \citep{shokri2017membership,shorki2021explanation,pawelczyk2022privacy,Choquette2019label,chen2020gan_survey,sablayrolles2019whitebox,yeom2018privacy,carlini2021membership,ye2021enhanced}.
Most of these attacks typically exploit the differences in the distribution of model confidence on the true label (or the loss) between the instances that are in the training set and those that are not \citep{shokri2017membership, sablayrolles2019whitebox, carlini2021membership, ye2021enhanced}.
For example, \citet{shokri2017membership} proposed a loss-based membership inference attack which determines if an instance is in the training set by testing if the loss of the model for that instance is less than a specific threshold. 
Other membership inference attacks are also predominantly loss-based attacks where the calibration of the threshold varies from one proposed attack to the other \citep{sablayrolles2019whitebox, carlini2021membership, ye2021enhanced}. 
Some works leverage different information that goes beyond the loss functions to do membership inference attacks.
For instance, \citet{shorki2021explanation} and \citet{pawelczyk2022privacy} leverage model explanations to orchestrate membership inference attacks.

\textbf{Comparison to existing attacks.}
In Table \ref{tab:summary_assumptions}, we summarize the assumptions underlying different membership inference attacks.
Note that our attack does not require the training of multiple shadow models on data from the data distribution $\mathcal{D}^N$. 
Instead, we derive the distributions of the LRT test statistic under the null and alternative hypotheses in closed form (see \Cref{sec:app_onestep_dp_sgd}), which drops the requirement of training (appropriately parameterized) shadow models to approximate these two distributions.
These shadow models can be trained since the attacker is allowed access to the general data distribution $\mathcal{D}$. 
Similar to other LRT attacks, our attack also requires access to $\mathcal{D}$ to approximate the parameters $\mSigma, \vmu$ and $K$ required for the construction and verification of our likelhood ratio based attack.
As opposed to other attacks, our attack is based on the requirement that the attacker has access to model gradients which is a realistic assumption in many federated learning scenarios \citep{kairouz2021advances}.
Appendix \ref{sec:app_gradientlrtattack} summarizes our gradient based LRT attack in more detail.
\begin{table}[tb]
\centering
\resizebox{\columnwidth}{!}{%
\begin{tabular}{cccccc}
\toprule
 Info & \texttt{Loss} \citep{yeom2018privacy} & \texttt{CFD} \citep{pawelczyk2022privacy} & \texttt{Loss LRT} \citep{carlini2021membership} & \texttt{CFD LRT} \citep{pawelczyk2022privacy} & \texttt{Gradient LRT} \\
 \cmidrule(lr){1-1}  \cmidrule(lr){2-6}
Query access to $f_{\btheta}$ & $\checkmark$ & $\times$ & $\checkmark$ & $\times$ & $\times$  \\
Query access to $\nabla f_{\btheta}$ & $\times$ &  $\times$  &  $\times$ &  $\times$  &  $\checkmark$   \\
Query access to $\mathcal{R}$ & $\times$ &  $\checkmark$  &  $\times$ &  $\checkmark$  &  $\times$   \\
Known loss function & $\checkmark$ & $\times$ & $\checkmark$ & $\times$  & $\times$ \\
Access to $\D^N$ & $\times$ & $\times$ & $\checkmark$ & $\checkmark$ & $\checkmark$ \\
Access to true labels & $\checkmark$ & $\times$ & $\checkmark$ & $\times$ & $\times$ \\
Analytical & $\times$ & $\times$ & $\times$ & $\times$ & $\checkmark$ \\
Shadow models & $\times$ & $\times$ & $\checkmark$ & $\checkmark$ & $\times$ \\
\bottomrule
\end{tabular}
}
\caption{Summarizing the assumptions underlying the different MI attacks. The recourse based attacks do not require access to the true labels nor do they need to know the correct loss functions, but they additionally require access to a recourse generating API $\mathcal{R}$. To the best our knowledge, our gradient attack is the only one for which analytical results exist.}
\label{tab:summary_assumptions}
\end{table}

\begin{table}[htb]
\centering
\resizebox{\columnwidth}{!}{%
\begin{tabular}{cccccccccc}
\toprule
Type & Dataset & \# Samples (N) & \# Parameters (d) & Batch size (n) & Epochs & C & $\tau^2$ & Architecture  \\
\midrule
 I & CIFAR-10 & 500 & 650 & 500 & 5 & 10.0 & 0.0 & ResNet56 \\
 T & Purchase & 1970 & 2580 & 1970 & 5 & 10.0 & 0.0 & 3 layer DNN  \\
 T & Adult & 790 & 1026 & 790 & 5 & 10.0 & 0.0 & Random feature NN \\
\bottomrule
\end{tabular}
}
\caption{The parameters for the verification experiment are chosen so that the analytical privacy levels from Figure \ref{fig:verification_experiment} are $\mu_{\text{step}}=1.13$ and $\mu=2.54$, respectively. Note that ``I'' denotes image and ``T'' denotes tabular.}
\label{tab:verification_hyperparams}
\end{table}
\begin{table}[htb]
\centering
\resizebox{\columnwidth}{!}{%
\begin{tabular}{ccccccccccc}
\toprule
Type & Dataset & \# Samples (N) & \# Parameters (d, =K) & Batch size (n) & Epochs & C & Architecture  \\
\midrule
 I & CIFAR-10 & 48000 & 650 & 400 & 10 &  500.0  & ResNet56 \\
 T & Purchase & 54855 & 2580 & 795 & 3 &  2000.0 & 3 layer DNN \\
 T & Adult & 43000 & 1026 & 1000 & 20 &  800.0  & Random feature NN \\
\bottomrule
\end{tabular}
}
\caption{Parameters for the utility experiment from Figure \ref{fig:utility}.  Note that ``I'' denotes image and ``T'' denotes tabular. The dataset size were chosen to make them divisible by the batch size. The required noise $\tau^2$ is determined by the required privacy level $\mu$.}
\label{tab:utility_hyperparams}
\end{table}

\textbf{Comparing the threat models underlying f-DP and f-MIP.}
We note that the underlying threat model in membership inference (MI) attacks features several key differences to the threat model underlying DP, which controls an attacker's capacity to distinguish \emph{any} two neighboring datasets $D$ and $D^\prime$. 

First, in MI attacks, the datasets are sampled from the distribution $\mathcal{D}$, whereas DP protects all datasets which corresponds to granting the attacker the capacity of full dataset manipulation. 
Thereby the MI attack model is sensible in cases where the attacker cannot manipulate the dataset through injection of malicious samples (``canaries''). 
Instead, the notion of MI attacks is more realistic in cases when an attacker only has API access or access to the trained model but cannot interfere during training. 
Typically membership inference attacks are a fundamental ingredient in crafting data extraction attacks \citep{carlini2021extracting}, and hence we expect a privacy notion based on the membership inference threat model to be broadly applicable. 
Second, the samples that are protected under this notion are also drawn from the distribution. Consequently, MI primarily protects typical samples. 
In most cases, the distribution covers the data that the model is conceived to handle in practice, such that protecting against extreme outliers may be overconstraining. 
Finally, the goals of the attackers in both threat models are different.
Instead of being able to tell apart two datasets, the MI attacker is interested in inferring whether a given sample was part of the model's training set. 
As the sample is already known and only a binary response is required, this goal is weaker than other types of attacks such as full reconstruction attacks \citep{carlini2021extracting, haim2022reconstructing}. 
Therefore, the MI threat model covers many goals of realistic attackers.
For the reader's convenience, we replicate the tabular overview over these key differences from \Cref{tab:dp_mip_comparison} below.

% \begin{figure}
% \begin{subfigure}[t]{0.48\textwidth}
% \centering
% \includegraphics[width=\textwidth]{}
% \caption{trade-off curves for $d=1$.}
% \label{fig:trade-offd1}
% \end{subfigure}
% \begin{subfigure}[t]{0.48\textwidth}
% \centering
% \includegraphics[width=\textwidth]{}
% \caption{trade-off curves for for larger $d, n$ while keeping a constant $\frac{\sqrt{2d}}{n+1}=\frac{2}{3}$. The curves converge towards $\mu$-GDP with $\mu=\frac{2}{3}$.}
% \label{fig:trade-offmu}
% \end{subfigure}
% \end{figure}

\section{Experimental Details}
\subsection{Hyperparameters}
\label{sec:app_modeltraining}
In this section, we summarize the parameter settings for our experiments.
In \Cref{tab:verification_hyperparams}, we provide details on the verification experiment shown in Figure \ref{fig:verification_experiment}.
In Table \ref{tab:utility_hyperparams}, we summarize the hyperparameters used in our utility experiment shown in Figure \ref{fig:utility}.

To compute the analytical privacy levels in \Cref{fig:verification_experiment}, we use \Cref{corollary:onestep_dp_sgd} with $K=d$ and $\tau=0$, resulting in $\mu_{\text{step}}=\sqrt{\frac{2d}{2n+1}} \approx 1.14$ with the batch sizes and model parameters in \Cref{tab:verification_hyperparams}. For the last Figure, we use the result shown in \Cref{sec:app_compositionresults}, indicating that the combined privacy level when performing $k$ steps of SGD without subsampling and individual privacy level $\mu_{\text{step}}$, is given by $\mu=\sqrt{k}\mu_{\text{step}} =\sqrt{5}\cdot \mu_{\text{step}}  \approx 2.54$ (calculation was performed prior to rounding).
% Gradient Attack experiments

% Utility experiments
% CIFAR10: C=70 N=50000 T=2000 batch_size=400 K=650 d=650
% Purchase: C=100 N=55000 T=550 batch_size=2500 K=2560 d=2560
% Adult: C=100 N=43000/2 T=43*30 batch_size=1000 K=1026 d=1026
For the utility experiment, we chose the noise level $\tau$ according to \Cref{eqn:dongfactor} when we use MIP. For Differential Privacy, we use the result by \citet[Corollary 4]{dong2022gaussian}. However our $\tau$ has the following relation to the $\sigma$ by Dong et al., $\tau = \frac{2C}{n}\sigma$ so that we plug in $\sigma= \frac{n}{2C}\tau$ in Corollary 4 of \citet{dong2022gaussian}. We solve both \Cref{eqn:dongfactor} and Corollary 4 of \cite{dong2022gaussian} numerically to obtain the level $\tau_{\text{MIP}}$ required to obtain $\mu$-GMIP and $\tau_{\text{DP}}$ for $\mu$-GDP for 20 values of $\mu$ between 0.4 and 50 that are linearly spaced in logspace. Due to \Cref{thm:mipweakerthandp}, we never need to add more noise for $\mu$-GMIP than for $\mu$-GDP. Therefore, we set $\tau_{\text{MIP}} \leftarrow \min\left\{\tau_{\text{MIP}}, \tau_{\text{DP}}\right\}$, i.e., we take the minimum of the noise levels required for GDP, GMIP when we would like to guarantee GMIP. We obtain the values given in \Cref{tab:tauvalues}.

\begin{table}[tb]
\centering
\resizebox{\columnwidth}{!}{%
\begin{tabular}{r*{20}{c}}
\toprule
$\mu=$ & 0.40 & 0.52 & 0.66 & 0.86 & 1.11 & 1.43 & 1.84 & 2.37 & 3.05 & 3.94 & 5.08 & 6.55 & 8.44 & 10.88 & 14.03 & 18.09 & 23.33 & 30.08 & 38.78 & 50.00\\
\midrule
CIFAR-10 (MIP) & 2.84 & 2.44 & 2.13 & 0.00 & 0.00 & 0.00 & 0.00 & 0.00 & 0.00 & 0.00 & 0.00 & 0.00 & 0.00 & 0.00 & 0.00 & 0.00 & 0.00 & 0.00 & 0.00 & 0.00 \\
CIFAR-10 (DP) & 2.84 & 2.44 & 2.13 & 1.89 & 1.70 & 1.55 & 1.42 & 1.32 & 1.24 & 1.17 & 1.11 & 1.06 & 1.02 & 0.98 & 0.94 & 0.91 & 0.88 & 0.85 & 0.83 & 0.81 \\
Purchase (MIP) & 4.72 & 4.14 & 3.68 & 3.32 & 3.04 & 2.81 & 0.00 & 0.00 & 0.00 & 0.00 & 0.00 & 0.00 & 0.00 & 0.00 & 0.00 & 0.00 & 0.00 & 0.00 & 0.00 & 0.00 \\
Purchase (DP) & 4.72 & 4.14 & 3.68 & 3.32 & 3.04 & \emph{2.81} & 2.62 & 2.46 & 2.32 & 2.21 & 2.11 & 2.02 & 1.94 & 1.87 & 1.81 & 1.75 & 1.70 & 1.65 & 1.61 & 1.57 \\
Adult (MIP) & 3.38 & 2.77 & 2.30 & 1.93 & 1.65 & 0.00 & 0.00 & 0.00 & 0.00 & 0.00 & 0.00 & 0.00 & 0.00 & 0.00 & 0.00 & 0.00 & 0.00 & 0.00 & 0.00 & 0.00 \\
Adult (DP) & 3.38 & 2.77 & 2.30 & 1.93 & 1.65 & 1.43 & 1.26 & 1.13 & 1.02 & 0.94 & 0.87 & 0.81 & 0.77 & 0.73 & 0.69 & 0.66 & 0.63 & 0.61 & 0.59 & 0.57 \\
\bottomrule
\end{tabular}
}
\caption{Values of $\tau$ obtained for the Utility Experiment. We observe that for the higher values of $\mu$ there is often no need to add any noise to the gradients to obtain MIP, whereas substantial noise still needs to be added in the case of DP, which results in the reduced utility observed in \Cref{fig:utility}.\label{tab:tauvalues}}
\end{table}

\subsection{Ablation studies}
\label{sec:app_ablations}
We show results of several ablation studies in \Cref{fig:ablations}. The correspond to the verification setup used in \Cref{fig:gradient1} with simulated gradients and confirm that the approximations made in our analysis even hold for very small values of $d=2, n=5$ or $C=1$ and regardless of the gradients' distribution. We provide log-log scale plots for the setups corresponding to \Cref{fig:verification_experiment} in \Cref{fig:verification_experimentlogscale}.

\newcommand{\capvsp}{\vspace{-0.3cm}}
\begin{figure}[tb]
\centering
\begin{subfigure}{0.24\textwidth}
\centering
\includegraphics[width=\textwidth]{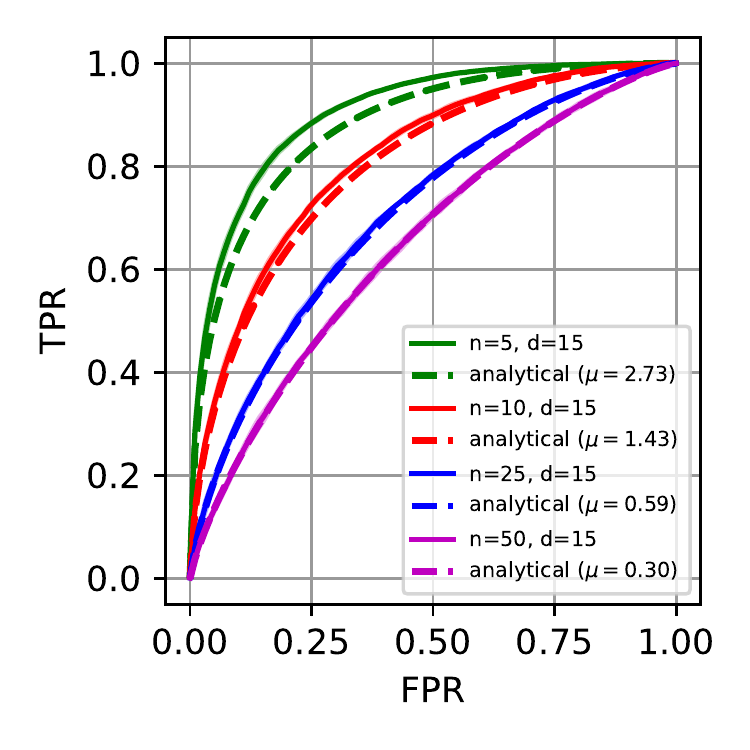}\capvsp
\caption{Changing $n$}
\label{fig:util1appa}
\end{subfigure}
\begin{subfigure}{0.24\textwidth}
\centering
\includegraphics[width=\textwidth]{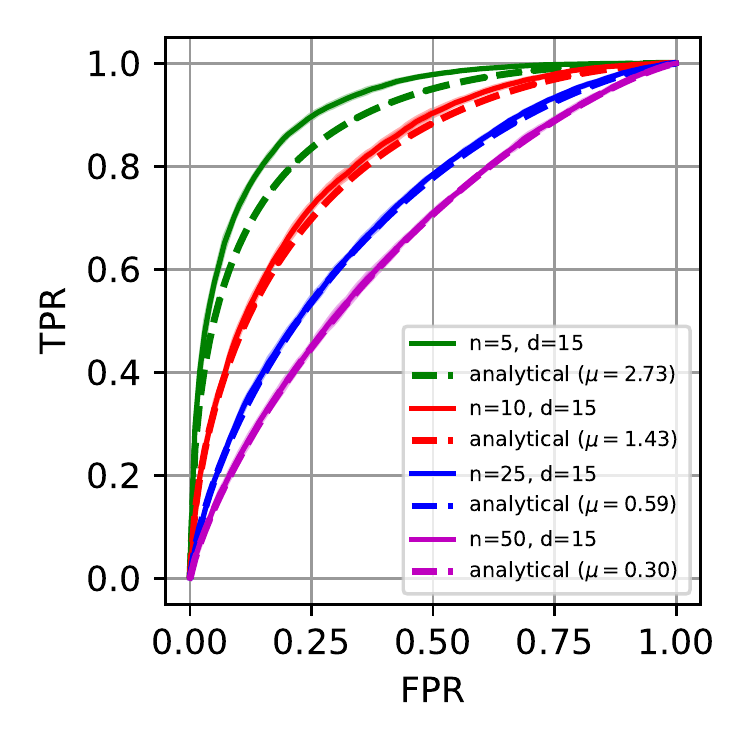}\capvsp
\caption{Changing $n$ (unif. dist.)}
\label{fig:util1appb}
\end{subfigure}
\begin{subfigure}{0.24\textwidth}
\centering
\includegraphics[width=\textwidth]{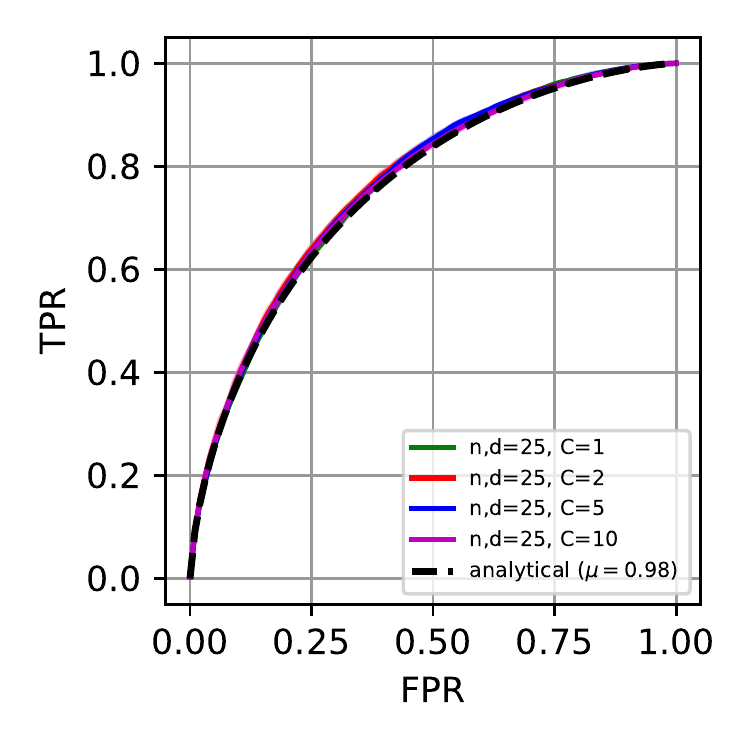}\capvsp
\caption{Changing $C$}
\label{fig:util2appc}
\end{subfigure}
\begin{subfigure}{0.24\textwidth}
\centering
\includegraphics[width=\textwidth]{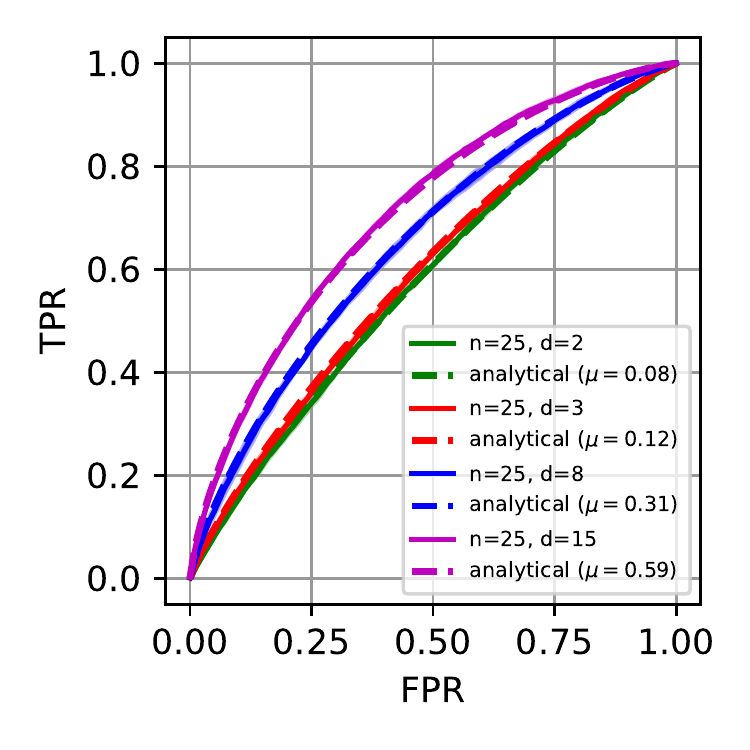}\capvsp
\caption{Changing $d$}
\label{fig:util3appd}
\end{subfigure}
\begin{subfigure}{0.24\textwidth}
\centering
\includegraphics[width=\textwidth]{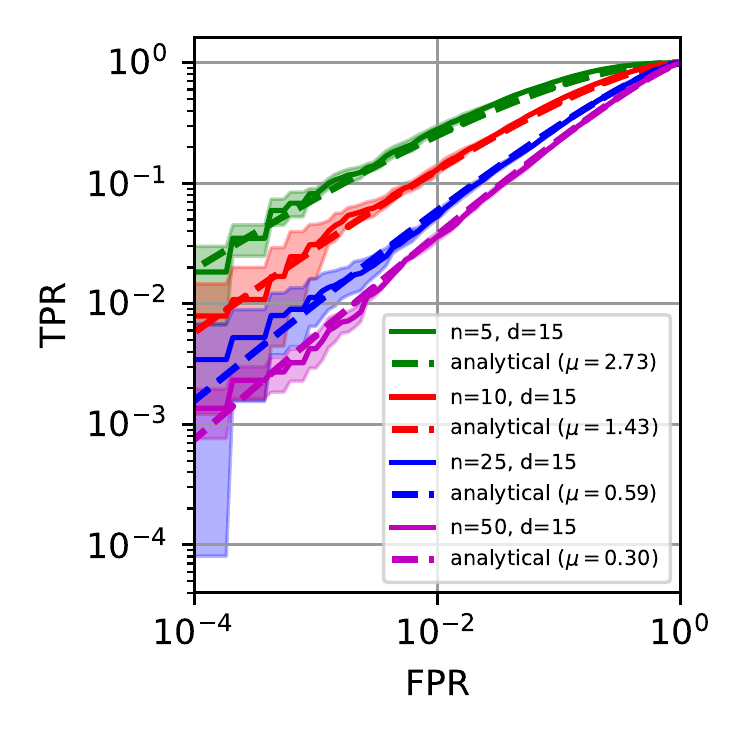}\capvsp
\caption{Changing $n$}
\label{fig:util1appe}
\end{subfigure}
\begin{subfigure}{0.24\textwidth}
\centering
\includegraphics[width=\textwidth]{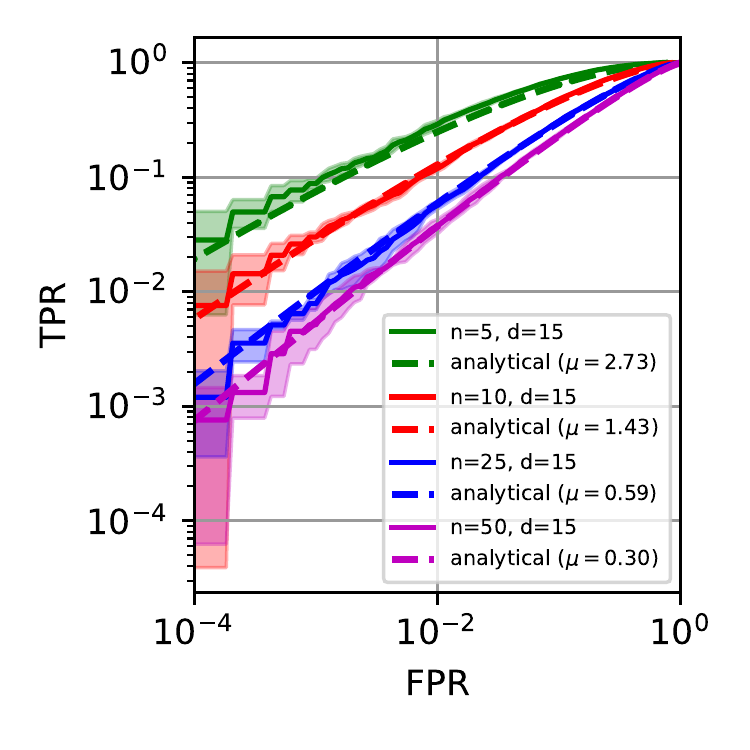}\capvsp
\caption{Changing $n$ (unif. dist.)}
\label{fig:util1aappf}
\end{subfigure}
\begin{subfigure}{0.24\textwidth}
\centering
\includegraphics[width=\textwidth]{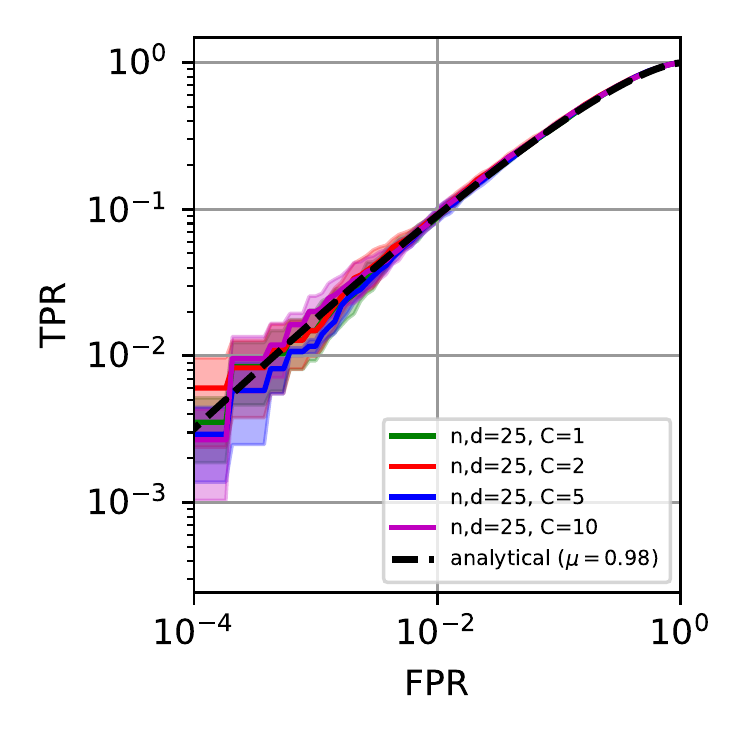}\capvsp
\caption{Changing $C$}
\label{fig:util2appg}
\end{subfigure}
\begin{subfigure}{0.24\textwidth}
\centering
\includegraphics[width=\textwidth]{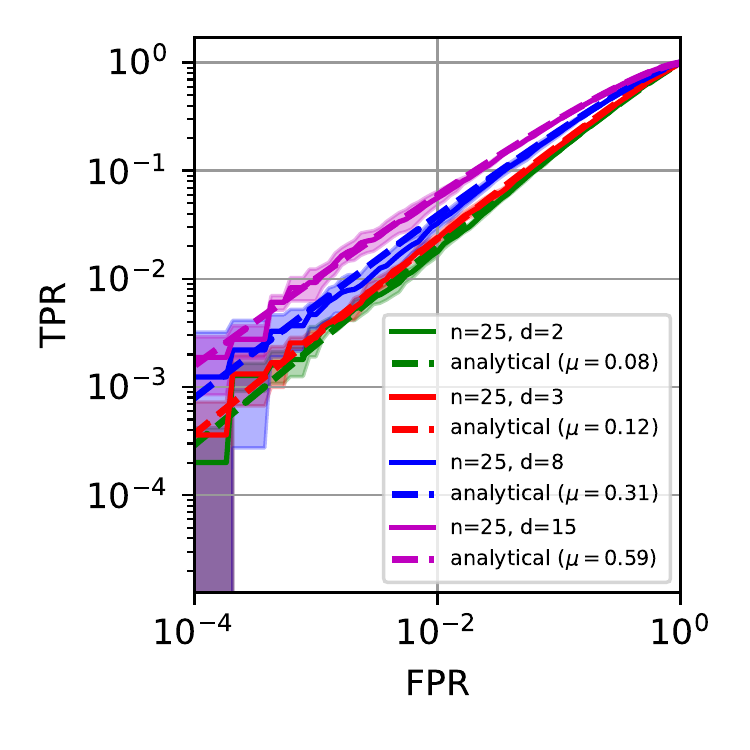}\capvsp
\caption{Changing $d$}
\label{fig:util3apph}
\end{subfigure}
\caption{\textbf{Ablation studies on approximation quality; Log-log plots in bottom row.}
\textbf{(a, e)}: Decreasing the batch size $n$ starts showing an effect when the batch size becomes as low as $n=5$, which is a batch size rarely used in practice. \textbf{(b, f)}: If we change the gradient distribution to be Uniform, there is no significant difference. This is expected as the CLT also holds for means of variables with bounded support. 
%\textbf{b)} Here we show that the effect when the individual gradients are uniformly distributed. This does not have an effect on the guarantees. 
\textbf{(c, g)}: Changing the cropping threshold $C$ has no effect on the empirical predictions ($\mathbb{E}\left[\lVert\theta\rVert\right]=5$ in this example) \textbf{(d, h)}: Changing the gradient dimension $d$ only has a minor effect when $d=2$, which is a gradient dimension unlikely to be used in practice.
\label{fig:ablations}}
\end{figure}

\begin{figure}[htb]
\centering
\begin{subfigure}{0.32\textwidth}
\centering
\includegraphics[scale=0.60]{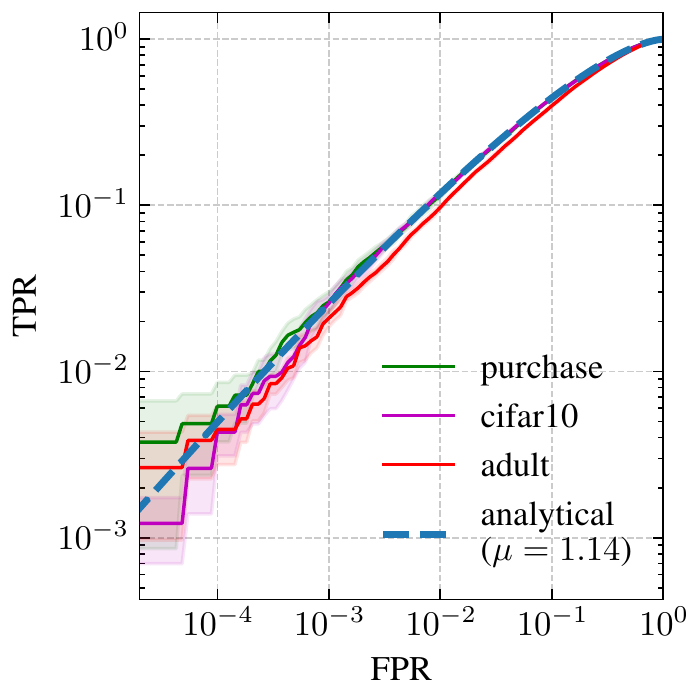}
\caption{Single step of simulated gradient distribution with known parameters.}
\label{fig:gradient1_loglog}
\end{subfigure}
\hfill
\begin{subfigure}{0.32\textwidth}
\centering
\includegraphics[scale=0.60]{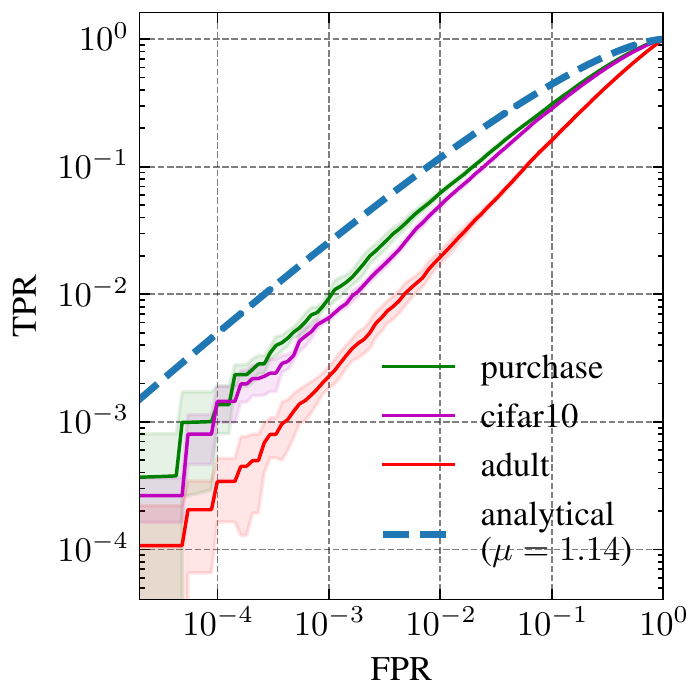}
\caption{Single step with real model gradients and estimated parameters.}
\label{fig:gradient2_loglog}
\end{subfigure}
\hfill
\begin{subfigure}{0.32\textwidth}
\centering
\includegraphics[scale=0.60]{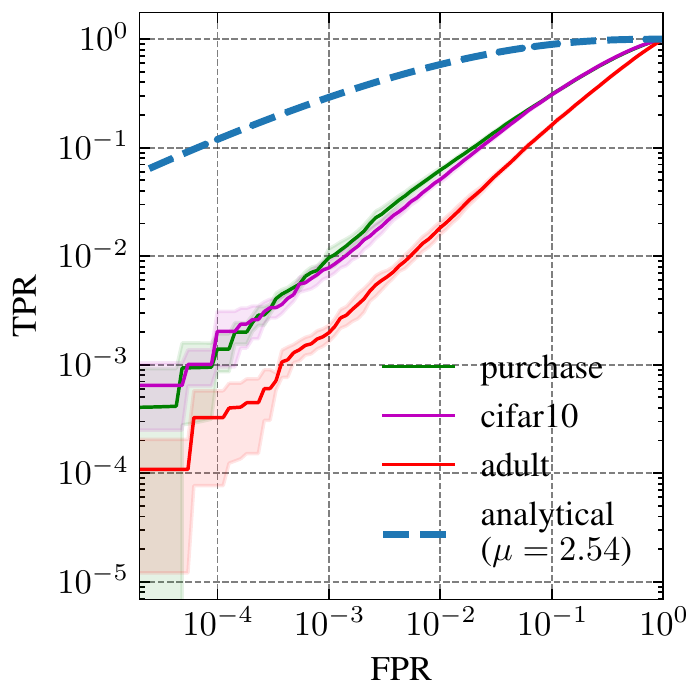}
\caption{As in (b), but now composition of 5 steps for real model gradients.}
\label{fig:gradient3_loglog}
\end{subfigure}
\caption{\textbf{Observed trade-off curves for the gradient attacks when $\tau^2=0$, Loglog-Scale}. We show trade-off curves when the gradient distribution is known (left) and when the gradients are obtained from a trained model that was finetuned on various data sets (center, right). The analytical solutions are computed with a value of $K=d$.}
\label{fig:verification_experimentlogscale}
\end{figure}

\subsection{Gradient Likelihood Ratio (GLiR) Attack}
\label{sec:app_gradientlrtattack}
We follow the common approach and trace the information flow from the data through the training process of stochastic gradient descent \citep{abadi2016deep, song2013stochastic}. 
We follow \citep{abadi2016deep, song2013stochastic} and make the standard assumption that only the mean over the individual gradients $\vm = \frac{1}{n}\sum_{i=1}^n \vtheta_i$, where $\vtheta_i \in \mathbb{R}^d$ is a sample gradient is used to update the model (or is published directly).
Consistent with the definition of the membership inference game, the attacker now tries to predict whether a specific gradient $\vtheta^\prime$ was part of the set $\left\{\vtheta_i\right\}_i$ that was used to compute the mean gradient $\vm$ or not.

An important requirement in the construction of the gradient likelihood ratio (GLiR) attack is the estimation of the true gradient mean $\vmu$ and the true inverse covariance matrix $\mSigma^{-1}$ since these quantities are essential parts of both the test statistic $S=\left(\vm-\vtheta'\right)^\top\mSigma^{-1}\left(\vm-\vtheta'\right)$ and the true gradient susceptibility term $K$ (see Proof of Theorem \ref{theorem:onestep_dp_sgd}). 
Here, we briefly summarize the attack algorithm (see \cref{alg:glir} for pseudo code):
\begin{enumerate}
\item The attacker uses their access to the data distribution, which is standard for membership inference attacks (see e.g., \citep{carlini2021membership,pawelczyk2022privacy}), to obtain estimates of $\mSigma$ and $\vmu$, which we refer to as $\hat{\mSigma}$ and $\hat{\vmu}$ where $\vmu$ and $\mSigma$ are the true means and covariances of the gradient distributions.
\item Given a gradient $\vtheta'$, the attacker uses $\hat{\mSigma}$ and $\hat{\vmu}$, and estimates $\hat{K}$.
\item Given a gradient $\vtheta'$, under the hypothesis that $\vtheta'$ is part of the test set, the attacker uses $\hat{K}$, $\hat{\mSigma}$ and $\hat{\vmu}$ to computes the quantiles of the non-central chi-squared distribution and compares them to the test statistic $S$, resulting in p-values. 
\item This procedure is repeated for several steps. The $p$-values can be aggregated through different means, where one strategy would be a simple multiplication of $p$-values as when assuming independence (corresponding to a sum of the $\log p$-values). However the threshold would have to be adjusted to compensate for multiple testing.
\item Finally, the attacker uses the p-values to determine whether a given gradient $\vtheta'$ was part of the training set or not. The full trade-off curve can then be obtained by varying the thresholds over the p-values.
\end{enumerate}

\subsection{Additional Membership Inference attacks}
\label{sec:app_additionalattacks}
We provide the same plots as in \Cref{fig:verification_experiment} using a loglog-scale in \Cref{fig:verification_experimentlogscale} to show that our bounds also hold for the low FPR regime.
Further, we can directly compare the analytical LRT gradient based attacks with the empirical loss based LRT attacks \citep{carlini2021membership}: we see that, at the false positive rate of $10^{-2}$, the loss-based attacks work substantially less reliably than our proposed analytical gradient based attacks (compare Figures \ref{fig:gradient3_loglog} and \ref{fig:loss_verification}).
The loss-based LRT attacks do not work at all when the model is trained for 5 steps only while our gradient attacks work up to 10 times more reliably.
In Figure \ref{fig:loss_utility}, we plot loss-based attacks on models that were trained for more steps than in Figure \ref{fig:loss_verification}.
In particular, these models were trained using the same number of epochs than the models from the utility experiment of Figure \ref{fig:utility}. 
Now, we see that the loss-based attacks slowly start working.

\begin{figure}[htb]
\centering
\begin{subfigure}{0.45\textwidth}
\centering
\includegraphics[scale=0.50]{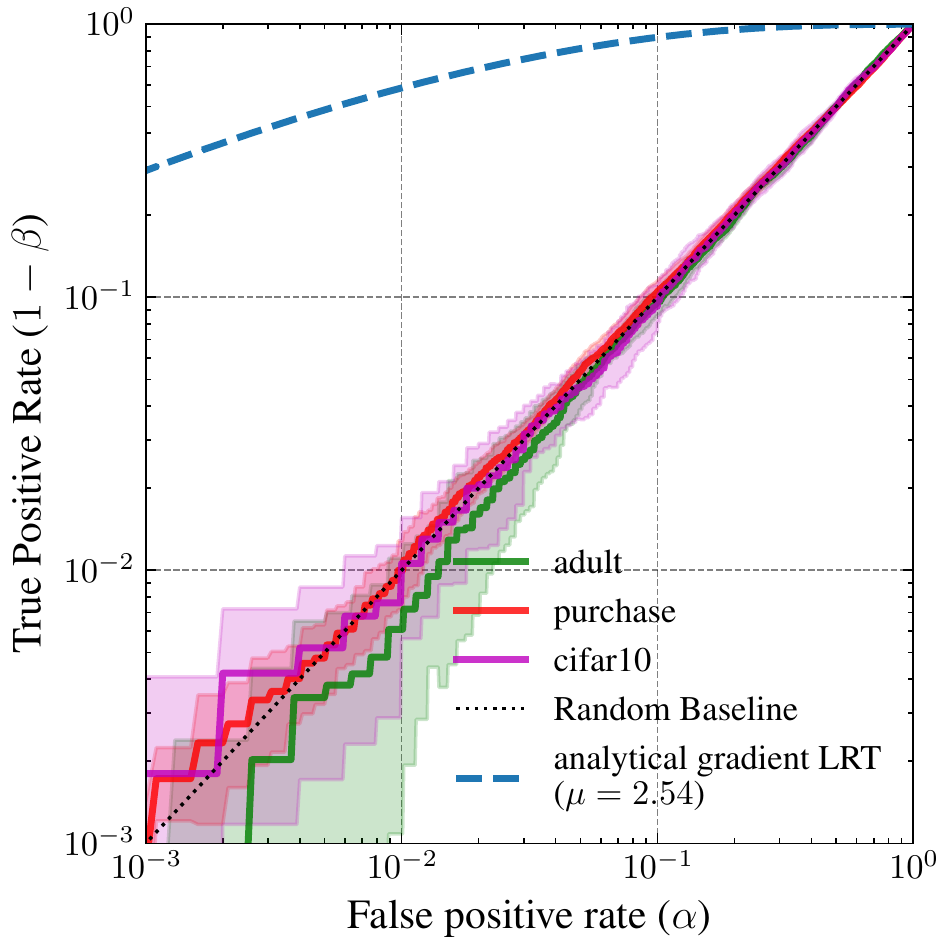}
\caption{Loss LRT attacks on the same models as in the verification experiment from Figures \ref{fig:verification_experiment} and \ref{fig:verification_experimentlogscale}.}
\label{fig:loss_verification}
\end{subfigure}
\hfill
\begin{subfigure}{0.45\textwidth}
\centering
\includegraphics[scale=0.50]{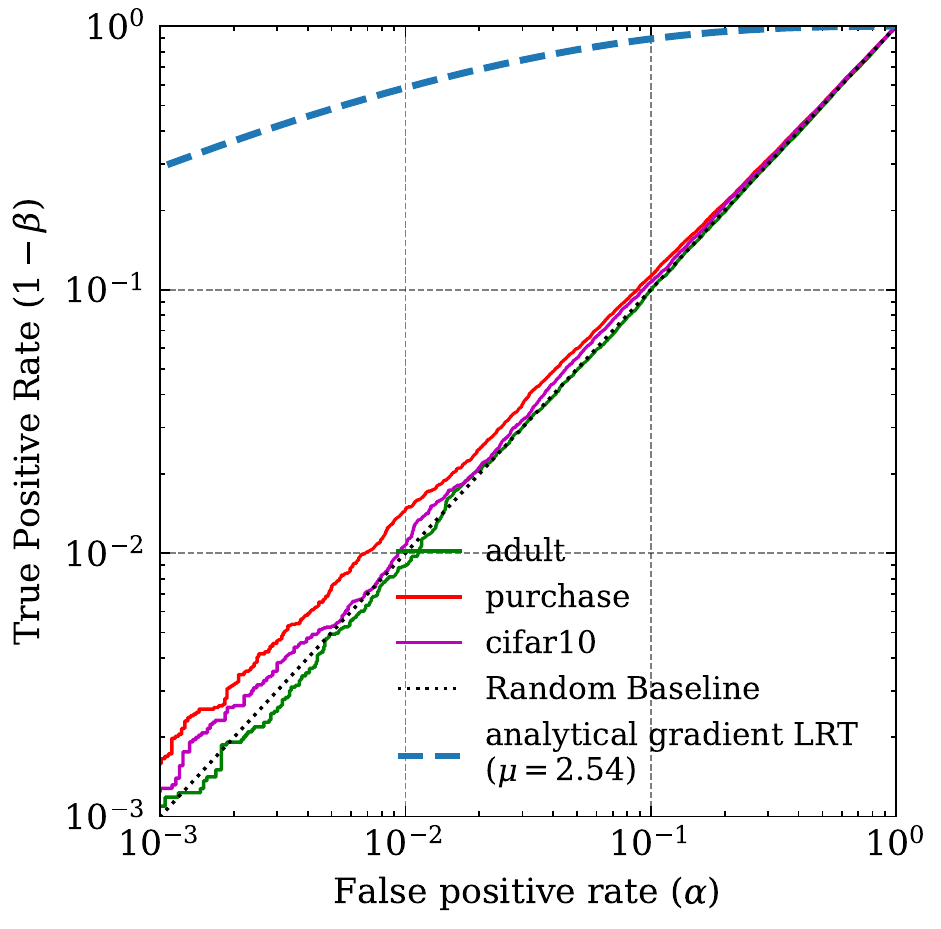}
\caption{Loss LRT attacks on the same models as in the utility experiment from Figure \ref{fig:utility}.}
\label{fig:loss_utility}
\end{subfigure}
\caption{\textbf{Observed trade-off curves for the empirical loss based LRT attacks by \citet{carlini2021membership} when $\tau^2=0$, Loglog-Scale}. We compare the analytical trade-off curves from the gradient attack to the trade-off curves obtained from the empirical loss-based LRT attacks.
The analytical solutions are computed as in Figure \ref{fig:verification_experimentlogscale}.
}
\label{fig:loss_attacks_experimentlogscale}
\end{figure}

\section{Proof of Theorem \ref{theorem:stochastic_composition} and Results for General Hypothesis Test Calculus \citep{dong2022gaussian}}
\subsection{Properties of the stochastic composition operator (Theorem \ref{theorem:stochastic_composition})}
\label{app_sec:stochasticcomposition}
\textbf{Theorem 4.1} (Stochastic composition of trade-off functions). 
\textit{The stochastic composition $\bigotimes_{\vx \sim \mathcal{D}} h(\vx)$ of trade-off functions $h(\vx)$ maintains the characteristics of a trade-off function, i.e., (1) it is convex, (2) non-increasing, (3) $\left(\bigotimes_{\vx \sim \mathcal{D}} h(\vx)\right)(r) \leq 1 - r$ for all $r \in [0,1]$, and (4) it is continuous at $r=0$.}
\begin{proof}
\textbf{(1) convexity.}
We start by proving convexity (1). Let $0 \leq a \leq b \leq 1$ and let $\lambda \in \left[0,1\right]$ and define $H(r) \coloneqq \left(\bigotimes_{\vx \sim \mathcal{D}} h(\vx) \right)(r)$ for brevity.

We have
\begin{align}
    H(a) = \min_{\bar{\alpha} \in \mathcal{E}(a, \mathcal{D})} \left\{ \beta_h(\bar{\alpha})\right\}
\end{align}
and we denote the test-specific FPR function (TS-FPR) that reaches this minimum by $\bar{\alpha}_a(\vx) \in  \mathcal{E}(a, \mathcal{D})$ such that
\begin{align}
H(a) = \beta_h(\bar{\alpha}_a) =  \mathbb{E}_{\vx \sim \mathcal{D}} \left[h(\vx)(\bar{\alpha}_a(\vx))\right].
\end{align}
We can do the same for $b$ and find a TS-FPR function $\bar{\alpha}_b(\vx) \in  \mathcal{E}(b, \mathcal{D})$ such that 
\begin{align}
H(b) = \beta_h(\bar{\alpha}_b) =  \mathbb{E}_{\vx \sim \mathcal{D}} \left[h(\vx)(\bar{\alpha}_b(\vx))\right].
\end{align}
For any $\lambda \in [0,1]$, we can define the convex combination of the TS-FPR functions $\bar{\alpha}_{\lambda, a, b}(\vx) = \lambda\bar{\alpha}_a(\vx)+(1-\lambda)\bar{\alpha}_b(\vx)$ and see that
\begin{align}
    \mathbb{E}_{\vx \sim \mathcal{D}}\left[\lambda \bar{\alpha}_a(\vx) + (1-\lambda) \bar{\alpha}_b(\vx)\right] &= \lambda \mathbb{E}_{\vx \sim \mathcal{D}}\left[\bar{\alpha}_a(\vx)\right] + (1-\lambda)\left[\bar{\alpha}_b(\vx)\right]\\
    &=\lambda a + (1-\lambda) b
\end{align}
which implies that
\begin{align}
    \bar{\alpha}_{\lambda, a, b} \in \mathcal{E}(\lambda a + (1-\lambda)b, \mathcal{D}),
\end{align}
i.e., $\bar{\alpha}_{\lambda, a, b}$ is a valid TS-FPR function for a global type 1 error of $\lambda a + (1-\lambda) b$.
We can now chose the function $\bar{\alpha}_{\lambda, a, b}$ to bound the minimum which allows to complete the proof
\begin{align}
H(\lambda a + (1-\lambda)b) &= \min_{\bar{\alpha} \in \mathcal{E}(\lambda a + (1-\lambda)b, \mathcal{D})} \left\{ \beta_h(\bar{\alpha})\right\} \\
&\leq \beta_h(\bar{\alpha}_{\lambda, a, b})\\
&= \mathbb{E}_{\vx \sim \mathcal{D}}\left[h(\vx)\left(\lambda \bar{\alpha}_a(\vx) + (1-\lambda) \bar{\alpha}_b(\vx)\right)\right]\\
&\leq \mathbb{E}_{\vx \sim \mathcal{D}}\left[\lambda h(\vx)\left( \bar{\alpha}_a(\vx)\right) + (1-\lambda)h(\vx)\left(\bar{\alpha}_b(\vx)\right)\right]\label{eqn:convexitytrade-off}\\
&= \lambda\mathbb{E}_{\vx \sim \mathcal{D}}\left[ h(\vx)\left( \bar{\alpha}_a(\vx)\right)\right] + (1-\lambda)\mathbb{E}_{\vx \sim \mathcal{D}}\left[h(\vx)\left(\bar{\alpha}_b(\vx)\right)\right]\\
&= \lambda H(a) + (1-\lambda) H(b).
\end{align}
In this derivation we use the convexity of the trade-off function $h(\vx)$ to arrive at \Cref{eqn:convexitytrade-off}.

\textbf{\textbf{(3) upper bounded by} $H(r) \leq 1-r$}. We prove property (3) next. For $r \in [0,1]$, we have 
\begin{align}
    H(r) = \min_{\bar{\alpha} \in \mathcal{E}(r, \mathcal{D})} \left\{ \beta_h(\bar{\alpha})\right\}
\end{align} where we can bound
\begin{align}
\beta_h(\bar{\alpha}) = \mathbb{E}_{\vx^\prime \sim \mathcal{D}} \left[h(\vx)(\bar{\alpha}(\vx))\right] \leq \mathbb{E}_{\vx^\prime \sim \mathcal{D}} \left[1-\bar{\alpha}(\vx)\right] = 1 -\mathbb{E}_{\vx \sim \mathcal{D}}\left[\bar{\alpha}(\vx)\right] = 1-r,
\end{align}
where we use the fact that $\mathbb{E}_{\vx \sim \mathcal{D}}\left[\bar{\alpha}(\vx)\right] = r$ for $\bar{\alpha} \in \mathcal{E}(r, \mathcal{D})$. Therefore,
\begin{align}
    H(r) = \min_{\bar{\alpha} \in \mathcal{E}(r, \mathcal{D})} \left\{ \beta_h(\bar{\alpha})\right\} \leq 1-r.
\end{align}

\textbf{(2) non-increasing}. Take any two points $0 \leq a < b \leq 1$. We can establish that $H(1) = 0$ by verifying that $H(r)\geq 0$ and the upper bound property (3). As $a < b \leq 1$, we can express 
\begin{align}
    b= \lambda a + (1-\lambda) \cdot 1,
\end{align}
for some $\lambda \in [0,1)$.
From the convexity property of $H$, we infer that
\begin{align}
H(b) = H(\lambda a + (1-\lambda) \cdot 1) \leq \lambda H(a) +(1-\lambda) H(1) = \lambda H(a) \leq H(a),
\end{align}
or $H(a) \geq H(b)$. We therefore conclude that $H$ is non-increasing.

\textbf{(4) Continuity at $r=0$.}
We will use the common $\epsilon{,}\delta$-criterion of continuity. Thus, we will have to show that for every $\epsilon > 0$, there exists a $\delta >0$ such that for all $|x-0| < \delta$ (as the support of $H(x)$ is [0,1], this means $x <\delta$), we have $\lvert H(\delta)-H(0)\rvert < \epsilon$.

We first denote the value of the composed trade-off at $0$ by $H_0 \coloneqq H(0)$ and $h_0(\vx)\coloneqq h(\vx)(0)$. Denote the TS-FPR function that takes the minimum again by $\bar{\alpha}$. Because $\mathbb{E}_{\vx\sim \mathcal{D}}\left[\bar{\alpha}\right] = 0$ together with $\bar{\alpha} \geq 0$ implies $\bar{\alpha} = 0$ almost everywhere, we can equivalently express $H_0=\mathbb{E}_{\vx\sim \mathcal{D}} \left[h_0(\vx)\right]$. % 

Now let $1 \geq \epsilon > 0$ (for $\epsilon > 1$, we can choose $\delta = 1$) and let $\epsilon^\prime \coloneqq \epsilon/4$. We first note that the individual trade-off functions $h(\vx)$ are increasing and continuous at $r{=}0$ themselves, which means that for every $h(\vx), \epsilon_0$  there exists a $d(\vx, \epsilon_0)>0$ such that for all $y<d(\vx, \epsilon_0)$ we have $h(\vx)(0)-h(\vx)(y) = h_0(\vx)-h(\vx)(y) < \epsilon_0$. Absolute values are not required because $h(\vx)$ is monotonously decreasing. As $\vx$ is stochastic, these $d$'s also follow a certain distribution.
%$H(0) = 1$ rywhere and $h(\vx)(0)=1$ for the trade-off functions, we have $|H(y)-H(0)|=|H(y)-1| < \epsilon$ for each $y < d_{\epsilon^\prime}$.

We now choose $\delta(\epsilon) = d_{\epsilon^\prime}$ such that 
\begin{align}
P_{\vx\sim\mathcal{D}}\left(h_0(\vx) - h(\vx)(d_{\epsilon^\prime}/ \epsilon^\prime) \geq \epsilon^\prime\right)<\epsilon^\prime.
\label{eqn:deltaepsiloncont}
\end{align}
This means that the probability that the change in the trade-off function when going from 0 to $d_{\epsilon^\prime}/ \epsilon^\prime$ will exceed $\epsilon^\prime$, is bounded by $\epsilon^\prime$.
We can find such a $d_{\epsilon^\prime} >0 $ for every $\epsilon^\prime >0$ by conducting the following steps: Finding a $d(\vx, \epsilon^\prime)$ for each $\vx$. This is possible due to the continuity of the individual trade-offs. However, the values of $d(\vx, \epsilon^\prime)$ can grow arbitrarily large or small for some $\vx$. Therefore, we select the $1-\epsilon^\prime$-quantile $q_{1-\epsilon^\prime}$ of the distribution of $d(\vx, \epsilon^\prime)$ for $\vx \sim \mathcal{D}$ for which we certainly have $0 < q_{1-\epsilon^\prime} < 1$. We then choose $d_{\epsilon^\prime}= q_{1-\epsilon^\prime}\epsilon^\prime > 0$. Having found such value $d_{\epsilon^\prime}$ with the characteristic in \Cref{eqn:deltaepsiloncont} implies
\begin{align}
\mathbb{E}_\vx\left[H_0 - h(\vx)(d_{\epsilon^\prime}/\epsilon^\prime)\right] = \mathbb{E}_\vx\left[h_0(\vx) - h(\vx)(d_{\epsilon^\prime}/\epsilon^\prime)\right] < 2\epsilon^\prime,
\end{align}
as with a probability smaller than $\epsilon'$, the change can only be bounded by 1, for the other values, it is bounded by $\epsilon^\prime$. We now bound $H_0- H(y)$ for $y < d_{\epsilon^\prime}$ to show that $H_0-H(y) < \epsilon$. First we note that to obtain a global true positive rate of $d_{\epsilon^\prime}$, $P(\bar{\alpha} > d_{\epsilon^\prime}/\epsilon^\prime) \leq \epsilon^\prime$:
\begin{align}
H_0 - H(y) \leq& H_0 -  H(d_{\epsilon^\prime})\\
\leq& P(\bar{\alpha} \leq d_{\epsilon^\prime}/\epsilon) \mathbb{E}_\vx[h_0(\vx)- h(\vx)(\bar{\alpha}(\vx))|\bar{\alpha} \leq d_{\epsilon^\prime}/\epsilon]+\\
&P(\bar{\alpha} > d_{\epsilon^\prime}/\epsilon) \mathbb{E}_\vx[h_0(\vx)- h(\vx)(\bar{\alpha}(\vx))|\bar{\alpha} > d_{\epsilon^\prime}/\epsilon]\\
\leq& 1 \cdot \mathbb{E}_\vx[H_0 - h(\vx)(\bar{\alpha}(\vx))|\bar{\alpha} \leq d_{\epsilon^\prime}/\epsilon] + \epsilon^\prime %\geq (1-\epsilon^\prime)\mathbb{E}_\vx[h(\vx)(d_{\epsilon^\prime}/\epsilon)|\bar{\alpha} %\leq d_{\epsilon^\prime}/\epsilon] 
\end{align}
We also note that:
\begin{align}
&\mathbb{E}_\vx[h_0(\vx) - h(\vx)(d_{\epsilon^\prime}/\epsilon)|\bar{\alpha} \leq d_{\epsilon^\prime}/\epsilon]\\
&= \frac{\mathbb{E}_\vx[h_0(\vx) -h(\vx)(d_{\epsilon^\prime}/\epsilon)] - P(\bar{\alpha} > d_{\epsilon^\prime}/\epsilon)\mathbb{E}_\vx[h_0(\vx) -h(\vx)(d_{\epsilon^\prime}/\epsilon)|\bar{\alpha}\leq d_{\epsilon^\prime}/\epsilon]}{P(\bar{\alpha} \leq d_{\epsilon^\prime}/\epsilon)}\\
& < \frac{2\epsilon^\prime}{1-\epsilon^\prime} < 2\epsilon^\prime\frac{4}{3} < 3\epsilon^\prime.
\end{align}
In total we arrive at 
$H(0) - H(y) \leq H(0) - H(d_{\epsilon^\prime}) < 3\epsilon^\prime +\epsilon^\prime = 4\epsilon^\prime = \epsilon$.
\end{proof}

%Let $|f(x)-f(x_0)| <= epsilon$. Sort deltas. Violations for large deltas,

\subsection{A composition lemma for individual tests}
We repeat a lemma from Dong et al. \cite{dong2022gaussian}.
\begin{definition}
The tensor product of two trade-off functions $f = \text{Test}(P; Q)$ and $g = \text{Test}(P^\prime;Q^\prime)$ where $P, P^\prime, Q, Q^\prime$ are distributions is
defined as
\begin{align}
f \otimes g \coloneqq \text{Test}(P \times P^\prime;Q \times Q^\prime).
\end{align}
\end{definition}
Thus, the trade-off function $f$ of a test that is composed of independent dimension-wise tests with trade-off functions $f_1, \ldots, f_n$ can be written as $f= f_1 \otimes f_2 \otimes \ldots \otimes f_n$.
We reiterate the following result:
\begin{lemma} If there are two trade-off functions $f_1 \geq f_2$, i.e., the test $f_1$ is uniformly at least as hard as $f_2$, for any other trade-off function $g$:
\begin{align}
f_1 \otimes g \geq f_2  \otimes g.
\end{align}
\label{app_lem:dimensioncomposition}
\end{lemma}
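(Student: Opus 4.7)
\begin{hproof}
The plan is to leverage the Blackwell-style characterization of the partial order $\geq$ on trade-off functions established in \citet{dong2022gaussian}: writing $f_i = \text{Test}(P_i; Q_i)$, the inequality $f_1 \geq f_2$ holds if and only if there exists a Markov kernel (``post-processing'') $T$ between the underlying sample spaces such that $T_*(P_2) = P_1$ and $T_*(Q_2) = Q_1$. Intuitively, the harder experiment can be realized as a stochastic post-processing of the easier one. Once this equivalence is in hand, the tensor-product inequality $f_1 \otimes g \geq f_2 \otimes g$ reduces to a post-processing argument applied componentwise on the first factor of the product space.

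Concretely, I would fix any measurable test $\phi$ on $\mathcal{X}_1 \times \mathcal{X}'$ for the product hypothesis $P_1 \times P'$ versus $Q_1 \times Q'$ with FPR $\alpha$ and FNR $\beta$, and define $\phi' \coloneqq \phi \circ (T \otimes \mathrm{id})$ on $\mathcal{X}_2 \times \mathcal{X}'$. Because $T$ pushes $P_2$ forward to $P_1$ and $Q_2$ forward to $Q_1$, a routine change-of-variables calculation yields $\mathbb{E}_{P_2 \times P'}[\phi'] = \mathbb{E}_{P_1 \times P'}[\phi] = \alpha$ and $\mathbb{E}_{Q_2 \times Q'}[\phi'] = \mathbb{E}_{Q_1 \times Q'}[\phi] = 1-\beta$, so $\phi'$ attains FPR $\alpha$ and FNR $\beta$ for the second product hypothesis. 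Taking the infimum over $\phi$ at a fixed FPR $\alpha$ then gives $(f_2 \otimes g)(\alpha) \leq (f_1 \otimes g)(\alpha)$ for every $\alpha \in [0,1]$, which is exactly $f_1 \otimes g \geq f_2 \otimes g$.

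The main obstacle is securing the existence of the Markov kernel $T$ that realizes the partial order; this is the nontrivial direction of the Blackwell characterization, and in the $f$-DP framework it is proven via convex-analytic duality on the cone of trade-off functions. If one prefers to avoid invoking this as a black box, a self-contained alternative is to first establish the lemma when $P_i, Q_i$ have finite support (where $T$ can be written as an explicit stochastic matrix built from likelihood-ratio thresholds), and then extend to arbitrary distributions by approximation, using continuity of the tensor product under weak convergence together with the continuity-at-zero and monotonicity properties of trade-off functions.
\end{hproof}
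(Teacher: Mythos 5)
Your proposal is correct, and it is essentially the argument behind the result the paper relies on: the paper does not prove this lemma itself but cites Lemma C.2 of \citet{dong2022gaussian}, whose proof likewise invokes the Blackwell-type characterization of the order on trade-off functions to obtain a Markov kernel realizing $f_1\geq f_2$ and then applies it on the first coordinate (kernel tensored with the identity) to transfer any achievable $(\alpha,\beta)$ pair, exactly as you do. Your change-of-variables computation and the direction of the kernel ($T_*(P_2)=P_1$, $T_*(Q_2)=Q_1$) are both right, so the proposal matches the cited proof rather than offering a genuinely different route.
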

Thus, by making an individual test harder, the functional composition (the tensor product, see below) will also be uniformly harder or maintain its hardness.
This lemma corresponds to Lemma C.2. of \citet[Appendix C]{dong2022gaussian} where the corresponding proof can be found.

\subsection{Composition results for $f$-MIP derived from results for $f$-DP}
\label{sec:app_compositionresults}
The result given in \Cref{lem:composition} follows from Theorem 11 and Lemma 3 in \citet{dong2022gaussian}, which provide functional composition results (we refer to the subsequent execution of two algorithms as \emph{functional} composition and the to the stochastic selection of a test as in \Cref{def:stochastic_composition} as \emph{stochastic} composition) for general hypotheses tests. In particular, the following result \cite[Theorem 4]{dong2022gaussian} can be restated:
\begin{theorem}
Let $A_i: D \times D_1 \times  \ldots \times D_{i-1}\rightarrow D_{i}$ be a series of $f_i$-DP algorithms (``mechansisms'' in \cite{dong2022gaussian})
for all inputs $x \in D, y_1 \in D_1, …, y_{i-1} \in  D_{i-1}$, for $i=1, \ldots,r$. Then the $r$-fold
composed mechanism  $M : D \rightarrow D_1 \times  \ldots \times D_r$, defined as $M=(A_1(x), A_2(x, A_1(x)), \ldots, A_r(x, ...))$ is $f_1 \otimes \ldots \otimes f_r$-DP.
\end{theorem}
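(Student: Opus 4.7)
The plan is to prove this theorem by induction on $r$, with the substantive content residing in the base case $r=2$. For $r=2$ the claim is: if $A_1\colon D\to D_1$ is $f_1$-DP and, for every $y_1\in D_1$, the mechanism $A_2(\cdot,y_1)\colon D\to D_2$ is $f_2$-DP, then $M(x)=(A_1(x),A_2(x,A_1(x)))$ is $f_1\otimes f_2$-DP. The inductive step is immediate: treating the $(r-1)$-fold composition of $A_1,\ldots,A_{r-1}$ as a single $(f_1\otimes\cdots\otimes f_{r-1})$-DP mechanism and $A_r$ as the second factor, the base case yields the claim for $r$ steps.

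First I would reduce the problem to a pointwise testing statement. Fix neighboring datasets $S,S'$, let $P=M(S)$ and $Q=M(S')$ be the joint distributions on $D_1\times D_2$, and aim to show $\mathrm{Test}(P;Q)\ge f_1\otimes f_2$. For any measurable rejection rule $\phi\colon D_1\times D_2\to[0,1]$, introduce the conditional type-I and type-II functions $\alpha(y_1)=\mathbb{E}_{y_2\sim A_2(S,y_1)}[\phi(y_1,y_2)]$ and $\beta(y_1)=1-\mathbb{E}_{y_2\sim A_2(S',y_1)}[\phi(y_1,y_2)]$. The $f_2$-DP assumption applied for each fixed $y_1$ yields the pointwise bound $\beta(y_1)\ge f_2(\alpha(y_1))$.

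Next I would integrate over the outer coordinate. The global type-I error is $\alpha=\int\alpha(y_1)\,dA_1(S)(y_1)$, and the global type-II error obeys $\beta\ge\int f_2(\alpha(y_1))\,dA_1(S')(y_1)$. The heart of the proof is to show that, over all choices of $\alpha(\cdot)\in[0,1]^{D_1}$ with $\mathbb{E}_{A_1(S)}[\alpha(\cdot)]=\alpha$, the infimum of $\mathbb{E}_{A_1(S')}[f_2(\alpha(\cdot))]$ is at least $(f_1\otimes f_2)(\alpha)$. The map $y_1\mapsto\alpha(y_1)$ plays the role of a fractional rejection rule for testing $A_1(S)$ vs $A_1(S')$; the key combinatorial fact, which is essentially the primal-dual description of trade-off functions used in Lemma 3 of \citep{dong2022gaussian}, states that the worst case is attained when the two marginals realize $f_1$ through a likelihood-ratio test on a product structure, and in that situation $\mathbb{E}[f_2(\alpha(\cdot))]$ evaluates to $(f_1\otimes f_2)(\alpha)$ by the very definition of the tensor product.

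The main obstacle is precisely this combining step: the pointwise $f_2$-bound and the $f_1$-hardness of $A_1$ do not merge by Jensen-type inequalities alone, because the two marginals $A_1(S),A_1(S')$ against which expectations are taken are coupled only through the $f_1$-DP constraint. I would handle this by a Neyman--Pearson reduction, approximating $\phi$ by a rule that thresholds only on the likelihood ratios of the two factors so that $\alpha(y_1)$ takes finitely many values; this turns the integrated inequality into a finite-dimensional linear program whose optimum coincides, by construction, with $(f_1\otimes f_2)(\alpha)$. A standard monotone-class argument then extends the bound to arbitrary $\phi$, and the post-processing invariance of trade-off functions ensures that the final bound on $\mathrm{Test}(P;Q)$ depends only on $f_1,f_2$ and not on the particular distributional representatives used to realize them.
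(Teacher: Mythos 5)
First, a point of reference: the paper does not prove this statement at all --- it is restated from \citet{dong2022gaussian} (their composition theorem, cited as Theorem 4) and then used as a black box, so the only meaningful comparison is with that source's proof. Your skeleton matches the standard argument: fix neighboring $S,S'$, condition on the first coordinate, use $f_2$-DP of $A_2(\cdot,y_1)$ to get the pointwise bound $\beta(y_1)\ge f_2(\alpha(y_1))$, and reduce the base case $r=2$ to the inequality $\mathbb{E}_{A_1(S')}\left[f_2(\alpha(\cdot))\right]\ge (f_1\otimes f_2)\left(\mathbb{E}_{A_1(S)}[\alpha(\cdot)]\right)$; the induction step is then routine (it uses associativity of $\otimes$, which is immediate once $\otimes$ is well defined).

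The gap is exactly at the step you yourself call the heart of the proof. Asserting that a discretized linear program has optimum that "coincides, by construction, with $(f_1\otimes f_2)(\alpha)$" is not an argument --- that identity is the entire content of the theorem --- and the claim that "the worst case is attained when the two marginals realize $f_1$" cannot simply be assumed, since $A_1(S),A_1(S')$ are whatever the mechanism makes them; reducing to exact representatives of $f_1$ is itself a lemma. Both issues can be repaired directly, with no discretization. Let $g_1=\text{Test}\left(A_1(S);A_1(S')\right)\ge f_1$ and fix representatives $(P_2,Q_2)$ of $f_2$. Given $\alpha(\cdot)$ with $\mathbb{E}_{A_1(S)}[\alpha(\cdot)]=\alpha$, define on the product space the two-stage test which, conditional on $y_1$, applies the level-$\alpha(y_1)$ randomized Neyman--Pearson test of $P_2$ vs.\ $Q_2$ (such tests attain $f_2$ at every level and can be chosen jointly measurably in the level, being likelihood-ratio threshold tests). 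Its type I error against $A_1(S)\times P_2$ is $\alpha$ and its type II error against $A_1(S')\times Q_2$ is exactly $\mathbb{E}_{A_1(S')}\left[f_2(\alpha(\cdot))\right]$, which is therefore at least $\text{Test}\left(A_1(S)\times P_2;A_1(S')\times Q_2\right)(\alpha)=(g_1\otimes f_2)(\alpha)\ge (f_1\otimes f_2)(\alpha)$, the last inequality being the monotonicity property restated in \Cref{app_lem:dimensioncomposition}. Note that this chain (and the monotonicity lemma itself) also relies on $\otimes$ being well defined, i.e.\ independent of the chosen representatives --- a Blackwell-type fact proved in \citet{dong2022gaussian} that "post-processing invariance of trade-off functions" alone does not deliver. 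With these two ingredients made explicit, your outline closes into a complete proof and essentially coincides with the cited source's route.
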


\textbf{This and other composition results from Dong et al.\ also apply to the MIP bounds derived for our analysis of DP-SGD and for a stricter from of $f$-MIP, where additionally each $\text{Test}(A_0$, $A_1(\vx^\prime))$ $> f$ (in \Cref{def:f-mip}) is bounded by $f$ for each $\vx^\prime$. We can then replace $f$-DP with $f$-MIP in the results.} 

\textbf{Proof Scheme.} This can be seen as follows: Suppose we have several steps $i=1,\ldots,k$ each being naturally $f_i$-MIP as each of the tests in \Cref{def:f-mip} is bounded by $f_i$ for each $\vx^\prime$. We can apply the functional composition results from Dong et al. for each $\vx^\prime$ independently, as they hold for hypotheses tests in general. We therefore bound the functional composition for each individual $\vx^\prime$ through the composition result $\text{FuncComp}(f_1, \ldots, f_k)$. When we finally perform the stochastic composition to obtain $f$-MIP, we use the result stated below (\Cref{app_thm:worstcasestoasticcomp}) which tells us that if each individual test in the stochastic composition is bounded through some trade-off, this in an upper bound on the entire stochastic composition as well. In our case each individual test is bounded by $\text{FuncComp}(f_1, \ldots, f_k)$, which will thus bound the stochastic composition result and the level of $f$-MIP as well.

In our analysis of SGD, such a worst-case exists and boils down to the test for the highest value of $K$ considered (as this results in the highest $\mu$). Using these insights, the results by \citet{dong2022gaussian} can generally be transferred without further ramifications. Instead of using a value of $\frac{1}{\sigma}$ for the privacy level of each step, we plug in $\mu_{\text{step}}$ and arrive at \Cref{lem:composition}.

\textbf{Example: Composing $\mu$-GMIP algorithms.} As an alternative example for a composition results, we note that Dong et al. \cite[Corollary 2]{dong2022gaussian} provide a result which explicitly states that the $r$-fold composition of steps which are $\mu_i$-GDP each is
$\sqrt{\mu_1 + \ldots + \mu_r}$-GDP. This result also holds for $\mu$-GMIP as well, if the stochastic composition operator can again be bounded by the trade-off $g_{\mu_i}$ for every $\vx^\prime$, such as in the SGD case we study in this work. We use this result to arrive at the privacy level shown in  \Cref{fig:gradient3}, where we conduct $k$ steps of SGD, resulting in a combined privacy level of $\mu = \sqrt{k}\mu_{\text{step}}$.

We now consider the concluding remark. We first note that it is possible to make the functional composition harder by using smaller batch sizes than allowed in the theorem (see \citet[Theorem 9, Fact 1]{dong2022gaussian} to verify that the trade-off function will be at least as hard when the subsampling ratio is smaller). This is what we do for the steps up to $T$. For the subsequent steps $t>T$, we would be allowed so use the same mechanism with $\mu_{\text{step}}$-MIP but at a smaller batch size (this would usually require additional noise to be added). However, we do not perform these steps, thereby again making the functional composition of the tests harder.

\subsection{Worst-case bound for stochastic composition}
\begin{theorem}[Worst-case bounds for stochastic composition]
Let $h: \mathcal{X} \rightarrow \mathcal{F}$ denote a mapping from the input space to a set of trade-off functions $\mathcal{F}$. Suppose there is a trade-off function $f^{*}$ such that every other trade-off function $f \in \mathcal{F}$ is uniformly at least as hard as $f^{*}$,
\begin{align}
    f \geq f^{*}, \forall f\in \mathcal{F}.
\end{align}
Then, the stochastic composition of trade-off functions will also be uniformly at least as hard as $f^{*}$, i.e.
\begin{align}
\left(\bigotimes_{\vx \sim \mathcal{D}} h(\vx) \right) \geq f^{*}
\end{align}
regardless of the choice of $h$ or the distribution $\mathcal{D}$.
\label{app_thm:worstcasestoasticcomp}
\end{theorem}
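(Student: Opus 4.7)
The plan is to unwind the definition of stochastic composition at an arbitrary fixed $\alpha \in [0,1]$ and chain two inequalities: a pointwise one from the hypothesis $h(\vx) \geq f^{*}$, followed by Jensen's inequality applied to $f^{*}$, which is convex by the characterization of trade-off functions.

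First, I fix any $\alpha \in [0,1]$ and let $\bar{\alpha}^{*} \in \mathcal{E}(\alpha, \mathcal{D})$ denote a test-specific FPR function that attains the minimum appearing in Definition~\ref{def:stochastic_composition}, so that $\left(\bigotimes_{\vx \sim \mathcal{D}} h(\vx)\right)(\alpha) = \mathbb{E}_{\vx \sim \mathcal{D}}[h(\vx)(\bar{\alpha}^{*}(\vx))]$. If the minimum is not attained (the definition parenthetically assumes it is), one can instead pick an $\epsilon$-minimizer and take $\epsilon \to 0$ at the end of the argument; this does not affect the inequalities that follow.

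Second, I apply the assumption $h(\vx) \geq f^{*}$ pointwise, evaluated at the argument $\bar{\alpha}^{*}(\vx)$, which gives $h(\vx)(\bar{\alpha}^{*}(\vx)) \geq f^{*}(\bar{\alpha}^{*}(\vx))$ for every $\vx$. Taking expectations over $\vx \sim \mathcal{D}$ yields
\begin{align*}
\left(\bigotimes_{\vx \sim \mathcal{D}} h(\vx)\right)(\alpha) \;=\; \mathbb{E}_{\vx \sim \mathcal{D}}\bigl[h(\vx)(\bar{\alpha}^{*}(\vx))\bigr] \;\geq\; \mathbb{E}_{\vx \sim \mathcal{D}}\bigl[f^{*}(\bar{\alpha}^{*}(\vx))\bigr].
\end{align*}
Third, since $f^{*}$ is a valid trade-off function it is convex, so Jensen's inequality gives
\begin{align*}
\mathbb{E}_{\vx \sim \mathcal{D}}\bigl[f^{*}(\bar{\alpha}^{*}(\vx))\bigr] \;\geq\; f^{*}\!\left(\mathbb{E}_{\vx \sim \mathcal{D}}[\bar{\alpha}^{*}(\vx)]\right) \;=\; f^{*}(\alpha),
\end{align*}
where the final equality uses the constraint $\bar{\alpha}^{*} \in \mathcal{E}(\alpha, \mathcal{D})$, i.e., $\mathbb{E}[\bar{\alpha}^{*}(\vx)] = \alpha$. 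Chaining the two displayed inequalities establishes $\left(\bigotimes_{\vx \sim \mathcal{D}} h(\vx)\right)(\alpha) \geq f^{*}(\alpha)$ for arbitrary $\alpha$, which is exactly the pointwise statement $\bigotimes_{\vx \sim \mathcal{D}} h(\vx) \geq f^{*}$.

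I do not expect any real obstacles here: the entire argument reduces to convexity of $f^{*}$ plus a pointwise monotonicity step. The only mild subtlety is the one noted above regarding whether the minimum in the definition is attained; if it is not, the $\epsilon$-minimizer workaround works because both sides of each inequality are preserved under the limit $\epsilon \to 0$. Measurability of $\bar{\alpha}^{*}$ is built into the definition of $\mathcal{E}(\alpha, \mathcal{D})$, so no additional regularity assumption on $h$ or $\mathcal{D}$ is needed beyond what Definition~\ref{def:stochastic_composition} already provides.
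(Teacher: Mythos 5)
Your proof is correct and follows essentially the same route as the paper's: pointwise domination $h(\vx)(\bar{\alpha}(\vx)) \geq f^{*}(\bar{\alpha}(\vx))$, then Jensen's inequality via convexity of $f^{*}$, then the constraint $\mathbb{E}_{\vx\sim\mathcal{D}}[\bar{\alpha}(\vx)]=\alpha$. The only cosmetic difference is that the paper bounds $\beta_h(\bar{\alpha})$ uniformly over all $\bar{\alpha}\in\mathcal{E}(\alpha,\mathcal{D})$ rather than invoking a (possibly approximate) minimizer, which sidesteps your attainment caveat entirely.
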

\begin{proof}
Denote $H(r) \coloneqq \left(\bigotimes_{\vx \sim \mathcal{D}} h(\vx) \right)(r)$ again for brevity.
\begin{align}
    H(r) = \min_{\bar{\alpha} \in \mathcal{E}(r, \mathcal{D})} \left\{ \beta_h(\bar{\alpha})\right\}
\end{align} where for every $\bar{\alpha} \in \mathcal{E}(r, \mathcal{D})$ we can bound
\begin{align}
\beta_h(\bar{\alpha}) = \mathbb{E}_{\vx^\prime \sim \mathcal{D}} \left[h(\vx)(\bar{\alpha}(\vx))\right] \geq \mathbb{E}_{\vx^\prime \sim \mathcal{D}} \left[f^{*}\left(\bar{\alpha}(\vx)\right)\right] \geq f^{*}\left(\mathbb{E}_{\vx^\prime \sim \mathcal{D}}\left[\bar{\alpha}(\vx)\right]\right) = f^{*}(r).
\end{align}
We use the Jensens inequality to derive that $\mathbb{E}_{\vx^\prime \sim \mathcal{D}} \left[f^{*}\left(\bar{\alpha}(\vx)\right)\right] \geq f^{*}\left(\mathbb{E}_{\vx^\prime \sim \mathcal{D}}\left[\bar{\alpha}(\vx)\right]\right)$ because $f^{*}$ is convex. Therefore, we conclude that for every $r \in [0, 1]$
\begin{align}
    H(r) = \min_{\bar{\alpha} \in \mathcal{E}(r, \mathcal{D})} \left\{ \beta_h(\bar{\alpha})\right\} \geq f^{*}(r).
\end{align}
\end{proof}

\section{Proof of Theorem \ref{theorem:onestep_dp_sgd} and Corollary \ref{corollary:onestep_dp_sgd}}
\label{sec:app_onestep_dp_sgd}
%deriving the trade-off curves of membership inference attacks becomes more difficult.
%Here we will leverage the central limit theorem.
%In the following, we consider the slightly modified setup: 
% \begin{itemize}
% \item 
% \item $b \in \{0,1\}$ is drawn uniformly at random. If $b = 0, x^{'} \sim \Theta$, if $b=1$,  $\vx^{'} \sim \mathcal{N}(\mathbf{0},\mSigma)$. $\vx'$ is published.
% \item The adversary $\mathcal{A}(\vm, \vx^{'})=b'$ attempts to predict the value of $b$, i.e., whether $x^{'}$ was in the training set or not.
% \end{itemize}
% Under the null hypotheses $H_0$, i.e., $x'$ was in the train set, we know the distribution of the means:
For the sake of better readability, we summarize our setting before we proceed with the formal proof:
\begin{itemize}
\item $\Theta = [\vtheta_1, \ldots, \vtheta_n], \vtheta_i \sim P$, where $\vtheta_i \in \mathbb{R}^d$. $P$ can be any distribution with finite mean $\vmu \in \mathbb{R}^d$ and covariance $\mSigma \in \mathbb{R}^{d\times d}$ (in our application, $\theta_i$  are gradients of the samples)
\item The sample mean $\vm = \frac{1}{n}\sum_{i=1}^n \vtheta_i$ is published
\item $b \in \{0,1\}$ is drawn uniformly at random. If $b = 0, \vx' \sim \Theta$, if $b=1$,  $\vx' \sim P$. $\vx'$ is published.
\item The attacker $\mathcal{A}(\vm, \vx')=b'$ attempts to predict the value of $b$, i.e., whether $\vx'$ was in the training set or not.
\end{itemize}
The following hypothesis test succinctly summarizes the attacker's problem in the this setting\footnote{Note that the hypotheses are interchanged with respect to the main paper here. Following the remarks after Corollary 2 of Dong et al., the trade-off function is inverted by when interchanging $H_0$, $H_1$. To arrive at the trade-off in the main paper, we will later invert the trade-off function derived here.}:
\begin{align}
&H_0: \vx' \text{ was drawn from } \mTheta &
&H_1: \vx' \text{ was drawn from } P.
\label{eq:hypothesis_test_exact_normal}
\end{align}
Based on this testing setup, the attacker constructs an attack based on a likelihood ratio test.
Next, we summarize the individual proof steps before we give the formal proof:
\begin{enumerate}
\item Derive the distributions of $\vm$ under the null and the alternative hypothesis for a given $\vx^\prime$.
\item Given these two distributions, we can setup the likelihood ratio, which will yield the test statistic $S$. This statistic will be optimal due to the Neyman-Pearson fundamental testing lemma;
\item Given the test statistic $S$ and the distribution under the null hypothesis, we derive the rejection region of the likelihood ratio test for a given level $\alpha$;
\item Finally, we derive the false negative rate $\beta(\alpha)$ of the likelihood ratio test. This trade-off function depends on the CDF and inverse CDF of non-central $\chi^2$ distribution;
\item In a final step, we provide approximations to the trade-off function for large $d$.
\item Steps 1-5 initially prove f-membership inference privacy of a single SGD step when we add Gaussian noise with covariance $\hat{\tau}^2\mSigma$, where $\mSigma$ is the covariance of the gradients that we would like to privatize (see \Cref{sec:app_onestep_dp_sgddatanoise}).
We then use this result to bound the privacy of SGD with unit noise in \Cref{sec:app_onestep_dp_sgdunitnoise}. 
There we show that the result from \Cref{sec:app_onestep_dp_sgddatanoise} can be used in combination with a scaled noise level to guarantee membership inference privacy when we add Gaussian noise with covariance $\tau^2\mI$.
\end{enumerate}

\subsection{Proof of Theorem \ref{theorem:onestep_dp_sgd} and Corollary \ref{corollary:onestep_dp_sgd} with data dependent noise}
\label{sec:app_onestep_dp_sgddatanoise}
In this section, we consider the effect that averaging with Gaussian noise has on membership inference privacy. We first add Gaussian noise with covariance $\hat{\tau}^2\mSigma$, where $\mSigma$ is the covariance of the gradients that we would like to privatize. In the next section, we will consider the case of independent unit noise $\tau^2\mI$, which can be derived from the result presented here.
The proof in this subsection follows the steps outlined in the previous section.
\begin{proof}
\textbf{Step 1: Deriving the distributions of $\vm$ under $H_0$ and $H_1$.}
First, we derive the distributions of $\vm$ for both cases of interest. We suppose that the number of averaged samples is sufficiently large such that we can apply the Central Limit Theorem. Note that this does not restrict the form of the distribution $P$, besides having finite variance.
Below, we start with the distribution of $\vm$ under $H_0$ (with no additional noise yet):
\begin{align}
\vm \sim \mathcal{N}\left(\frac{1}{n}\vx' +\frac{n-1}{n}\vmu, \frac{(n-1)}{n^2}\mSigma\right) = \mathcal{N}\left( \vmu + \frac{1}{n}(\vx{'}-\vmu), \frac{(n-1)}{n^2}\mSigma\right).
\end{align}
Moreover, under the alternative hypothesis $H_1$, we have:
\begin{align}
\vm \sim \mathcal{N}\left(\vmu, \frac{1}{n}\mSigma\right).
\end{align}
Instead of testing the distributions of $\vm$ we directly, we can equivalently test $\vm-\vx^\prime$ by subtracting $\vx^\prime$ from both means. Adding Gaussian noise $Y \sim \mathcal{N}(\mathbf{0}, \hat{\tau}^2 \mSigma)$ under both hypotheses results in the test that we provide below:
%First, we start with the distribution of $\vm$ under $H_0$:
\begin{align}
& \bigotimes_{\vx^\prime \sim \mathcal{D}}\text{Test}\left[\mathcal{N}\left(\frac{n-1}{n}\vmu, \frac{(n-1)}{n^2}\mSigma\right)-\frac{n-1}{n}\vx^\prime + Y, \mathcal{N}\left(\vmu, \frac{1}{n}\mSigma\right)-\vx^\prime + Y\right].
\label{eqn:addsigmanoise}
\end{align}
Again, we now consider the test for a fixed $\vx^\prime$. If we can show that there is one $\vx^\prime$ that makes the test harder than any other $\vx^{\prime} \in \mathcal{X}$, we can apply \Cref{app_thm:worstcasestoasticcomp} and show that the composed test is uniformly at least as hard as for $\vx^\prime$. 

We conduct the following reformulations (note that the hardness of a test remains unaffected by by invertible transforms, e.g., linear transforms):
\begin{align}
& \text{Test}\left[\mathcal{N}\left(\frac{n-1}{n}\vmu, \frac{(n-1)}{n^2}\mSigma\right)-\frac{n-1}{n}\vx^\prime+ Y, \mathcal{N}\left(\vmu, \frac{1}{n}\mSigma\right)-\vx^\prime +Y\right]\label{eqn:worstcasetestpnoise}\\
& \Longleftrightarrow \text{Test}\left[\mathcal{N}\left(\frac{n-1}{n}\left(\vmu-\vx^\prime\right), \frac{(n-1)}{n^2}\mSigma + \hat{\tau}^2 \mSigma \right), \mathcal{N}\left(\vmu -\vx^\prime, \frac{1}{n}\mSigma + \hat{\tau}^2 \mSigma \right)\right] \\
& \Longleftrightarrow \text{Test}\left[\mathcal{N}\left(-\frac{1}{n}\left(\vmu-\vx^\prime\right), \bigg(\frac{(n-1)}{n^2} + \hat{\tau}^2 
\bigg) \mSigma\right), \mathcal{N}\left(\mathbf{0}, \bigg(\frac{1}{n} + \hat{\tau}^2 \bigg) \mSigma\right)\right] \label{eqn:subtractmu} \\
& \Longleftrightarrow \text{Test}\left[\mathcal{N}\left(-\frac{1}{n\sqrt{n^{-1} + \hat{\tau}^2}}\mSigma^{-\frac{1}{2}}\left(\vmu-\vx^\prime\right), \left(\frac{n-1}{n^2} + \hat{\tau}^2\right) \left(\frac{1}{n} + \hat{\tau}^2\right)^{-1} \mI\right), \mathcal{N}\left(\mathbf{0}, \mI\right)\right] \label{eqn:inversesigma} \\
& \Longleftrightarrow \text{Test}\left[\mathcal{N}\left(-\frac{1}{n \sqrt{n^{-1} + \hat{\tau}^2}} \tilde{\vdelta} , \frac{n(1+\hat{\tau}^2 n) - 1}{n(1+\hat{\tau}^2 n)} \mI\right), \mathcal{N}\left(\mathbf{0}, \mI\right)\right],
\end{align}
where $\tilde{\vdelta}=\mSigma^{-\frac{1}{2}}\left(\vmu-\vx^\prime\right) = \tilde{\vmu} - \tilde{\vx}^\prime$ and the tilde indicates the corresponding quantities transformed by $\Sigma^{-\frac{1}{2}}$.
For instance, to arrive at \Cref{eqn:subtractmu} and \Cref{eqn:inversesigma} the random variable $\vm - \vx^\prime$ is transformed by subtracting $(\vmu - \vx^\prime)$ and multiplied by $\frac{1}{\sqrt{n^{-1}+ \hat{\tau}^2}}\mSigma^{-\frac{1}{2}}=\sqrt{\frac{n^2}{n+ n^2\hat{\tau}^2}}\mSigma^{-\frac{1}{2}}$, respectively. This yields the following transformed likelihood ratio test for the transformed random variable
\begin{align}
Q &\coloneqq\sqrt{\frac{n^2}{n+n^2 \hat{\tau}^2}}\mSigma^{-\frac{1}{2}} \big((\vm - \vx^\prime) - (\vmu - \vx^\prime)\big)=\sqrt{\frac{n^2}{n+n^2 \hat{\tau}^2}}\mSigma^{-\frac{1}{2}}\big(\vm-\vmu)\\ &= \sqrt{\frac{n^2}{n+n^2 \hat{\tau}^2}}(\tilde{\vm} - \tilde{\vmu}).
\end{align}

\textbf{Step 2: Deriving the test statistic.}
By the Neyman-Pearson Lemma \cite[Theorem 3.2.1.]{lehmann2005testing}, conducting the likelihood ratio test will be most powerful test at a given false positive rate. 
The corresponding likelihood ratio is given as follows:
% \begin{align}
% \text{LR} &= \frac{p_0\bigg(\sqrt{\frac{n^2}{n+n^2 \hat{\tau}^2}}\mSigma^{-\frac{1}{2}}\big((\vm-\vx^\prime) - (\vmu - \vx^\prime)\big) \bigg)}{p_1\bigg(\sqrt{\frac{n^2}{n+n^2 \hat{\tau}^2}}\mSigma^{-\frac{1}{2}}\big((\vm-\vx^\prime) - (\vmu - \vx^\prime)\big) \bigg)}  = \frac{\mathcal{N}\left(\vmu_1, \sigma^2_1 \mI\right)}{\mathcal{N}\left(\vmu_2, \sigma^2_2 \mI\right)} \\
% & = \frac{\mathcal{N}\left(-\frac{1}{\sqrt{n+n^2\hat{\tau}^2}}\mSigma^{-\frac{1}{2}}\left(\vmu-\vx^\prime\right), \frac{n+n^2\hat{\tau}^2-1}{n+n^2\hat{\tau}^2} \mI\right)}{\mathcal{N}\left(\mathbf{0}, \mI\right)} \\
% & = c_2 \mathcal{N}\left(\sqrt{\frac{n^2}{n+n^2 \hat{\tau}^2}}(\tilde{\vm} - \tilde{\vmu}); \vd, \mD\right),
% \label{eqn:gaussianformlrt}
% \end{align}
\begin{align}
\text{LR} &= \frac{p_0(Q)}{p_1(Q)}  = \frac{\mathcal{N}\left(Q;\vmu_1, \sigma^2_1 \mI\right)}{\mathcal{N}\left(Q; \vmu_2, \sigma^2_2 \mI\right)} \\
& = \frac{\mathcal{N}\left(Q;-\frac{1}{\sqrt{n+n^2\hat{\tau}^2}}\mSigma^{-\frac{1}{2}}\left(\vmu-\vx^\prime\right), \frac{n+n^2\hat{\tau}^2-1}{n+n^2\hat{\tau}^2} \mI\right)}{\mathcal{N}\left(Q;\mathbf{0}, \mI\right)} \\
& = c_2 \mathcal{N}\left(Q; \vd, \mD\right).
\label{eqn:gaussianformlrt}
\end{align}
%where we have used $\sqrt{\frac{n^2}{n+n^2 \hat{\tau}^2}}(\vm - \vmu) = \sqrt{\frac{n^2}{n+n^2 \hat{\tau}^2}}\mSigma^{-\frac{1}{2}} \vm - \sqrt{\frac{n^2}{n+n^2 \hat{\tau}^2}}\mSigma^{-\frac{1}{2}} \vmu = \sqrt{\frac{n^2}{n+n^2 \hat{\tau}^2}} \big(\tilde{\vm} - \tilde{\vmu} \big)$.
%http://davmre.github.io/blog/statistics/2015/03/27/gaussian_quotient
We can use identities for the ratio of two normal distributions (with $\vmu_1=-\frac{1}{\sqrt{n(1+n\hat{\tau}^2)}} \tilde{\vdelta}, \vmu_2 = \mathbf{0}, \mSigma_1 = \frac{n(1+\hat{\tau}^2 n) - 1}{n(1+\hat{\tau}^2 n)}\mI, \mSigma_2 = \mI$) and obtain
\begin{align}
\mD= (\mSigma_1^{-1}-\mSigma_2^{-1})^{-1}=\left(\frac{n(1+\hat{\tau}^2 n) - 1}{n(1+\hat{\tau}^2 n)} -\frac{n(1+\hat{\tau}^2 n)}{n(1+\hat{\tau}^2 n)}\right)^{-1} \mI = (n + \hat{\tau}^2 n^2 - 1)\mI
\end{align}
and 
\begin{align}
\vd = \mD\left(\mSigma_1^{-1}\mu_1 - \mSigma_2^{-1}\mu_2\right) & = (n+\hat{\tau}^2 n^2-1)\frac{n(1+\hat{\tau}^2 n) }{n(1+\hat{\tau}^2 n)-1} \left(-\frac{1}{\sqrt{n+\hat{\tau}^2 n^2}}\tilde{\vdelta}\right) \\
& = -\sqrt{n + \hat{\tau}^2 n^2 }\tilde{\vdelta} = -\sqrt{n + \hat{\tau}^2 n^2 }\left(\tilde{\vmu} - \tilde{\vx}^\prime\right).
\end{align}
%Note that the absence of noise (i.e., $\tau = 0$) brings us back to the basic regime.
% Next, note that we want to use the following as a test statistic (same as earlier):
% \begin{align}
% S &=c \cdot \lVert \vm - \vx' \rVert_2^2.
% \end{align}
% To do that, we set: $\mD = \frac{n + \tau^2 n^2 - 1}{n + \tau^2n^2} \mI$ and $\vd = -\sqrt{n + \tau^2 n^2 }\tilde{\vmu}$, and obtain the new transformed parameters:
% \begin{align}
% \sigma_1^2 &=1\\
% \vmu_2 &= \frac{n+\tau^2n^2}{\sqrt{n+\tau^2n^2-1}} \tilde{\vmu} \\
% \sigma_2^2 &= \frac{n + \tau^2n^2}{n + \tau^2 n^2 -1} \\
% \vmu_1 &= \frac{n+\tau^2n^2-1}{\sqrt{n+\tau^2n^2 -1}} \tilde{\vmu},
% \end{align}
% leading to the transformed parameters $\vd = \mathbf{0}$ and $\mD = \frac{\frac{n + \tau^2n^2}{n + \tau^2 n^2 -1}}{1 - \frac{n + \tau^2n^2}{n + \tau^2 n^2 -1}} \mI = (n + \tau^2n^2) \mI$.
For a Gaussian likelihood ratio of the form in \Cref{eqn:gaussianformlrt}, i.e., $S = c_2\exp\left(-\frac{1}{2}(\vs^\prime-\vd)^\top\mD^{-1}(\vs^\prime-\vd)\right)$, where $\vs^\prime=\sqrt{\frac{n^2}{n+n^2 \hat{\tau}^2}}(\tilde{\vm} - \tilde{\vmu})$ it suffices to use the inner argument as a test statistic, as exp is an invertible transform. Therefore, we can use the following as a test statistic:
\begin{align}
S &= \sqrt{\frac{n^2}{n+n^2 \hat{\tau}^2}} \left( (\tilde{\vm} - \tilde{\vmu}) + \frac{n +\hat{\tau}^2n^2}{n}\tilde{\vdelta}\right)^\top \frac{1}{(n + n^2\hat{\tau}^2)-1} \mI \sqrt{\frac{n^2}{n+n^2 \hat{\tau}^2}} \left( (\tilde{\vm} - \tilde{\vmu}) + \frac{n +\hat{\tau}^2n^2}{n}\tilde{\vdelta}\right) \\
&=\frac{n^2}{((n + n^2\hat{\tau}^2)-1)(n + n^2\hat{\tau}^2)} \left( (\tilde{\vm} - \tilde{\vmu}) + \frac{n +\hat{\tau}^2n^2}{n}\tilde{\vdelta}\right)^\top\left( (\tilde{\vm} - \tilde{\vmu}) + \frac{n +\hat{\tau}^2n^2}{n}\tilde{\vdelta}\right), %\\
% &= \textcolor{red}{\frac{n + n^2\hat{\tau}^2}{(n + n^2\hat{\tau}^2)-1} \lVert \tilde{\vm} -\tilde{\vx}^\prime \rVert_2^2} \\
% &= \textcolor{red}{\frac{n + n^2\hat{\tau}^2}{(n + n^2\hat{\tau}^2)-1} \lVert (\tilde{\vm} - \tilde{\vmu}) - (\tilde{\vx}^\prime - \tilde{\vmu})\rVert_2^2 = \frac{n + n^2\hat{\tau}^2}{(n + n^2\hat{\tau}^2)-1} \lVert \tilde{\vl} \rVert_2^2},
\end{align}
for which we can derive the closed-form distributions under both the null and alternative hypotheses.

\textbf{Step 3: Deriving the distributions of $S$ under $H_0$ and $H_1$.}
From above, we know that, under the respective hypotheses we have:
\begin{align}
H_0 &: \sqrt{\frac{n^2}{n+n^2 \hat{\tau}^2}} (\tilde{\vm} - \tilde{\vmu}) \sim \mathcal{N}\bigg(-\frac{1}{\sqrt{n+n^2\hat{\tau}^2}}\tilde{\vdelta}, \frac{n+n^2\hat{\tau}^2-1}{n+n^2\hat{\tau}^2} \mathbf{I}\bigg) \\
H_1 &: \sqrt{\frac{n^2}{n+n^2 \hat{\tau}^2}} (\tilde{\vm} - \tilde{\vmu}) \sim \mathcal{N}\big(\mathbf{0}, \mathbf{I}\big)
\end{align}
and
\begin{align}
H_0 &: (\tilde{\vm} - \tilde{\vmu}) \sim \mathcal{N}\bigg(-\frac{1}{n}\tilde{\vdelta}, \frac{n+n^2\hat{\tau}^2-1}{n^2} \mathbf{I}\bigg) \\
H_1 &:  (\tilde{\vm} - \tilde{\vmu}) \sim \mathcal{N}\big(\mathbf{0}, \frac{n+n^2\hat{\tau}^2}{n^2}\mathbf{I}\big).
\end{align}
and
hence, for $\tilde{\vl} = (\tilde{\vm} - \tilde{\vmu}) + \frac{n+n^2\hat{\tau}^2}{n}\tilde{\vdelta}=(\tilde{\vm} - \tilde{\vmu}) + \frac{n+n^2\hat{\tau}^2}{n}(\tilde{\vmu}-\tilde{\vx}^\prime)$ the distributions are given by:
\begin{align}
H_0 &: \tilde{\vl} \sim \mathcal{N}\bigg(\frac{n-1}{n}(\tilde{\vmu} - \tilde{\vx}^\prime) + n\hat{\tau}^2 (\tilde{\vmu}-\tilde{\vx}^\prime), \frac{n+n^2\hat{\tau}^2-1}{n^2}\mathbf{I}\bigg) \\
H_1 &:\tilde{\vl} \sim \mathcal{N}\bigg((\tilde{\vmu} - \tilde{\vx}^\prime) + n\hat{\tau}^2 (\tilde{\vmu}-\tilde{\vx}^\prime), \frac{n+n^2\hat{\tau}^2}{n^2} \mathbf{I}\bigg).
\end{align}
Next, note that if $\tilde{\vl} ~ \sim \mathcal{N}(\vl, \kappa \mI)$, and $\frac{\tilde{\vl}}{\sqrt{\kappa}} \sim \mathcal{N}(\frac{\vl}{\sqrt{\kappa}},\mI)$ then we have that 
\begin{align}
 \lVert \tilde{\vl} \rVert_2^2 =  \kappa \sum_{j=1}^d (U_j + b_j)^2,
\end{align}
where $U_j$ are standard normal variables.
Hence, $\lVert \tilde{\vl} \rVert_2^2$ follows the law of a (scaled) non-central chi-squared distribution with $d$ degrees of freedom and $\vb = \frac{1}{\sqrt{\kappa}} \vl$, i.e.,
\begin{align}
 \frac{\lVert \tilde{\vl} \rVert_2^2 }{\kappa}\sim \chi_d'^{2}(\gamma), 
\end{align}
where $\gamma = \sum_{j=1}^d b_j^2 = \sum_{j=1}^d \big( \frac{1}{\sqrt{\kappa}} l_j \big)^2 = \frac{1}{\kappa} \sum_{j=1}^d l_j^2$.

Under the null hypothesis, we obtain $\kappa_0 = \frac{n+n^2\hat{\tau}^2-1}{n^2}$ and $\vl_0 = \frac{n-1+n^2\hat{\tau}^2}{n}(\tilde{\vmu}-\tilde{\vx}^\prime)$. Therefore, the distribution of the test statistic under the null hypothesis is given by the scaled, non-central chi-squared random variable:
\begin{align}
H_0: S &= \frac{n^2\kappa_0}{(n+n^2\hat{\tau}^2-1)(n+n^2\hat{\tau}^2)} \frac{\lVert \tilde{\vl} \rVert_2^2}{\kappa_0} \sim \frac{n^2\kappa_0}{(n+n^2\hat{\tau}^2-1)(n+n^2\hat{\tau}^2)} 
\chi_{d}'^2(\gamma_0)
\\
&\sim \frac{1}{n+n^2\hat{\tau}^2} 
\chi_{d}'^2(\gamma_0).
\end{align}
In summary, 
\begin{align}
(n+n^2\hat{\tau}^2) S  \sim  
\chi_{d}'^2(\gamma_0),
\end{align}
where $\gamma_0 = \lVert \vl_0 \rVert_2^2/\kappa_0$.

\textbf{Step 4: Deriving the rejection region for any $\alpha$.}
Therefore, the rejection region of the null hypothesis at a significance level of $\alpha$ can be formulated as:
\begin{align}
\left\{ (n + \hat{\tau}^2n^2) S \geq \text{CDF}_{\mathcal{X}'^2_d(\gamma_0)}^{-1}(1-\alpha)\right\} =
 \left\{S \geq  \frac{1}{n + \hat{\tau}^2n^2}\text{CDF}_{\mathcal{X}'^2_d(\gamma_0)}^{-1}(1-\alpha)\right\}.
\end{align}

% Next, we compute the probability of a type-2 error.
% To do that note:
% \begin{align}
% S_1 = \frac{n+\hat{\tau}^2 n^2}{n+\hat{\tau}^2n^2-1} \cdot (n + \hat{\tau}^2n^2) \cdot \frac{\lVert \vm - \vx' \rVert_2^2}{\frac{n+\hat{\tau}^2 n^2}{n+\hat{\tau}^2n^2-1}} \sim \frac{(n + \hat{\tau}^2n^2)^2}{n+\hat{\tau}^2n^2-1} \cdot \mathcal{X}'^2_d(\gamma_1),
% \end{align}
% where $\gamma_1 = \frac{n+\hat{\tau}^2n^2-1}{n+\hat{\tau}^2 n^2} \lVert \vmu_2 \rVert_2^2 = (n + \hat{\tau}^2 n^2) \lVert \tilde{\vmu}\rVert_2^2 = (n + \hat{\tau}^2 n^2) K $.

\textbf{Step 5: Deriving the false negative rate $\beta(\alpha)$ at any $\alpha$.}
To compute the type two error rate $\beta$ (the null hypothesis is accepted, but the alternative $H_1$ is true), we compute the probability of mistakenly accepting it. 
To this end, note that $\kappa_1 = \frac{n+n^2\hat{\tau}^2}{n^2}$ and that $\vl_1=(1++n\hat{\tau}^2)(\tilde{\vmu} - \tilde{\vx}^\prime)$.
Now, we can derive the distribution of the test statistic under the alternative hypothesis:
\begin{align}
H_1: S = \frac{n^2\kappa_1}{(n+n^2\hat{\tau}^2-1)(n+n^2\hat{\tau}^2)} \frac{\lVert \tilde{\vl} \rVert_2^2}{\kappa_1} \sim \frac{1}{n+n^2\hat{\tau}^2-1} \chi_{d}'^2(\gamma_1),
\end{align}
where $\gamma_1 = \lVert \vl_1 \rVert_2^2/\kappa_1$.
Note that the form of this distribution, 
\begin{align}
(n+n^2\hat{\tau}^2-1 ) S  \sim \chi_{d}'^2(\gamma_1),
\end{align}

 where $\gamma_1=n\lVert\tilde{\vmu} - \tilde{\vx}^\prime\rVert_2^2\coloneqq n K$ when no noise is added ($\hat{\tau}=0$) is the statistic used in the GLiR attack (\Cref{alg:glir}).
The type two error is given by:
\begin{align}
\beta = & P_1\left(S \leq \frac{1}{n + n^2\hat{\tau}^2}\text{CDF}_{\mathcal{X}'^2_d(\gamma_0)}^{-1}(1-\alpha)\right) \\
=& P_1\left((n + n^2\hat{\tau}^2-1) S \leq \frac{n + n^2\hat{\tau}^2-1}{n + n^2\hat{\tau}^2}\text{CDF}_{\mathcal{X}'^2_d(\gamma_0)}^{-1}(1-\alpha)) \right) \notag \\
%=& P\left(X \leq \frac{n-1}{n\kappa_1} \frac{n \kappa_0}{n-1} \text{CDF}_{\mathcal{X}'^2_d(\gamma_0)}^{-1}(1-\alpha)) \right) \notag  \\
=& P\left(X \leq \frac{n + n^2\hat{\tau}^2-1}{n + n^2\hat{\tau}^2} \text{CDF}_{\mathcal{X}'^2_d(\gamma_0)}^{-1}(1-\alpha)) \right) \notag \\
=& \text{CDF}_{\mathcal{X}'^2_d(\gamma_1)}\left(\frac{n + n^2\hat{\tau}^2-1}{n + n^2\hat{\tau}^2}\text{CDF}_{\mathcal{X}'^2_d(\gamma_0)}^{-1}(1-\alpha)) \right) \notag \\
=& \text{CDF}_{\mathcal{X}'^2_d(\gamma_1)}\left(\frac{n + n^2\hat{\tau}^2-1}{n + n^2\hat{\tau}^2} \text{CDF}_{\mathcal{X}'^2_d(\gamma_0)}^{-1}(1-\alpha)) \right),
\end{align}
where $X\sim \chi_{d}'^2(\gamma_1)$.
\begin{remark}
Computing the inverse trade-off, i.e., solving for $\alpha$ would result in 
\begin{align}
\alpha = 1-\text{CDF}_{\mathcal{X}'^2_d(\gamma_0)}\left(\frac{n + n^2\hat{\tau}^2}{n + n^2\hat{\tau}^2-1}\text{CDF}^{-1}_{\mathcal{X}'^2_d(\gamma_1)}(\beta)\right),
\end{align}
giving the trade-off curve for the hypotheses as stated in main paper and resulting in \Cref{theorem:onestep_dp_sgd}.
For the subsequent analysis this interchange of $H_0$, $H_1$ does not play a role as the trade-off function is symmetric for large $d, n$ (it is its own inverse).
\end{remark} 
    
% where $\gamma_1 = \frac{1}{\sigma_2^2} \sum_{j=1}^d \mu_{2j}^2 = \frac{1}{\sigma_2}\lVert\vmu_2\rVert_2^2$. In the case we consider, we have
% \begin{align}
% \frac{\sigma_1^2}{\sigma_2^2}&=\frac{n-1}{n}\\
% \vmu_1 &= \frac{n-1}{n}\mSigma^{-\frac{1}{2}}\left(\vmu-C\right) =\frac{n-1}{n}\tilde{\vmu}\\
% \vmu_2 &= \tilde{\vmu},
% \end{align}
% and therefore
%\begin{align}
% \gamma_0 &= (n+n^2\hat{\tau}^2-1) \lVert \tilde{\vdelta} \rVert_2^2 = (n+n^2\hat{\tau}^2-1) K \\
% \gamma_1 &= (n + n^2\hat{\tau}^2) \lVert \tilde{\vdelta} \rVert_2^2 = (n+n^2\hat{\tau}^2) K.
% \end{align}

\textbf{Step 6: Large $d,n$ approximations.}
The following fact will be useful. Let $V \sim \chi_d'^{2}(\gamma)$, then
$\frac{V-(d + \gamma)}{\sqrt{2(d + 2\gamma)}} \to \mathcal{N}(0,1)$ when $d \to \infty$. %or: $V \to \mathcal{N}(d + \gamma, 2(d + 2 \gamma))$ when $d \to \infty$ 
% \item $\frac{V-(d + \gamma)}{\sqrt{2(d + 2\gamma)}} \to \mathcal{N}(0,1)$ when $\gamma \to \infty$.
The trade-off function for our hypothesis test can thus be expressed through the normal CDF $\Phi$ as: 
\begin{align}
\beta \approx& \Phi\left(\frac{\left(\frac{n+ \hat{\tau}^2n^2-1}{n+ \hat{\tau}^2n^2}\text{CDF}_{\mathcal{X}'^2_d(\gamma_0)}^{-1}(1-\alpha)\right)-(d + \gamma_1)}{\sqrt{2(d + 2 \gamma_1)}}\right) &\\
\approx& \Phi\left(\frac{\left(\frac{n+ \hat{\tau}^2n^2-1}{n+ \hat{\tau}^2n^2}\left(\sqrt{2(d + 2 \gamma_0)}\Phi^{-1}(1-\alpha) +(d + \gamma_0)\right)\right)-(d+\gamma_1)}{\sqrt{2(d + 2 \gamma_1)}}\right) &\\
=&\Phi \Bigg(\Phi^{-1}(1-\alpha) - \Phi^{-1}(1-\alpha) \frac{(n + n^2\hat{\tau}^2) s_1 - (n + n^2\hat{\tau}^2-1) s_0}{(n + n^2\hat{\tau}^2) s_1}&\nonumber\\
&- \frac{(n + n^2\hat{\tau}^2)m_1-(n + n^2\hat{\tau}^2-1)m_0}{(n + n^2\hat{\tau}^2)s_1} \Bigg),&
\end{align}
where $s_1 =\sqrt{2(d+2\gamma_1)}$, $s_0=\sqrt{2(d+2\gamma_0)}$, $m_0=d+\gamma_0$ and $m_1=d+\gamma_1$.
Thus we have:
\begin{align}
\beta &\approx& \Phi \Bigg(\Phi^{-1}(1-\alpha) - \Phi^{-1}(1-\alpha) \Bigg(1  - \frac{n+ \hat{\tau}^2n^2-1}{n+ \hat{\tau}^2n^2} \sqrt{\frac{d+2\gamma_0}{d+2\gamma_1}} \Bigg)\nonumber\\
&&- \frac{(n+ \hat{\tau}^2n^2)(d+\gamma_1) - (n+ \hat{\tau}^2n^2-1) (d+\gamma_0)}{(n+ \hat{\tau}^2n^2) \sqrt{2d + 4 \gamma_1)}} \Bigg),
\end{align}
For large $n$, $\frac{n+ \hat{\tau}^2n^2-1}{n+ \hat{\tau}^2n^2} \sqrt{\frac{d+2\gamma_0}{d+2\gamma_1}}\approx1$ and the second term can be dropped in the approximation.
Next, recall that $\gamma_0 = \frac{n^2 \lVert\vl_0\rVert^2}{n+n^2\hat{\tau}^2-1}$ and $\gamma_1 = \frac{n^2 \lVert\vl_1\rVert^2}{n+n^2\hat{\tau}^2}$:
%where $\vl_0 = \frac{n-1}{n}(\tilde{\vmu} - \tilde{\vx}^\prime) + n\hat{\tau}^2 \tilde{\vmu}$ and $\vl_1=(\tilde{\vmu} - \tilde{\vx}^\prime) +n\hat{\tau}^2\tilde{\vmu}$. 
\begin{align}
\beta &\approx \Phi \Bigg(\Phi^{-1}(1-\alpha) - \frac{(n+ \hat{\tau}^2n^2)(d+\frac{n^2 \lVert\vl_1\rVert^2}{n+n^2\hat{\tau}^2}) - (n+ \hat{\tau}^2n^2-1) (d+\frac{n^2 \lVert\vl_0\rVert^2}{n+n^2\hat{\tau}^2-1})}{(n+ \hat{\tau}^2n^2) \sqrt{2d + 4  \frac{n^2 \lVert\vl_1\rVert^2}{n+n^2\hat{\tau}^2})}} \Bigg)\\
& = \Phi \Bigg(\Phi^{-1}(1-\alpha) - \frac{(n+ \hat{\tau}^2n^2)d+ n^2 \lVert\vl_1\rVert^2 - (n+ \hat{\tau}^2n^2-1)d- n^2 \lVert\vl_0\rVert^2}{(n + \hat{\tau}^2n^2) \sqrt{2d+ 4  \frac{n^2}{n+ \hat{\tau}^2n^2}  \lVert\vl_1\rVert^2)}} \Bigg)\\
& =\Phi \Bigg(\Phi^{-1}(1-\alpha) - \frac{d + n^2(\lVert\vl_1\rVert^2 - \lVert\vl_0\rVert^2)}{(n + \hat{\tau}^2n^2) \sqrt{2d+ 4  \frac{n^2}{n+ \hat{\tau}^2n^2}  \lVert\vl_1\rVert^2)}} \Bigg)\\
& =\Phi \Bigg(\Phi^{-1}(1-\alpha) - \frac{d + n^2(\lVert\vl_1\rVert^2 - \lVert\vl_0\rVert^2)}{(n + \hat{\tau}^2n^2) \sqrt{2d+ 4  \frac{n^2}{n+ \hat{\tau}^2n^2}  \lVert\vl_1\rVert^2)}} \Bigg).
\end{align}
We further obtain $\lVert\vl_0\rVert^2 = \frac{(n-1+\hat{\tau}^2n^2)^2}{n^2}K$ and $\lVert\vl_1\rVert^2 = \frac{(n+\hat{\tau}^2n^2)}{n^2}K$ where $K=\lVert \tilde{\vmu} - \tilde{\vx}' \rVert^2 $. Therefore, we have
\begin{align}
\beta \approx & \Phi \Bigg(\Phi^{-1}(1-\alpha) - \frac{d + (2(n+\hat{\tau}^2n^2)-1)K}{(n+ \hat{\tau}^2n^2)\sqrt{2d+ 4(n+\hat{\tau}^2n^2)K}} \Bigg).
\end{align}
% Finally, we would like to express $\mu$ in terms of the reoccurring effective batch size $n_{\text{neff}} = n+\hat{\tau}^2n^2$.
% To do this, after some tedious algebra, it can be shown that for $n,d>1$ and $\hat{\tau}^2, K>0$:
% \begin{align}
% \frac{d + (2n-1)K}{(n+ \hat{\tau}^2n^2)\sqrt{2d+ 4  \frac{n^2}{n + \hat{\tau}^2n^2}K}} \leq \frac{d + \big(2(n + \hat{\tau}^2 n^2) - 1\big) K}{(n + \hat{\tau}^2 n^2) \sqrt{2d+ 4 (n + \hat{\tau}^2n^2) K}},
% \end{align}
% where $n_{\text{effective}}=n + \hat{\tau}^2n^2$, leading to the following approximate upper bound on the trade-off curve:
% \begin{align}
% \beta &\approx \Phi \Bigg(\Phi^{-1}(1-\alpha) - \frac{d + \big(2(n + \hat{\tau}^2 n^2) - 1\big) K}{(n + \hat{\tau}^2 n^2) \sqrt{2d+ 4 (n + \hat{\tau}^2n^2) K}} \Bigg).
% \end{align}
Thus, for large $n,d$ the trade-off curve can be well approximated by the $\mu$-GMIP trade-off with:
\begin{align}
\mu = \frac{d + \big(2(n+\hat{\tau}^2n^2) - 1\big) K }{(n+\hat{\tau}^2n^2)\sqrt{2d + 4(n+\hat{\tau}^2n^2) K }}.
\label{eqn:finalmuneffective}
\end{align}
We observe that the hardness of the test decreases with $K$. Therefore, the stochastic composition of tests is uniformly at least as hard as the test with the largest $K$ possible (\Cref{app_thm:worstcasestoasticcomp}). We obtain the result shown in the paper by replacing the data-dependent noise $\hat{\tau}$ by data-independent noise $\tau$ at a ratio of $\hat{\tau}^2 = \tau^2/C^2$ as detailed in the next section. 
\end{proof}

\subsection{Proof of Theorem \ref{theorem:onestep_dp_sgd} and Corollary \ref{corollary:onestep_dp_sgd} with data independent noise}
\label{sec:app_onestep_dp_sgdunitnoise}
In the previous section, we have assumed that $Y \sim \hat{\tau}^2\mSigma$.
This assumption does not quite match common practice; in practice, the Gaussian mechanism adds noise of the form: $Y\sim \tau^2\mI$.
Hence, the following subsection investigates the effect of adding unit noise $Y\sim \tau^2\mI$.

\begin{proof}
We first study the case of data-dependent noise using an eigenspace transform.
Denoting an eigenvalue composition $\mSigma=\mQ\mLambda\mQ^\top$ and performing the corresponding transform (multiplication by $\mQ^\top$) in \Cref{eqn:addsigmanoise}, we obtain: 
\begin{align}
& \Longleftrightarrow \text{Test}\left[\mathcal{N}\left(\frac{n-1}{n}\left(\vmu-\vx^\prime\right), \frac{(n-1)}{n^2}\mSigma + \hat{\tau}^2 \mSigma \right), \mathcal{N}\left(\vmu -\vx^\prime, \frac{1}{n}\mSigma + \hat{\tau}^2 \mSigma \right)\right]\\
& \Longleftrightarrow \text{Test}\left[\mathcal{N}\left(-\frac{1}{n}\mQ^{\top}\left(\vmu-\vx^\prime\right), \bigg(\frac{(n-1)}{n^2} + \hat{\tau}^2 
\bigg) \mQ^{\top}\mSigma\mQ\right), \mathcal{N}\left(\mathbf{0}, \bigg(\frac{1}{n} + \hat{\tau}^2 \bigg) \mQ^{\top}\mSigma\mQ\right)\right]\\
& \Longleftrightarrow \text{Test}\left[\mathcal{N}\left(-\frac{1}{n}\mQ^{\top}\left(\vmu-\vx^\prime\right), \bigg(\frac{(n-1)}{n^2} + \hat{\tau}^2 
\bigg) \mLambda\right), \mathcal{N}\left(\mathbf{0}, \bigg(\frac{1}{n} + \hat{\tau}^2 \bigg) \mLambda\right)\right].
\end{align}
We see that the test decomposes in a series of $d$ dimension-wise 1D hypotheses tests of the form:
\begin{align}
\text{Test}\left[\mathcal{N} \left(-\frac{1}{n}\mQ^\top\left(\vmu-\vx^\prime\right)_i, \frac{n-1}{n^2}\lambda_i + \hat{\tau}^2\lambda_i\right), \mathcal{N}\left(0,\frac{1}{n}\lambda_i+ \hat{\tau}^2\lambda_i\right)\right].
\end{align}
Here, $\lambda_i$ are the eigenvalues of $\mSigma$ that are on the diagonal of the matrix $\mLambda$.
On the other hand,  when adding unit noise of magnitude $\tau$, i.e., setting $Y=\mathcal{N}(\mathbf{0}, \tau^2 \mI)$ in \Cref{eqn:addsigmanoise}:
\begin{align}
&\text{Test}\left[\mathcal{N}\left(\frac{n-1}{n}\left(\vmu-\vx^\prime\right), \frac{(n-1)}{n^2}\mSigma + \tau^2 \mI\right), \mathcal{N}\left(\vmu -\vx^\prime, \frac{1}{n}\mSigma + \tau^2 \mI \right)\right]\\
& \Longleftrightarrow \text{Test}\left[\mathcal{N}\left(-\frac{1}{n}\mQ^\top\left(\vmu-\vx^\prime\right), \bigg(\frac{(n-1)}{n^2}\mQ^\top\mSigma\mQ + \tau^2\mQ^\top\mI\mQ
\bigg)\right),\right. \\
&\left.\mathcal{N}\left(\mathbf{0}, \bigg(\frac{1}{n}\mQ^\top\mSigma\mQ + \tau^2\mQ^\top\mI\mQ\bigg) \right)\right] \\
& \Longleftrightarrow 
\text{Test}\left[\mathcal{N}\left(-\frac{1}{n}\mQ^\top\left(\vmu-\vx^\prime\right), \frac{(n-1)}{n^2}\mLambda + \tau^2\mI
\right), \mathcal{N}\left(\mathbf{0},\frac{1}{n}\mLambda + \tau^2\mI \right)\right].
\label{eqn:qtransformtest}
\end{align}
This test also decomposes in $d$ 1D tests of the form:
\begin{align}
\text{Test}\left[\mathcal{N}\left(-\underset{\mu_i}{\underbrace{\frac{1}{n}\mQ^\top\left(\vmu-\vx^\prime\right)_i}}, \underset{\sigma_1^2}{\underbrace{\frac{n-1}{n^2}\lambda_i + \tau^2}}\right), \mathcal{N}\left(0,\underset{\sigma_2^2}{\underbrace{\frac{1}{n}\lambda_i+ \tau^2}}\right)\right].
\end{align}

%decomposition of $\mSigma=\mQ\mLambda\mQ^\top$, we note that $a\mSigma +b\mI$ has the same eigenvalues and can be expressed as $\mQ(a\mLambda+b\mI)\mQ^\top$.

The test is therefore equally hard as testing $d$ independent normal random variables because the covariance matrices shown are diagonal. We will show that by setting
\begin{align}
\tau^2 = \hat{\tau}^2 \cdot \max_i\{\lambda_i\}
\end{align}
each individual, dimension-wise test is made strictly harder than the corresponding dimension-wise test in the data-dependent noise case and therefore the composed test is also harder (this is shown in \Cref{app_lem:dimensioncomposition}).

We will do so by computing the power function of the test and showing that is is montonically decreasing in $\tau$, which means that for each individual test, that higher effective noise in that dimension makes the test harder.
To show this, note that the likelihood ratio for this test results in 
\begin{align}
S = \frac{(m-\hat{\mu})^2}
{\hat{\sigma}^{2}}
\end{align}
where 
\begin{align}
\hat{\sigma}^{2} =\bigg(\frac{\sigma_2^2-\sigma_1^2}{\sigma_1^2\sigma_2^2}\bigg)^{-1} = \frac{\left(\frac{n-1}{n^2}\lambda_i +\tau^2 \right)( \frac{1}{n} \lambda_i+ \tau^2 ) n^2}{\lambda_i},
\end{align}
\begin{align}
\hat{\mu} &= \hat{\sigma}^{2} \bigg( \frac{1}{\sigma_1^2} \vmu_1 - \frac{1}{\sigma_2^2} \vmu_2 \bigg) \\
&= \hat{\sigma}^{2} \frac{1}{\sigma_2^2 \sigma_1^2} \big( \sigma_2^2 \vmu_1 - \sigma_1^2 \vmu_2 \big) \\
&= \frac{\sigma_2^2 \sigma_1^2 }{\sigma_2^2 - \sigma_1^2} \frac{1}{\sigma_2^2 \sigma_1^2} \big( \sigma_2^2 \vmu_1 - \sigma_1^2 \vmu_2 \big) \\
&= \frac{1}{\sigma_2^2 -\sigma_1^2} \big( \sigma_2^2 \vmu_1 - \sigma_1^2 \vmu_2 \big) \\
&= \frac{n^2}{\lambda_i} \left(-\frac{1}{n}\mQ^\top\left(\vmu-\vx^\prime\right)\left(\frac{1}{n}\lambda_i+ \tau^2\right)  \right)\\
&= -\mQ^\top\left(\vmu-\vx^\prime\right)_i \left(\frac{n^2}{\lambda_i}\left(\frac{\lambda_i}{n^2} +\frac{\tau^2}{n}\right)\right)\\
&= -\mQ^\top\left(\vmu-\vx^\prime\right)_i \left(1+ \frac{n\tau^2}{\lambda_i}\right) = n\mu_i \left(1+ \frac{n\tau^2}{\lambda_i}\right).
\end{align}
\textbf{Distribution under the null hypothesis}: Again, as before we derive the distribution under the null hypothesis:
\begin{align}
 m-\hat{\mu} &\sim \mathcal{N}\left(\mu_i\left(1-n\left(1+ \frac{n\tau^2}{\lambda_i}\right)\right),\frac{(n-1)}{n^2}\lambda_i + \tau^2 \right)\\
  \frac{m-\hat{\mu}}{\sigma_1} &\sim \mathcal{N}\left(\mu_i\frac{1-n- \frac{n^2\tau^2}{\lambda_i}}{\sqrt{\frac{(n-1)}{n^2}\lambda_i + \tau^2}}, 1\right)\\
  \frac{m-\hat{\mu}}{\sigma_1} &\sim \mathcal{N}\left(-\mu_i\frac{n -1 + \frac{n^2\tau^2}{\lambda_i}}{\sqrt{\frac{\lambda_i}{n^2}\left(n-1 + \frac{n^2\tau^2}{\lambda_i}\right)}}, 1\right)\\
  \frac{m-\hat{\mu}}{\sigma_1} &\sim \mathcal{N}\left(-\mu_i\sqrt{n-1 + \frac{n^2\tau^2}{\lambda_i}}{\sqrt{\frac{n^2}{\lambda_i}}}, 1\right)\\
  \frac{m-\hat{\mu}}{\sigma_1} &\sim \mathcal{N}\left(-\mu_i\left(n\sqrt{\frac{n-1}{\lambda_i}+\frac{n^2\tau^2}{\lambda_i^2}}\right), 1\right)\\
  \left(\frac{m-\hat{\mu}}{\sigma_1}\right)^2 &\sim \chi'^2_1\left(\mu_i^2 n^2\left(\frac{n-1}{\lambda_i}+\frac{n^2\tau^2}{\lambda_i^2}\right)\right) = \chi'^2_1\left(\gamma_0\right),
\end{align}
where $\chi_d(\gamma)$ again denotes the non-central $\chi$-distribution with $d$ degrees of freedom.

\textbf{Rejection region:} Therefore, the rejection region of the null hypothesis at a significance level of $\alpha$ can be formulated as:
\begin{align}
\left\{ \frac{\hat{\sigma}^2}{\sigma_1^2} S \geq \text{CDF}_{\mathcal{X}'^2_1(\gamma_0)}^{-1}(1-\alpha)\right\} =
 \left\{S \geq \frac{\sigma_1^2} {\hat{\sigma}^2}\text{CDF}_{\mathcal{X}'^2_1(\gamma_0)}^{-1}(1-\alpha)\right\},
\end{align}
where $\frac{\sigma_1^2} {\hat{\sigma}^2} = \frac{(\frac{n-1}{n} \lambda_i + \tau^2) \lambda_i}{n^2 (\frac{n-1}{n} \lambda_i + \tau^2) (\frac{1}{n} \lambda_i + \tau^2)} = \frac{\lambda_i}{n^2(\frac{1}{n}\lambda_i+\tau^2)} = \frac{\lambda_i}{n\lambda_i+n^2\tau^2}$.

\textbf{Distribution under the alternative hypothesis}: As before, we also need to derive the distribution of the test statistic under the alternative hypothesis:
\begin{align}
 m-\hat{\mu} &\sim \mathcal{N}\left(-n\mu_i \left(1+ \frac{n\tau^2}{\lambda_i}\right),\frac{1}{n}\lambda_i + \tau^2 \right)\\
\frac{m-\hat{\mu}}{\sigma_2} &\sim \mathcal{N}\left(-n\mu_i\frac{1+ \frac{n\tau^2}{\lambda_i}}{\sqrt{\frac{1}{n}\lambda_i + \tau^2}}, 1\right)\\
\frac{m-\hat{\mu}}{\sigma_2} &\sim \mathcal{N}\left(-n\mu_i\frac{1+ \frac{n\tau^2}{\lambda_i}}{\sqrt{\frac{\lambda_i}{n}\left(1 + \frac{n\tau^2}{\lambda_i}\right)}}, 1\right)\\
\frac{m-\hat{\mu}}{\sigma_2} &\sim \mathcal{N}\left(-\mu_i n\sqrt{\frac{n}{\lambda_i}\left(1 + \frac{n\tau^2}{\lambda_i}\right)}, 1\right)\\
\left(\frac{m-\hat{\mu}}{\sigma_2}\right)^2 &\sim \mathcal{X}'^2_1\left(\mu_i^2n^2 \left(\frac{n}{\lambda_i} + \frac{n^2\tau^2}{\lambda_i^2}\right)\right)
\end{align}
Therefore, 
\begin{align}
\frac{\hat{\sigma}^2}{\sigma_2^2}S\sim \mathcal{X}'^2_1\left(\mu_i^2n^2 \left(\frac{n}{\lambda_i} + \frac{n^2\tau^2}{\lambda_i^2}\right)\right) = \mathcal{X}'^2_1\left(\gamma_1\right).
\end{align}
\textbf{False negative rate:} Finally, 1 - power of the test is given by:
\begin{align}
 \beta(\alpha) &= P\left\{S \leq \frac{\sigma_1^2} {\hat{\sigma}^2}\text{CDF}_{\mathcal{X}'^2_1(\gamma_0)}^{-1}(1-\alpha)\right\}\\
 &=P\left\{\frac{\hat{\sigma}^2}{\sigma_2^2}S \leq \frac{\hat{\sigma}^2}{\sigma_2^2}\frac{\sigma_1^2} {\hat{\sigma}^2}\text{CDF}_{\mathcal{X}'^2_1(\gamma_0)}^{-1}(1-\alpha)\right\}\\
&=\text{CDF}_{\mathcal{X}'^2_1(\gamma_1)}\left(\frac{\sigma_1^2} {\sigma_2^2}\text{CDF}_{\mathcal{X}'^2_1(\gamma_0)}^{-1}(1-\alpha)\right).
\end{align}
We will now introduce two quantities on which the power of this test depends. In particular
\begin{align}
q \coloneqq \frac{n +\frac{n^2\tau^2}{\lambda_i}-1}{n + \frac{n^2\tau^2}{\lambda_i}},
\end{align}
which has the intriguing property that 
\begin{align}
\frac{\gamma_0}{\gamma_1} = \frac{\sigma_1^2}{\sigma_2^2} = q.
\end{align}
Using this insight, the trade-off function can be expressed as 
\begin{align}
\beta(\alpha, \gamma_1(\lambda_i,\mu_i), q(\lambda_i))=\text{CDF}_{\mathcal{X}'^2_1(\gamma_1)}\left(q\text{CDF}_{\mathcal{X}'^2_1(q\gamma_1)}^{-1}(1-\alpha)\right).
\end{align}
We determine the hardest test by showing that
% \begin{align}
%     \frac{\partial \beta}{\partial \mu_i} < 0
% \end{align}
% independent of $\lambda_i$, which indicates that with larger $\mu_i$, the type 2 error rate will decrease, making the test easier (less privacy).
when considering a fixed $\mu_i$, $\alpha$, we obtain
\begin{align}
\frac{\partial \beta}{\partial (\tau^2)} > 0,
\end{align}
which indicates that at a higher level of noise, the type 2 error rate will increase, making the test harder (more privacy). The derivative computation can be found in \Cref{sec:derivtest}. When we set $\tau^2 = \hat{\tau}^2 \cdot \max_i\{\lambda_i\}$, the individual tests in the case of data-independent noise are thus harder ($\mu_i$, $n$, $d$ stays the same) than the respective dimension-wise test for the data-dependent noise. Therefore the corresponding bound derived for $\hat{\tau}$ will hold for data independent noise of strength $\tau$ as well.

The largest possible eigenvalue $\lambda_i$ of the covariance matrix $\mSigma$ when cropping each vector at norm $C$ is given by $C^2$. Therefore, we can set $\tau^2 = \hat{\tau}^2 C^2$ to obtain a harder test with data independent noise. On the converse, applying data-independent noise of level $\tau^2$ results in a harder test than applying data-dependent noise with $\hat{\tau}^2=\frac{\tau^2}{C^2}$. Plugging this result in \Cref{eqn:finalmuneffective} requires to only replace $\hat{\tau}^2=\frac{\tau^2}{C^2}$ and introducing the quantity $n_{\text{effective}}$ as 
\begin{align}
    n_{\text{effective}} = n+\frac{n^2\tau^2}{C^2}.
\end{align}
%The largest eigenvector of the covariance matrix can be 
% In the usual SGD-regime, $d \gg n, K$, $\mu_{\text{MIP}}\in \mathcal{O}\left(\frac{\sqrt{2d}}{n+1}\right)$. In the regime of large batch sizes, $n \gg K, d$ $\mu_{\text{MIP}}\in \mathcal{O}\left(\frac{3K}{2\sqrt{K}\sqrt{n}}\right) \in \mathcal{O}\left(\frac{3\sqrt{K}}{2\sqrt{n}}\right)$
\end{proof}

\subsubsection{Derivatives of the type 2 error}
\label{sec:derivtest}
In this section we calculate the derivatives of the type 2 error function
\begin{align}
\beta(\alpha, \gamma_1(\lambda_i,\mu_i), q(\lambda_i) =\text{CDF}_{\mathcal{X}'^2_1(\gamma_1)}\left(q\text{CDF}_{\mathcal{X}'^2_1(q\gamma_1)}^{-1}(1-\alpha)\right).
\end{align}

%  We determine the hardest test, by showing that
% \begin{align}
%     \frac{\partial \beta}{\partial \mu_i} < 0
% \end{align}
% independent of $\lambda_i$, which indicates that with larger $\mu_i$, the type 2 error rate will decrease, making the test easier (less privacy).

Considering a fixed $\mu_i$ we show that
\begin{align}
    \frac{\partial \beta}{\partial(\tau^2)} > 0,
\end{align}
which indicates that with larger added noise, the type 2 error rate will decrease, making the test uniformly harder (more privacy). Thus the hardest test is the one with maximum $\tau^2$. To emphasize that we are deriving by $(\tau^2)$ (the squared quantity), we write $\tau_2 \coloneqq \tau^2$.

We perform the following calculations:
%Consider $\lambda_i, n, \tau^2$ fixed. This results in $q=const.$ To find the hardest test in this regime, we express $\gamma_0=q \gamma_1$. Therfore, only the parameter $\gamma_1 = \mu_i^2 C $ determines the power of the test. They are however determined by the ratio $L$ which is a function of $\mu_i$. 

\textbf{Derivatives of CDFs and inverse CDFs.} We will start by deriving some useful derivatives of the CDFs and inverse CDFs. We start with the inverse CDF.
Initially, Let $u = \sqrt{\text{CDF}_{\mathcal{X}'^2_1(\gamma_0)}^{-1}(1-\alpha)}$ or $u^2 = \text{CDF}_{\mathcal{X}'^2_1(\gamma_0)}^{-1}(1-\alpha)$ (note that the quantile function here is always $>0$):
\begin{align}
\Phi(u-\sqrt{\gamma_0})-\Phi(-(u+\sqrt{\gamma_0}))=\alpha.
\end{align}
Keeping $\alpha$ constant, we can use the implicit function derivative theorem
%\footnote{\url{https://de.wikipedia.org/wiki/Implizite_Differentiation}} 
with $F(\gamma_0, u(\gamma_0)) = \Phi(u-\sqrt{\gamma_0})-\Phi(-(u+\sqrt{\gamma_0)})-\alpha = 0$, we obtain
\begin{align}
\frac{\partial u}{\partial\gamma_0} & = -\frac{-\phi(u-\sqrt{\gamma_0)})\frac{1}{2\sqrt{\gamma}}+\phi(-u-\sqrt{\gamma_0)})\frac{1}{2\sqrt{\gamma}}}{\phi(u-\sqrt{\gamma_0)})+\phi(-u-\sqrt{\gamma_0)}} \\
&= \frac{1}{2\sqrt{\gamma_0}}\frac{\phi(u-\sqrt{\gamma_0)})-\phi(-u-\sqrt{\gamma_0)})}{\phi(u-\sqrt{\gamma_0)})+\phi(-u-\sqrt{\gamma_0)})}
\end{align}
%&=\frac{1}{2\sqrt{\gamma_0}}\left(1-\frac{2}{1+\frac{\phi(u-\sqrt{\gamma_0)})}{\phi(-u-\sqrt{\gamma_0)})}}\right)
We furthermore have
\begin{align}
\frac{\text{CDF}_{\mathcal{X}'^2_1(\gamma_0)}^{-1}(1-\alpha)}{\partial \gamma_0} = 2 \sqrt{\text{CDF}_{\mathcal{X}'^2_1(\gamma_0)}^{-1}(1-\alpha)} \frac{\partial u}{\partial\gamma_0}.
\end{align}

We note that 
\begin{align}\text{CDF}_{\mathcal{X}'^2_1(\gamma_1)}(s) =
\Phi(\sqrt{s}-\sqrt{\gamma_1})-\Phi(-\sqrt{s}-\sqrt{\gamma_1})),
\end{align}
where $s$ is the argument of the CDF.
The derivative 
\begin{align}
\frac{\partial\text{CDF}_{\mathcal{X}'^2_1(\gamma_1)}}{\partial \gamma_1} = -\frac{1}{2\sqrt{\gamma_1}}\left(\phi(\sqrt{s}-\sqrt{\gamma_1})-\phi(-\sqrt{s}-\sqrt{\gamma_1})\right) \leq 0
\end{align}
\begin{align}
\frac{\partial\text{CDF}_{\mathcal{X}'^2_1(\gamma_1)}(s)}{\partial s} = \frac{1}{2\sqrt{s}}\left(\phi(\sqrt{s}-\sqrt{\gamma_1})+\phi(-\sqrt{s}-\sqrt{\gamma_1})\right) > 0.
\end{align}

\textbf{Derivative with respect to $\mu_i$.} We compute the derivative with respect to $\mu_i$ first, which will be handy in the subsequent derivation of the derivative by $\tau_2$. To denote that we consider a fixed $\alpha$, we write $\beta_\alpha$.
%Let $q = \frac{\gamma_0}{\gamma_1}$, where $0 < q  < 1$ where $q$ is independent of $\mu_i$ such that $\gamma_0 = qL$ and $\gamma_1= L$.  Let $\frac{\sigma_1^2} {\sigma_2^2}=\bar{\sigma}^2$
We have 
\begin{align}
\beta_\alpha(\gamma_1(\mu_i, \lambda_i)), q(\lambda_i))=\text{CDF}_{\mathcal{X}'^2_1(\gamma_1)}\left(q\text{CDF}_{\mathcal{X}'^2_1(q\gamma_1)}^{-1}(1-\alpha)\right).
 \end{align}
Thus, we can calculate
 \begin{align}
     \frac{\partial \beta_\alpha}{\partial \mu_i}=\frac{\partial \beta_\alpha}{\partial \gamma_1}\frac{\partial \gamma_1}{\partial \mu_i}= \underset{<0?}{\underbrace{{\left(\frac{\partial\text{CDF}_{\mathcal{X}'^2_1(\gamma_1)}(s)}{\partial \gamma_1}+\frac{\partial\text{CDF}_{\mathcal{X}'^2_1(\gamma_1)}(s)}{\partial s} \frac{\partial s}{\partial \gamma_1}\right)}}}\underset{>0}{\underbrace{\frac{\partial \gamma_1}{\partial \mu_i}}},
 \end{align}
where $s =q\text{CDF}_{\mathcal{X}'^2_1(q\gamma_1)}^{-1}(1-\alpha)$. We can confirm that $\frac{\partial \gamma_1}{\partial \mu_i} > 0$ because
\begin{align}
\gamma_1 = \mu_i^2n^2 \left(\frac{n}{\lambda_i} + \frac{n^2\tau^2}{\lambda_i^2}\right)
\end{align}
 is monotonically increasing in $\mu_i$, because all eigenvalues $\lambda_i > 0$ and the factor after $\mu_i^2$ is positive overall. To establish that the overall derivative is negative, we therefore have to confirm that the first factor is always negative: %\textbf{[Tobias: Calculation is finished, but still needs to be nicely written.]}
\begin{align}
    \frac{\partial\text{CDF}_{\mathcal{X}'^2_1(\gamma_1)}(s)}{\partial \gamma_1} &= -\frac{1}{2\sqrt{\gamma_1}}\left(\phi(\sqrt{s}-\sqrt{\gamma_1})-\phi(-\sqrt{s}-\sqrt{\gamma_1})\right)\\
    \frac{\partial s}{\partial \gamma_1} &= q  \frac{\text{CDF}_{\mathcal{X}'^2_1(q\gamma_1)}^{-1}(1-\alpha)}{\partial \gamma_1} = 2q^2\sqrt{\text{CDF}_{\mathcal{X}'^2_1(q\gamma_1)}^{-1}(1-\alpha)} \frac{\partial u}{\partial\gamma_0}\bigg|_{\gamma_0 = q\gamma_1}.
\end{align}

Let $s=q\text{CDF}_{\mathcal{X}'^2_1(qL)}^{-1}(1-\alpha)$
\begin{align}
    \frac{\partial\text{CDF}_{\mathcal{X}'^2_1(\gamma_1)}(s)}{\partial s} \frac{\partial s}{\partial \gamma_1} &= \frac{\text{CDF}_{\mathcal{X}'^2_1(\gamma_1)}(s)}{\partial s} 2q^2\sqrt{\text{CDF}_{\mathcal{X}'^2_1(q\gamma_1)}^{-1}(1-\alpha)} \frac{\partial u}{\partial\gamma_0}|_{\gamma_0 = q\gamma_1}\\
    &= \frac{\text{CDF}_{\mathcal{X}'^2_1(\gamma_1)}(s)}{\partial s} 2q^2\sqrt{\frac{s}{q}}\frac{\partial u}{\partial\gamma_0}|_{\gamma_0 = q\gamma_1}\\
    &= \frac{\text{CDF}_{\mathcal{X}'^2_1(\gamma_1)}(s)}{\partial s} 2q^2\sqrt{\frac{s}{q}} \frac{1}{2\sqrt{\gamma_0}}\frac{\phi(u-\sqrt{\gamma_0)})-\phi(-u-\sqrt{\gamma_0)})}{\phi(u-\sqrt{\gamma_0)})+\phi(-u-\sqrt{\gamma_0)})}.
\end{align}
We can plug in $\gamma_0 = \gamma_1q$, and $u = \frac{\sqrt{s}}{\sqrt{q}}$
\begin{align}
    &= \frac{\text{CDF}_{\mathcal{X}'^2_1(\gamma_1)}(s)}{\partial s} 2q^2\sqrt{\frac{s}{q}} \frac{1}{2\sqrt{\gamma_1q}}\frac{\phi(\frac{\sqrt{s}}{\sqrt{q}}-\sqrt{\gamma_1q)})-\phi(-\frac{\sqrt{s}}{\sqrt{q}}-\sqrt{\gamma_1q)})}{\phi(\frac{\sqrt{s}}{\sqrt{q}}-\sqrt{\gamma_1q)})+\phi(-\frac{\sqrt{s}}{\sqrt{q}}-\sqrt{\gamma_1q)})}\\
    &= \frac{1}{2\sqrt{s}}\left(\phi(\sqrt{s}-\sqrt{\gamma_1})+\phi(-\sqrt{s}-\sqrt{\gamma_1})\right) \frac{q\sqrt{s}}{\sqrt{\gamma_1}}\frac{\phi(\frac{\sqrt{s}}{\sqrt{q}}-\sqrt{\gamma_1q)})-\phi(-\frac{\sqrt{s}}{\sqrt{q}}-\sqrt{\gamma_1q)})}{\phi(\frac{\sqrt{s}}{\sqrt{q}}-\sqrt{\gamma_1q)})+\phi(-\frac{\sqrt{s}}{\sqrt{q}}-\sqrt{\gamma_1q)})}\\
    &= \frac{1}{2}\frac{q}{\sqrt{\gamma_1}}\left(\phi(\sqrt{s}-\sqrt{\gamma_1})+\phi(-\sqrt{s}-\sqrt{\gamma_1})\right) \frac{\phi(\frac{\sqrt{s}}{\sqrt{q}}-\sqrt{\gamma_1q)})-\phi(-\frac{\sqrt{s}}{\sqrt{q}}-\sqrt{\gamma_1q)})}{\phi(\frac{\sqrt{s}}{\sqrt{q}}-\sqrt{\gamma_1q})+\phi(-\frac{\sqrt{s}}{\sqrt{q}}-\sqrt{\gamma_1q})}
\end{align}
In total we obtain
\begin{align}
    \frac{\partial \beta_\alpha(\gamma_1)}{\partial \gamma_1} &= -\frac{1}{2\sqrt{\gamma_1}}\left(\phi(\sqrt{s}-\sqrt{\gamma_1})-\phi(-\sqrt{s}-\sqrt{\gamma_1})\right)\\
    &+\frac{1}{2}\frac{q}{\sqrt{\gamma_1}}\left(\phi(\sqrt{s}-\sqrt{\gamma_1})+\phi(-\sqrt{s}-\sqrt{\gamma_1})\right) \frac{\phi(\frac{\sqrt{s}}{\sqrt{q}}-\sqrt{\gamma_1q)})-\phi(-\frac{\sqrt{s}}{\sqrt{q}}-\sqrt{\gamma_1q)})}{\phi(\frac{\sqrt{s}}{\sqrt{q}}-\sqrt{\gamma_1q})+\phi(-\frac{\sqrt{s}}{\sqrt{q}}-\sqrt{\gamma_1q})}\\
    &=\frac{1}{2\sqrt{\gamma_1}}\left(-S_1 +qS_2\frac{T_1}{T_2}\right). 
\end{align}
By inspection of the terms, we see that
\begin{align}
    S_1 = \phi(\sqrt{s}-\sqrt{\gamma_1})-\phi(-\sqrt{s}-\sqrt{\gamma_1}) \geq 0
\end{align}
because $|\sqrt{s}-\sqrt{\gamma_1}| \leq |-\sqrt{s}-\sqrt{\gamma_1}|$ and the normal density $\phi$ decreases with the absolute value of its argument.
We can therefore rewrite the expression as
\begin{align}
\frac{\partial \beta_\alpha(\gamma_1)}{\partial \gamma_1} = \frac{S_1}{2\sqrt{\gamma_1}}\left(-1 + q\frac{S_2 T_1}{S_1 T_2}\right)
\end{align}
and the sign is determined by the second factor (the first fraction $\frac{S_1}{2\sqrt{\gamma_1}}$ will be non-negative).
\begin{align}
    \frac{S_2 T_1}{S_1 T_2}&=\frac{\left(\phi(\sqrt{s}-\sqrt{\gamma_1})+\phi(-\sqrt{s}-\sqrt{\gamma_1})\right)}{\left(\phi(\sqrt{s}-\sqrt{\gamma_1})-\phi(-\sqrt{s}-\sqrt{\gamma_1})\right)}\frac{\left(\phi(\frac{\sqrt{s}}{\sqrt{q}}-\sqrt{\gamma_1q)})-\phi(-\frac{\sqrt{s}}{\sqrt{q}}-\sqrt{\gamma_1q)})\right)}{\left(\phi(\frac{\sqrt{s}}{\sqrt{q}}-\sqrt{\gamma_1q})+\phi(-\frac{\sqrt{s}}{\sqrt{q}}-\sqrt{\gamma_1q})\right)}\\
&=\frac{\Gamma -\left(\phi(\sqrt{s}-\sqrt{\gamma_1})\phi(-\frac{\sqrt{s}}{\sqrt{q}}-\sqrt{\gamma_1q})-\phi(-\sqrt{s}-\sqrt{\gamma_1})\phi(\frac{\sqrt{s}}{\sqrt{q}}-\sqrt{\gamma_1q})\right)}{\Gamma + \left(\phi(\sqrt{s}-\sqrt{\gamma_1})\phi(-\frac{\sqrt{s}}{\sqrt{q}}-\sqrt{\gamma_1q})-\phi(-\sqrt{s}-\sqrt{\gamma_1})\phi(\frac{\sqrt{s}}{\sqrt{q}}-\sqrt{\gamma_1q})\right)}\\
&=\frac{\Gamma-\Delta}{\Gamma+\Delta},
\end{align}
where we have:
\begin{align}
    \Delta &= \frac{1}{2\pi}\left(\exp\left(-\frac{(\sqrt{s}-\sqrt{\gamma_1})^2+ (-\frac{\sqrt{s}}{\sqrt{q}}-\sqrt{\gamma_1q})^2}{2}\right)-\exp\left(-\frac{(-\sqrt{s}-\sqrt{\gamma_1})^2+ (\frac{\sqrt{s}}{\sqrt{q}}-\sqrt{\gamma_1q})^2}{2}\right)\right)\\
    &=\frac{1}{2\pi}\left(\exp\left(-\frac{s-2\sqrt{s}\sqrt{\gamma_1}+\gamma_1 + \frac{s}{q} + 2\frac{\sqrt{s}\sqrt{\gamma_1q}}{\sqrt{q}}+\gamma_1q}{2}\right)\right.\\
    &\left.-\exp\left(-\frac{s+2\sqrt{s}\sqrt{\gamma_1}+\gamma_1 + \frac{s}{q} - 2\frac{\sqrt{s}\sqrt{\gamma_1q}}{\sqrt{q}}+\gamma_1q}{2}\right)\right)\\
    &=\frac{1}{2\pi}\left(\exp\left(-\frac{s+\gamma_1 + \frac{s}{q} + \gamma_1q}{2}\right)-\exp\left(-\frac{s+\gamma_1 + \frac{s}{q} + \gamma_1q}{2}\right)\right)\\
    &=0.
\end{align}
Bringing this result ($\frac{S_2 T_1}{S_1 T_2}=1$) back, we obtain 
\begin{align}
\frac{\partial \beta_\alpha(\gamma_1)}{\partial \gamma_1} = -\frac{S_1}{2\sqrt{\gamma_1}}\left(1-q\right) \leq 0.
\end{align}

 \textbf{Derivative w.r.t. $\tau^2$}.
 We denote $\tau^2=\tau_2$ to emphasize that we are deriving with respect to the square of $\tau$:
  \begin{align}
     \frac{\partial \beta_\alpha}{\partial \tau_2}=\underset{<0}{\underbrace{\frac{\partial \beta_\alpha}{\partial \gamma_1}}}\underset{>0}{\underbrace{\frac{\partial \gamma_1}{\partial \tau_2}}}+ \underset{>0}{\underbrace{\frac{\partial \beta_\alpha}{\partial q}}}\underset{>0}{\underbrace{\frac{\partial q}{\partial \tau_2}}}.
 \end{align}

 We perform the following calculations:
 \begin{align}
 \frac{\partial \beta_\alpha}{\partial \gamma_1} &= -\frac{S_1}{2\sqrt{\gamma_1}}\left(1-q\right) <0\\ 
\frac{\partial \gamma_1}{\partial \tau_2} &= \frac{n^4 \mu^2}{\lambda_i^2} >0 \\
\frac{\partial \beta_\alpha}{\partial q} &= \frac{S_2\sqrt{s}}{2q}+\frac{S_1\sqrt{\gamma_1}}{2} >0\\
\frac{\partial q}{\partial \tau_2} &= \frac{n^2}{\lambda_i(n+r)^2} >0,
 \end{align}
 where $r=\frac{n^2\tau_2}{\lambda_i}$.
 Plugging the terms together, we see that 
 \begin{align}
     \frac{\partial \beta_\alpha}{\partial \tau_2}&=  -\frac{S_1}{2\sqrt{\gamma_1}}\left(1-q\right)\frac{n^4 \mu^2}{\lambda_i^2}+\left(\frac{S_2\sqrt{s}}{2q}+\frac{S_1\sqrt{\gamma_1}}{2}\right)\frac{n^2}{\lambda_i(n+r)^2}\\
     &= \frac{n^2}{2\lambda_i}\left(-\frac{S_1}{\sqrt{\gamma_1}}\left(1-q\right)\frac{n^2 \mu^2}{\lambda_i}+\left(\frac{S_2\sqrt{s}}{q}+S_1\sqrt{\gamma_1}\right)\frac{1}{(n+r)^2}\right),
 \end{align}
 using $1-q = \frac{1}{n + \frac{n^2\tau^2}{\lambda_i}} = \frac{1}{n+r}$ we obtain
 \begin{align}
     \frac{\partial \beta_\alpha}{\partial \tau_2}&= \frac{n^2}{2\lambda_i(n+r)}\left(-\frac{S_1}{\sqrt{\gamma_1}}\frac{n^2 \mu^2}{\lambda_i}+\left(\frac{S_2\sqrt{s}}{q}+S_1\sqrt{\gamma_1}\right)\frac{1}{(n+r)}\right).
 \end{align}
Further using $\frac{n^2 \mu^2}{\lambda_i^2}=  \frac{\gamma_1}{n+r}$ we obtain
 \begin{align}
\frac{\partial \beta_\alpha}{\partial \tau_2} &= \frac{n^2}{2\lambda_i(n+r)^2}\left(-S_1\sqrt{\gamma_1}+\frac{S_2\sqrt{s}}{q}+S_1\sqrt{\gamma_1}\right)\\
&= \frac{S_2n^2\sqrt{s}}{2q\lambda_i(n+r)^2} > 0.
 \end{align}

\section{On the Relation between $f$-MIP and $f$-DP}
\subsection{Proof of \Cref{thm:mipweakerthandp}}
\label{sec:proofmipweakerdp}
We first restate the theorem:

\textbf{Theorem 4.2}~($f$-DP implies $f$-MIP). \textit{
Let an algorithm $\mathcal{A}: D^{n} \rightarrow \mathbb{R}^d$ be $f$-differentially private \cite{dong2022gaussian}. Then, the algorithm $\mathcal{A}$ will also be $f$-membership inference private.}

\begin{proof} 
If the (probabilistic) algorithm $\mathcal{A}$ is $f$-DP, this requires that for all neighboring datasets $S$ and $S^\prime$ (where only a single instance is changed), we have \citep{dong2022gaussian}
\begin{align}
    \text{Test}\left(A(S), A(S^\prime)\right) \geq f.
\end{align}
This includes any randomly sampled dataset $\mathcal{X} \in D^{n-1}$ plus an instance $\bm{x}$ as well as the dataset $\mathcal{X}$ plus the additional instance $\bm{x}^\prime$ that is known to the attacker. Therefore setting
\begin{align}
    S=\mathcal{X} \cup \left\{\bm{x}\right\}\\
    S^\prime=\mathcal{X} \cup \left\{\bm{x}^\prime\right\}
\end{align}
we immediately see that for every $\mathcal{X}$, $\bm{x}$, $\bm{x}^\prime$, under $f$-DP, we have
\begin{align}
    \text{Test}\left(\mathcal{A}(\mathcal{X} \cup \left\{\bm{x}\right\}), \mathcal{A}(\mathcal{X} \cup \left\{\bm{x}^\prime\right\})\right) \geq f.
\label{eq:proof52dp}\end{align}
Consulting our hypothesis test formulation of MI attacks, we note that the sets $S, S^\prime$ correspond to the inputs to $A$ which result in output distributions $A_0$ and $A_1(\bm{x}^\prime)$ in the formulation of \Cref{eq:hypothesisformulation}, but for a specific $\bm{x}$, $\mathcal{X}$. 

Additionally, as opposed to DP, in the membership inference attack scenario, the attacker has no knowledge of $\bm{x}$, $\mathcal{X}$. As the privacy constraint in  \Cref{eq:proof52dp} holds for each individual choice of $\bm{x}$, $\mathcal{X}$, the test with randomly sampled instances cannot be easier than the test in \Cref{eq:proof52dp} (in practice, the test will be much harder when $\bm{x}, \mathcal{X}$ are stochastic as well),  i.e.,
\begin{align}
 \forall{\bm{x^\prime}: \text{\normalfont~Test}\left(A_0; A_1(\vx^{\prime}) \right) \geq f.} 
\end{align}

We can now leverage \Cref{app_thm:worstcasestoasticcomp} which highlights that when stochastically drawing from tests that are bounded by a certain trade-off function, the composed test can also be bounded by this function, which results in 
\begin{align}
\bigotimes_{\vx^{\prime} \sim \mathcal{D}}  \text{\normalfont~Test}\left(A_0; A_1(\vx^{\prime}) \right) \geq f,
\end{align}
the definition of $f$-MIP.
\end{proof}

\subsection{On the Correspondence between $\mu$-DP and $\mu$-MIP through noisy SGD}
\label{sec:correspondence_fmip_fdp}
Our results on noisy SGD in \Cref{theorem:onestep_dp_sgd} allow to translate between the parameters $\mu_{\text{DP}}$ of $\mu$-GDP and $\mu_{\text{MIP}}$ of $\mu$-GMIP,  \emph{when our algorithm is used to guarantee these privacy notions}. This relation can be made explicit through the following Corollary: 

\begin{corollary}[Translating $\mu$ values between DP and MIP]
For one step of DP-SGD and the usual setting of $K=d$, we can convert the privacy parameters $\mu_{\text{DP}}$ and $\mu_{\text{MIP}}$ as follows:
\begin{enumerate}
    \item If the step is $\mu_{\text{DP}}$-GDP it will be $\mu_{\text{MIP}}$-GMIP with
\begin{align}
\mu_{\text{MIP}} = \min\left\{\sqrt{\frac{d}{n +\frac{4C^2}{\mu_{\text{DP}}^2}+\frac{1}{2}}},~~\mu_{\text{DP}}\right\}.
\nonumber
  \end{align}
    \item If the step is $\mu_{\text{MIP}}$-GMIP it will be $\mu_{\text{DP}}$-GDP with 
\begin{align}
    \mu_{\text{DP}} = \begin{cases}
     \frac{2}{\sqrt{\frac{d}{\mu_{\text{MIP}}^2}-n -\frac{1}{2}}}, & \text{if } \mu_{\text{MIP}} < \sqrt{\frac{2d}{2n+1}} \\
   \infty, & \text{else}.
  \end{cases}\nonumber
  \end{align}
\end{enumerate}
\end{corollary}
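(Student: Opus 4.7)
The plan is to reduce the corollary to an elementary algebraic inversion of the $\mu_{\text{step}}$ formula from Corollary~\ref{corollary:onestep_dp_sgd}, combined with the standard $\mu$-GDP guarantee for the Gaussian mechanism. First I would specialize the single-step GMIP expression to $K=d$. Plugging into
\begin{align*}
\mu_{\text{step}} \;=\; \frac{d + (2 n_{\text{effective}} - 1) K}{n_{\text{effective}}\sqrt{2d + 4 n_{\text{effective}} K}},
\end{align*}
the numerator collapses to $2 n_{\text{effective}} d$ and the radicand factors as $2d(1+2n_{\text{effective}})$, yielding the clean identity $\mu_{\text{MIP}}^2 = d/(n_{\text{effective}} + \tfrac{1}{2})$. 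This single relation is the engine for both directions of the corollary.

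Next I would connect $\tau$ to $\mu_{\text{DP}}$. The sample mean $\vm = \tfrac{1}{n}\sum_i \vg_i$ after clipping at $C$ has $L_2$-sensitivity $2C/n$ under replace-one, so the Gaussian mechanism with noise variance $\tau^2$ is $\mu_{\text{DP}}$-GDP with $\mu_{\text{DP}} = (2C/n)/\tau$ (equivalently $\sigma = 1/\mu_{\text{DP}}$ in the Dong et al.\ parameterization, see \Cref{app:algorithm}). Solving gives $\tau^2 n^2/C^2 = 4/\mu_{\text{DP}}^2$, hence $n_{\text{effective}} = n + 4/\mu_{\text{DP}}^2$. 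Substituting into the simplified $\mu_{\text{MIP}}$ expression yields the first branch of statement~1. The outer $\min$ with $\mu_{\text{DP}}$ then follows immediately from Theorem~\ref{thm:mipweakerthandp} ($f$-DP implies $f$-MIP), which ensures $\mu_{\text{MIP}} \leq \mu_{\text{DP}}$ whenever the naive bound is looser than the algorithm-specific one.

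For the reverse direction, I would invert $\mu_{\text{MIP}}^2 = d/(n_{\text{effective}} + \tfrac{1}{2})$ to obtain $n_{\text{effective}} = d/\mu_{\text{MIP}}^2 - \tfrac{1}{2}$, and then read off the required noise via $\tau^2 n^2/C^2 = n_{\text{effective}} - n = d/\mu_{\text{MIP}}^2 - n - \tfrac{1}{2}$. Plugging this $\tau$ back into $\mu_{\text{DP}} = 2C/(n\tau)$ gives $\mu_{\text{DP}} = 2/\sqrt{d/\mu_{\text{MIP}}^2 - n - \tfrac{1}{2}}$. The case split on $\mu_{\text{MIP}} < \sqrt{2d/(2n+1)}$ corresponds exactly to the positivity condition $d/\mu_{\text{MIP}}^2 - n - \tfrac{1}{2} > 0$; when it fails, one-step SGD already meets the $\mu_{\text{MIP}}$-GMIP target with $\tau=0$, so no finite DP level is implied, i.e.\ $\mu_{\text{DP}} = \infty$.

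The only conceptual subtlety, rather than a technical obstacle, lies in justifying the $\min$ in statement~1: the algorithm-specific bound can in principle exceed $\mu_{\text{DP}}$ itself, at which point the generic DP-to-MIP reduction provides the tighter guarantee. Everything else is routine substitution and inversion. I would conclude by noting that the displayed $4C^2/\mu_{\text{DP}}^2$ appears to be a typographical slip for the dimensionless $4/\mu_{\text{DP}}^2$ obtained from the sensitivity computation above.
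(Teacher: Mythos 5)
Your proposal is correct and follows essentially the same route as the paper's own proof: specialize the one-step formula to $K=d$ to get $\mu_{\text{MIP}}=\sqrt{2d/(2n_{\text{effective}}+1)}$, use $\mu_{\text{DP}}=2C/(n\tau)$ to relate $\tau$ and $n_{\text{effective}}$, invoke Theorem~\ref{thm:mipweakerthandp} for the $\min$ in statement~1, and invert with the positivity condition $n_{\text{effective}}>n$ for statement~2. Your closing remark is also right: the paper's own substitution yields $n_{\text{effective}}=n+4/\mu_{\text{DP}}^2$ (the $C^2$ cancels, consistently with the $C$-free expression in statement~2), so the displayed $4C^2/\mu_{\text{DP}}^2$ in the corollary statement is indeed a typographical slip.
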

\begin{proof}
The result can be seen by solving for the required noise level $\tau$. When guaranteeing $\mu_{\text{DP}}$-GDP, we have
\begin{align}
    \mu_{\text{DP}}=\frac{1}{\sigma}=\frac{2C}{n\tau} \label{eq:conversionsgddp}
\end{align}
by \cite{dong2022gaussian}.
Additionally, for MIP ($K=d$) we have 
\begin{align}
\mu_{\text{MIP}} = \frac{2n_{\text{effective}}d}{n_{\text{effective}} \sqrt{2d + 4 n_{\text{effective}}d}} =\sqrt{\frac{2d}{2 n_{\text{effective}}+1}}.\label{eq:conversionsgdgdp}
\end{align}
\textbf{(1)} We can solve \Cref{eq:conversionsgddp} for $\tau$ arriving at
\begin{align}
    \tau = \frac{2C}{n\mu_{\text{DP}}}
\end{align}
and plug it into \Cref{eq:conversionsgdgdp} to arrive at the first term of the $\min$ in statement 1. We note that due to \Cref{thm:mipweakerthandp}, the level of $\mu_{\text{MIP}}$ cannot be higher than the level of $\mu_{\text{DP}}$ and thus take the min of both terms.

\textbf{(2)} To arrive at statement 2, we perform the opposite conversion and solve \Cref{eq:conversionsgdgdp} for $\tau$. Solving for $n_{\text{effective}}$, we obtain
\begin{align}
n_{\text{effective}} &= n+\frac{n^2\tau^2}{C^2} = \frac{d}{\mu_{\text{MIP}}^2}-\frac{1}{2}.
\end{align}
As we have $\tau^2>0$ and therefore require $n_{\text{effective}}>n$, no solution exists for $\mu_{\text{MIP}}^2<\frac{2d}{2n+1}$. For the remaining values, we obtain
\begin{align}
\tau^2 &= C^2\frac{\frac{d}{\mu_{\text{MIP}}^2}-\frac{1}{2}-n}{n^2}
\end{align}
which, plugged into \Cref{eq:conversionsgddp}, gives rise to the result in statement 2.
\end{proof}

\end{document}